\documentclass[a4paper,11pt]{article}
\usepackage[
    bottom=4.2cm,
    top=4cm,
    left=3.6cm,
    right=3.6cm]{geometry}
\usepackage{graphicx}
\usepackage{amssymb}
\usepackage{amsmath}
\usepackage{mathtools}
\usepackage{microtype}
\usepackage{amsthm}
\usepackage[mathscr]{euscript}
\usepackage{csquotes}
\usepackage[ruled]{algorithm2e}
\usepackage[T1]{fontenc}
\usepackage{makecell}
\usepackage{dsfont}
\usepackage{hyphenat}
\usepackage{interval}
\usepackage{hyperref}
\usepackage[title]{appendix}
\usepackage{authblk}
\usepackage[table]{xcolor}
\hypersetup{
    colorlinks,
    linkcolor={red!50!black},
    citecolor={blue!50!black},
    urlcolor={blue!80!black}
}
\usepackage[english]{babel}

\usepackage[style=apa, backend=biber]{biblatex}
\newtheorem{theorem}{Theorem}

\newtheorem{lemma}{Lemma}
\newtheorem{assumption}{Assumption}
\newtheorem{proposition}{Proposition}
\newtheorem{corollary}{Corollary}

\DeclareMathOperator*{\argmax}{argmax}
\DeclareMathOperator*{\argmin}{argmin}

\DeclareMathOperator{\Span}{span}

\title{Parallel gradient boosting for flexible estimation of conditional distributions}
\author[1,2,3]{Rémy Chapelle}
\author[2]{Nicolas Vayatis}
\author[1]{Bruno Falissard}
\author[1]{Mohammed Sedki}
\affil[1]{\normalsize Université Paris-Saclay, UVSQ, Inserm, CESP, 94807, Villejuif, France}
\affil[2]{Université Paris-Saclay, Université Paris Cité, ENS Paris-Saclay, CNRS, SSA, Inserm, Centre Borelli, 91190, Gif-sur-Yvette, France}
\affil[3]{\'Ecole du Val-de-Grâce, Service de Santé des Armées, 75005, Paris, France}

\date{}

\begin{document}

\maketitle

\begin{abstract}
Boosting is one of the most successful learning techniques for standard classification and regression tasks. Its extension to multi-output prediction problems has found an increasing number of applications in recent years. Among them is the prediction of entire conditional distributions rather than single functionals, which can often be framed as a multi-output regression problem, for example multiple quantile regression. Addressing such problems with classical implementations of boosting is computationally challenging, because usually one base model is trained for each target at every iteration. More efficient variants of boosting have been proposed to speed up training, but they tend to be tied to specific loss functions and classes of base learners, usually decision trees. In this work, we study a modification of the gradient boosting algorithm, which we call parallel gradient boosting, designed to circumvent all these limitations. The core idea is to use a common descent direction for all training observations. By doing so, only one base model is needed at each iteration, regardless of the number of targets, which allows for considerable performance gains. We establish sufficient conditions for the convergence of the algorithm, whose practical use is introduced via the multiple quantile regression setting. We show that in such a setting, it provides predictions of similar quality to state-of-the-art boosting libraries such as XGBoost, while being faster by several orders of magnitude. Then, we evaluate the properties of the resulting conditional distribution estimator, which is shown empirically to outperform other nonparametric and semiparametric estimators, especially in high-dimensional settings and in the presence of mixed and/or missing covariates.
\end{abstract}

\newpage
\section{Introduction}

The classical concern of supervised statistical learning is to determine properties of the conditional distribution of a dependent variable given some covariates. In the case of numeric labels, the property of interest is often the conditional mean, and many regression methods effectively provide estimates of this parameter. 

In modern applications of machine learning, estimating only some location parameters can be unsatisfactory, and one would prefer modeling the whole conditional distribution. This allows for more versatile analyses and is especially valuable in fields where the distribution is potentially complex and/or multimodal, so that the sufficiency of such parameters is not guaranteed (see Figure~\ref{fig:dens}). For example, in healthcare, the tails of the distribution are often of equal importance to its expected value, as they may identify individuals with extreme outcomes \parencite{jones2015healthcare, strobl2021dirac}. Other applications of modeling entire conditional distributions include probabilistic forecasting \parencite{krzysztofowicz2024probabilistic}, imputation of missing values \parencite{enders2025missing}, and data synthesis \parencite{cormode2025synthetic}. Distributional estimates can also be used to obtain prediction intervals, potentially with finite-sample guarantees for (conditional) coverage when done under the conformal prediction framework \parencite{zhou2025conformal, plassier2025probabilistic, zou2026conformalized}.

\begin{figure}[!htbp]
\centering
\includegraphics[width=0.7\linewidth]{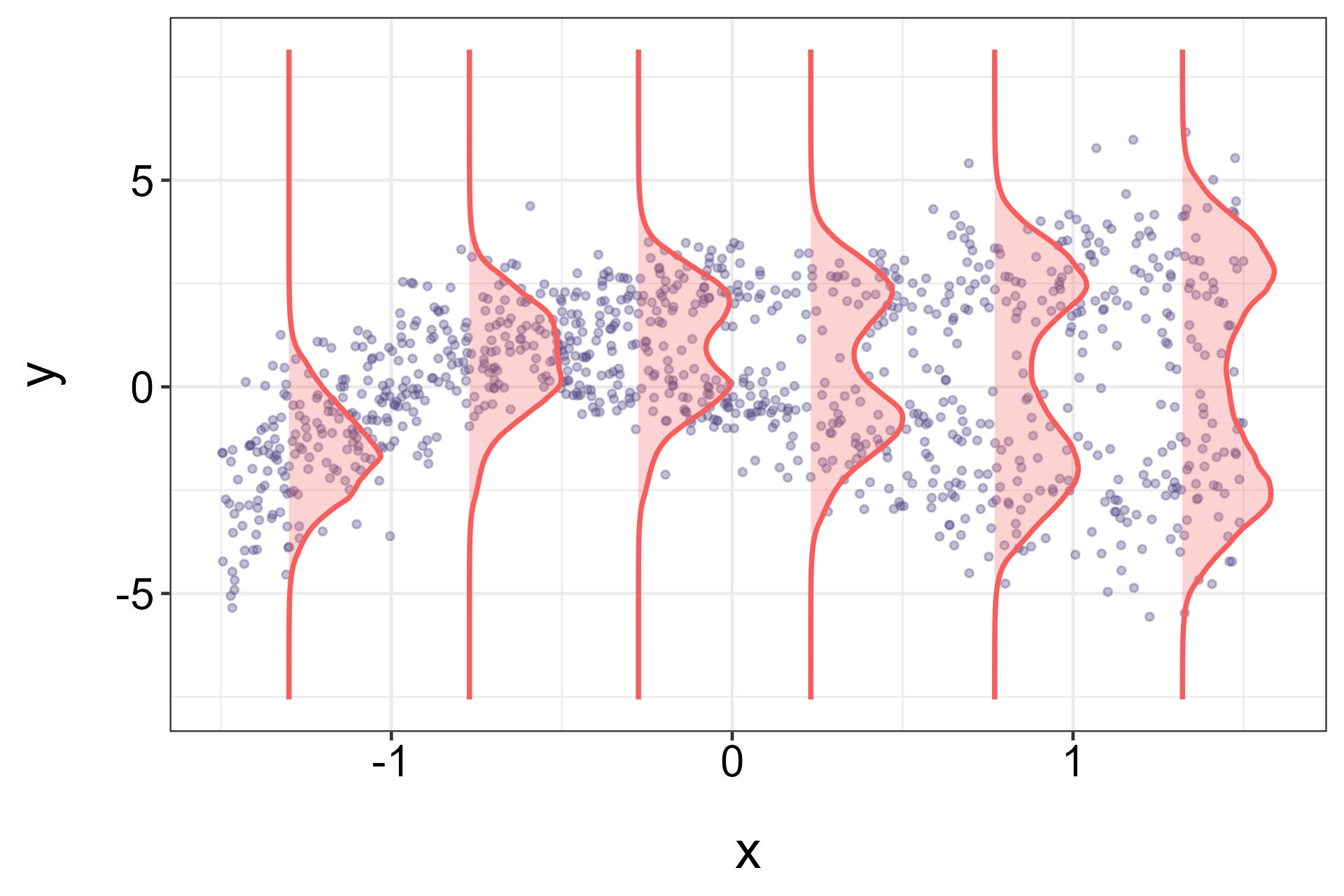}
\caption{\label{fig:dens}Estimation of conditional distributions which are bimodal for some values of the conditioning variable. The data originate from \textcite{lei2014distribution}. The estimates were obtained using the parallel gradient boosting method presented in this paper.}
\end{figure}

Modeling conditional distributions is straightforward if one assumes a parametric family. This indeed amounts to estimating a finite number of parameters, which can be done using traditional modeling techniques. In real-life applications, however, it is often difficult to specify such parametric forms, and one could prefer working with nonparametric models \parencite{hill1982robustness}. By adapting to the complexity of the data at hand, these models and the corresponding algorithms can prove more robust and produce more reliable estimates than their parametric counterparts \parencite{leech2002call}.

In the case of categorical labels, nonparametric prediction of class probabilities is already standard practice. It is at the core of major classification algorithms, such as the naive Bayes classifier \parencite{oro96521}, and the output of many other algorithms can be interpreted similarly via their training loss function \parencite{silva2023classifier}. Unfortunately, these approaches are not applicable as-is to continuous labels, especially because in this case the observed data do not cover the entire support of the variable of interest.  To address this difficulty, one intuitive strategy is to frame the corresponding learning problem as multiple regression ones (or, equivalently, as a multi-output regression one). The idea is then to base the estimate of the conditional distribution on estimates of several functionals of this distribution, for example, conditional quantiles \parencite{qiu2026review}. In any case, the number of targets must grow with the training data for the resulting estimator to remain nonparametric, which poses computational challenges.

If computation times were not an issue, boosted regression methods would make appealing candidates to perform such tasks \parencite{bentejac2021comparative}. They are based on iteratively combining weak learners (usually, trees) into a strong one, hence reducing the bias of the corresponding estimator \parencite{schapire2013boosting}. Gradient boosting is remarkably effective for real-life prediction tasks, and has been the winner of many machine learning competitions since its introduction \parencite{Breiman:96:TR,nielsen2016tree}. Its popularity grew even more in the last 10 years with the release of high-performance implementations such as XGBoost \parencite{chen2016xgboost}, LightGBM \parencite{ke2017lightgbm} and CatBoost \parencite{prokhorenkova2018catboost}. However, these libraries do not provide both scalable and flexible solutions to perform multi-output regressions. More precisely, multi-output models returned either involve several single-output weak learners, or one multi-output weak learner per iteration. In the latter case, the class of base models used is tied to a specific loss, and generally makes the biased assumption that all targets are fully correlated \parencite{joly2019gradient}. This makes these algorithms poorly suited to conditional distribution estimation using the approaches mentioned above.

\paragraph{Contributions.}
In this paper, we propose an efficient extension of the classical gradient boosting algorithm to the multi-output prediction setting. This approach, which we name parallel gradient boosting (PGB), is based on specifying a common direction of descent for all observations at the current iteration. By doing so, PGB requires training only one base model per iteration, which can be any univariate prediction rule, and naturally exploits any correlation between targets. Our formulation of PGB generalizes several strategies mentioned in the literature on decision tree boosting, which had not been studied on their own, nor applied to other classes of base learners. For separable loss functions, we propose an implementation of PGB with proved convergence in empirical risk to the minimum attainable with the chosen family of base models. We empirically evaluate the behavior of the algorithm on multiple quantile regression tasks and its extension to conditional density estimation. Our results show that PGB provides fast and reliable estimation of conditional distributions in settings traditionally considered challenging. These include high-dimensional data, irrelevant covariates, mixed-type variables, and the presence of missing values. An efficient implementation of the algorithm is provided within the R package \texttt{opencesp} available on CRAN \parencite{opencesp2026}.

\paragraph{Organization of this paper.}
The rest of the paper is organized as follows. In Section~\ref{sec:related}, we introduce our setting and discuss the connections of our contribution with the existing literature. In Section~\ref{sec:gbm}, we recall the general formulation of multi-output gradient boosting, before describing the PGB procedure in Section~\ref{sec:pgb}. We then describe our conditional distribution estimator based on PGB in Section~\ref{sec:cde}. In Section~\ref{sec:experiments}, we evaluate the empirical performance of the PGB procedure in several experiments involving simulated and real data. We conclude by a discussion of our findings in Section~\ref{sec:discussion}. All theoretical proofs are deferred to the appendix.

\section{Background}\label{sec:related}

In this section, we introduce the setting and objectives of PGB before briefly reviewing the related literature.

\subsection{Problem formulation}

Let $(X,Y)$ be a pair of random variables taking values in $\mathcal{X}\times\mathbb{R}$, with $\mathcal{X}$ being any $d$-dimensional measurable feature set. We suppose that we are given $n$ independent realizations $\mathscr{D}_n=\big((\boldsymbol{x}_1,y_1),\ldots,(\boldsymbol{x}_n,y_n)\big)$ of $(X,Y)$. In this work, we are interested in using the sample $\mathscr{D}_n$ to estimate the conditional distribution of $Y$ given $X$. We require the estimator to be nonparametric and applicable to mixed-type and potentially high-dimensional covariates. Our approach is to frame this problem of conditional distribution estimation as a problem of multiple quantile regression, which PGB is designed to handle in an algorithmically efficient way. Although the techniques employed in PGB are especially well-suited to multiple quantile regression, they are, in theory, applicable to other multi-output learning tasks. PGB therefore lies at the intersection of several rich bodies of literature, which we briefly review below.

\subsection{Related works}

Some attempts have been reported in the literature to use gradient boosting for probabilistic predictions in a given parametric family \parencite{duan2020ngboost, marz2019xgboostlss, marz2022distributional}. We do not discuss these works in detail, as the nonparametric nature of PGB makes it primarily compete with other nonparametric estimators of conditional distributions. Such approaches can be classified according to many criteria. Here we consider a distinction between instance-based approaches and what we call (following \cite{reisach2025transforming}) \enquote{regression-based} approaches. This distinction is only for expositional purpose, as some methods could qualify in both categories depending on how they are formulated. We give an overview of these two types of approaches (a more comprehensive review is provided e.g. in \cite[Chapter~3]{Izbicki2025}), before examining the literature on multi-output regression and multi-task learning, including methods based on gradient boosting.

\paragraph{Instance-based estimation of conditional distributions.}
Apart from direct approaches based on $k$-nearest neighbors \parencite{benezet2025learning}, the most classical nonparametric conditional density estimator is probably the kernel estimator \parencite{rosenblatt1969conditional}. It is a straightforward extension of marginal kernel density estimation to the estimation of conditional densities. The principle is to express the conditional density of $Y$ given $X$ in terms of their joint density and the marginal density of $X$, and to estimate both using the classical kernel density estimator. Unfortunately, the approach suffers the same limitation of the method it extends: it is only applicable to continuous data types (although some attempts have been made to extend it to mixed-type settings, see e.g. \cite{racine2004kernel}), and notoriously fails in high dimension due to the curse of dimensionality \parencite{geenens2011curse}. Additionally, the choice of the hyperparameters (i.e. the kernel functions and associated bandwidths) can be tedious in practice \parencite{zhao2025adaptive}. Distributional random forests \parencite{cevid2022distributional} are a related approach that relies on specialized binary trees to obtain a weighting function on the set of covariates. By doing so, it inherits the desirable properties of supervised random forests \parencite{scornet2026theory}, namely their flexibility and ease of hyperparameter tuning. Other, more marginal approaches are based on histograms \parencite{sart2017estimating, yang2024conditional}. They share with the kernel methods described above a dependence on an explicit (rather than learned) geometry on the sample space, and therefore face similar limitations in high dimension. Another line of work that is close to these instance-based approaches directly estimates the conditional density through least-squares density-ratio estimation \parencite{sugiyama2010conditional}.

\paragraph{Regression-based estimation of conditional distributions.}
Most other methods for estimating conditional distributions rely on estimating several functionals of these conditional distributions, i.e., performing multiple regressions. For these methods to qualify as nonparametric, the number of estimated functionals must grow with the size of the data (either explicitly or through hyperparameter optimization). One straightforward application of this idea is to estimate $P(Y\leq y_{\tau}\mid \boldsymbol{x})$ for several cutoffs $y_{\tau}\in\mathbb{R}$, which is equivalent (by the definition of conditional expectations) to regressing $\mathds{1}\{Y\leq y_{\tau}\}$ on $X$, where $\mathds{1}$ represents the indicator function. This is the \enquote{distributional regression} approach, described in \textcite{hall1999methods, koenker2013distributional}. Another related approach is based on quantile regression \parencite{verbois2018probabilistic,vasseur2021comparing, tyralis2021explanation, koenker2005quantile, gneiting2007strictly}, where this time the conditional distribution is described through the corresponding conditional quantile function, estimated at several thresholds, each in $\interval[open]{0}{1}$. The estimation can be performed by gradient boosting, in which case limiting computation times can prove challenging \parencite{taieb2016forecasting, landry2016probabilistic, papacharalampous2022probabilistic}. More sophisticated regression-based approaches include FlexCode \parencite{Izbicki2017-lo}, in which conditional probability density functions (p.d.f.'s) are represented by a truncated orthogonal series expansion, and the expansion coefficients are estimated by regression. Other nonparametric approaches that are compatible with a broad range of regression methods (including gradient boosting) rely on pooled hazard regressions \parencite{diaz2011super}, Poisson regressions on discretized targets using Lindsey's method \parencite{gao2022lincde}, or appropriately generated auxiliary samples \parencite{reisach2025transforming}. Methods stemming from generative modeling, such as conditional diffusion models \parencite{beltran2024treeffuser}, also fall into this category, although they typically do not provide explicit estimates of the conditional distribution (instead, they provide a way of approximately sampling from this distribution). Neural networks have also found applications for the estimation of conditional distributions, either via such generative approaches \parencite{mirza2014conditional, liu2025conditional, melnychuk2023normalizing}, or through more classical parameter estimation \parencite{rothfuss2019conditional, xu2017composite}.

\paragraph{Multi-output regression.}
Multi-output regression algorithms simultaneously learn prediction rules for several numeric variables, with the aim of improving both predictive performance and computational efficiency  by exploiting their relatedness \parencite{borchani2015survey}. This ambitious goal has led to several proposals of both parametric \parencite{Kim2012rn, li2017better, chen2013convex} and nonparametric nature. The latter have especially focused on tree methods, in part because existing algorithms can be easily modified to support multiple targets. This is done by considering suitable splitting criteria \parencite{rahman2017integratedmrf, d2017regression, simm2014tree}, as well as by adapting prediction rules at terminal nodes \parencite{jeong2020regularization}. Although such algorithms outperform their univariate counterparts in terms of computation times \parencite{schmid2023tree}, they are tied to a specific loss, and often make the implicit assumption that all targets are fully correlated, which introduces bias \parencite{joly2019gradient}. These limitations also affect derived algorithms based on ensemble methods, such as random forests \parencite{kocev2013tree} and gradient boosting \parencite{liu2019hitboost, emami2025condensed}. Another way to use ensemble methods for multi-output regression is to assign to each base learner an appropriate univariate objective, chosen so that solving the (potentially infinite) set of such univariate problems is equivalent to solving the multivariate one. This is the approach followed in \textcite{joly2019gradient}, following \textcite{joly2014random}, where univariate targets are obtained by random projections of the output space. The intuition is close to ours, but is tied to decision trees, and the authors do not study their approach from an optimization point of view. For example, they do not establish how to choose the projection matrices to minimize the empirical risk. Similar ideas are also explored (among others) by \textcite{iosipoi2022sketchboost} in a framework named SketchBoost, with the same limitations. As such, PGB can be considered a generalization to arbitrary classes of base learners of a particular configuration of the works of \textcite{joly2019gradient, iosipoi2022sketchboost} with desirable properties (especially, convergence). Finally, other proposed approaches to multi-output regression include those based on support vector regression \parencite{vazquez2003multi, tran2024critical}, kernels \parencite{baldassarre2012multi, alvarez2012kernels, arashloo2022multi}, neural networks \parencite{emami2024deep}, as well as the reformulation of the multivariate problem into several univariate ones, which are then often treated separately \parencite{au2019component, wang2025semantics}.

\paragraph{Multi-task learning.}
Multi-task learning aims to solve several learning tasks jointly by leveraging their similarities \parencite{argyriou2006multi, zhang2021survey}. This is a wide subfield of machine learning, which includes multi-output regression as a particular case \parencite{thung2018brief}. However, contrary to the classical setting in multi-output regression, in multi-task learning, each of the individual problems can involve a different feature space and target variables \parencite{xu2019survey}. The standard literature on multi-task learning is organized by the choices made to model how the task relates to one another. A historical and technical review of this literature is provided by \textcite{yu2025multitask, liu2025multi, yu2025multitaskb}. Briefly, one of the first proposals was to make neural-networks share hidden representations \parencite{caruana1997multitask}; regularized methods then introduced shared and task-specific parameters \parencite{evgeniou2004regularized}, sparse common feature maps \parencite{argyriou2006multi}, low-rank structures, task covariance matrices, or task clusters \parencite{jacob2008clustered, zhang2018overview}. Recent works pursue these research directions, and also study the balancing of task losses or gradients to attain more homogeneous learning trajectories across tasks, hence improving the quality of the resulting model. A popular example of such methods for deep neural networks is called Gradnorm \parencite{chen2018gradnorm}; more recent alternatives include PCGrad \parencite{yu2020gradient} and CAGrad \parencite{liu2021conflict}. Boosting has been less central in this literature, but several constructions have been proposed: AdaBoost variants with multi-task weak learners \parencite{faddoul2010boosting}, boosted decision trees combined into one common as well as several task-specific prediction rules \parencite{chapelle2011boosted, emami2023multi}, boosted multi-output decision trees with task-dependent learning rates \parencite{ying2022mt}, and methods that use gradient boosting as a pre-processing step to determine clusters of related tasks \parencite{emami2026robust}. Another notable contribution is that of \textcite{zhang2012multi}, which explores the same space of predictors as in PGB, i.e., linear combinations of scalar-valued functions with vector coefficients. However, in this approach, the coefficients are viewed as a means of regularization rather than optimization, which moves the algorithm beyond simple gradient descent in a function space, and incurs additional algorithmic steps and costs.

\section{Gradient boosting for multi-output regression}\label{sec:gbm}

In this section, we recall the classical formulation of gradient boosting for multi-output regression, from which PGB derives. We consider a differentiable loss function\footnote{In practice, the gradient boosting procedure has also been successfully applied to loss functions that are not everywhere differentiable, such as the $L_1$ loss \parencite{natekin2013gradient}.} $L:\mathbb{R}\times\mathbb{R}^M\rightarrow\mathbb{R}^+$, and, for every predictor $g:\mathcal{X}\rightarrow\mathbb{R}^M$, the corresponding unnormalized empirical risk:
\begin{equation*}
\hat{\mathcal{R}}_n(g(\boldsymbol{x}_1),\ldots,g(\boldsymbol{x}_n))=\sum_{i=1}^nL(y_i,g(\boldsymbol{x}_i)).
\end{equation*}
In what follows, we will use slightly overloaded notations, namely $L_i(g)$ instead of $L(y_i,g(\boldsymbol{x}_i))$ and $\hat{\mathcal{R}}_n(g)$ instead of $\hat{\mathcal{R}}_n(g(\boldsymbol{x}_1),\ldots,g(\boldsymbol{x}_n))$. Keep in mind, however, that these are neither functionals nor functions of parameter values. Our aim is to find a predictor $g$ minimizing the expected risk $\mathbb{E}[L(Y,g(X))]$. For that, we are given a parametric family of base models $(\varphi_{\boldsymbol{\theta}})_{\boldsymbol{\theta}\in\Theta}$, where $\Theta$ is some parameter space, and each $\varphi_{\boldsymbol{\theta}}$ is a function from $\mathcal{X}$ to $\mathbb{R}^M$. In most implementations of single-output gradient boosting, default base models are limited-depth decision trees. In typical existing implementations of multi-output gradient boosting (see e.g. \cite{chen2016xgboost}), each model is formed by $M$ trees. Boosting procedures explore the space of predictors of the following form:
\begin{equation}\label{eq:gbm-form}
g(\boldsymbol{x})=\sum_{t=0}^T b_t\varphi_{\boldsymbol{\theta}_t}(\boldsymbol{x})
\end{equation}
That is, they explore the linear span of the set of base functions, which we denote (slightly abusing notations) by $\Span(\varphi_{\boldsymbol{\theta}})$. The exploration is performed iteratively, producing a sequence of predictors $(g_0,\ldots,g_T)$ up to some $T\geq1$. Before the first iteration, the model is a constant. To simplify the theoretical discussion, we choose to set $g_0(\boldsymbol{x})=0$. At iteration $t\geq 1$, the algorithm requires evaluating the gradient of $L(y_i,\cdot)$ at $g_{t-1}(\boldsymbol{x}_i)$ for $i\in\{1,\ldots,n\}$, which we denote by $\nabla_2L_i(g_{t-1})$ (making the same type of notational overload as above, and where the subscript in $\nabla_2$ indicates that the gradient is with respect to the second argument of $L$). In the case of multi-output regression, each of the $n$ gradient values lies in $\mathbb{R}^M$. To update the prediction function (i.e. to obtain $g_t$), we compute $\boldsymbol{\theta}_t$ such that:
\begin{equation}\label{eq:paramit}
   \boldsymbol{\theta}_t\in\argmin_{\boldsymbol{\theta}\in\Theta} \sum_{i=1}^n\langle\nabla_2L_i(g_{t-1}),\varphi_{\boldsymbol{\theta}}(\boldsymbol{x}_i)\rangle.
\end{equation}
That is, we look for parameter values minimizing the empirical inner product with the gradient of $L$ (we assume throughout the paper that such values always exist). The motivation for this choice of parameters comes from a first-order approximation of the empirical risk function. For any $b\in\mathbb{R}$ and $\boldsymbol{\theta}\in\Theta$, we write the following first order Taylor approximation:
\begin{align}\label{eq:firstorder}
\sum_{i=1}^nL_i(g_{t-1}+b\varphi_{\boldsymbol{\theta}})&\approx \sum_{i=1}^n(L_i(g_{t-1})+\langle\nabla_2L_i(g_{t-1}),b\varphi_{\boldsymbol{\theta}}(\boldsymbol{x}_i)\rangle) \\
&=\sum_{i=1}^nL_i(g_{t-1})+b\sum_{i=1}^n\langle\nabla_2L_i(g_{t-1}),\varphi_{\boldsymbol{\theta}}(\boldsymbol{x}_i)\rangle. \nonumber
\end{align}
Hence, solving Equation~\eqref{eq:paramit} provides a way of approximately minimizing the left-hand side of Equation~\eqref{eq:firstorder}. The latter provides the maximum decrease in empirical risk attainable at iteration $t$. The assumption that motivates practical implementations of gradient boosting is that Equation~\eqref{eq:paramit} can be solved easily: this is the \textit{multi-output weak learner} assumption. We then set:
\begin{equation}\label{eq:itboost}
    g_t=g_{t-1}+b_t\varphi_{\boldsymbol{\theta}_t}
\end{equation}
for some appropriately chosen $b_t\in\mathbb{R}$. After $T$ iterations, the predictor $g_T$ is of the form given in Equation~\eqref{eq:gbm-form}.

The learning procedure for $M$-dimensional predictions is therefore very similar to that for unidimensional predictions, which has been described in detail in the literature \parencite{schapire2013boosting}. To analyze the convergence of the algorithm, we may rely on the following assumptions.

\begin{assumption}[Smoothness of the loss]\label{assum:smooth}
    The loss $L$ is convex and $K$-smooth in its second argument.
\end{assumption}

\begin{assumption}[Symmetry of base models]\label{assum:sym}
    For any $\boldsymbol{\theta}\in\Theta$, there is a $\bar{\boldsymbol{\theta}}\in\Theta$ such that $\varphi_{\boldsymbol{\theta}}=-\varphi_{\bar{\boldsymbol{\theta}}}$.
\end{assumption}

\begin{assumption}[Boundedness of base models]\label{assum:bound}
    For any $\boldsymbol{\theta}\in\Theta$, $\Vert\varphi_{\boldsymbol{\theta}}(\boldsymbol{x}_i)\Vert_2^2\leq\frac{1}{n}$.
\end{assumption}

\begin{assumption}[Compactness of predictions]\label{assum:compact}
    The set
    \begin{equation*}
    \Gamma=\{(g(\boldsymbol{x}_1),\ldots,g(\boldsymbol{x}_n)) \mid g\in\Span(\varphi_{\boldsymbol{\theta}}),\hat{\mathcal{R}}_n(g)\leq\hat{\mathcal{R}}_n(g_0)\}
    \end{equation*}
    is compact.
\end{assumption}

Assumptions~\ref{assum:smooth} and \ref{assum:sym} are standard in the literature on gradient boosting (see \cite{biau2021optimization}). Assumption~\ref{assum:bound} is only for expositional convenience in the proofs of convergence, as it avoids the need for explicit upper bounds. Assumption~\ref{assum:compact} is useful to relate the empirical risk with the alignment between the gradient values and the predictions. It is actually satisfied as long as the set $\Gamma$ is bounded (to see that, remark that $\Gamma$ can be defined as an intersection of two closed sets, and that in $\mathbb{R}^{nM}$, compactness is equivalent to closedness and boundedness by the Borel-Lebesgue theorem). This is, for example, the case when $L$ is a sum of $M$ quadratic loss functions, one for each component of the prediction. Note that Assumption~\ref{assum:compact} immediately implies that $\argmin_{g\in\Span(\varphi_{\boldsymbol{\theta}})}\hat{\mathcal{R}}_n(g)$ is non-empty.

Under these assumptions, one can prove that multi-output gradient boosting converges in empirical risk for a suitable choice of step sizes, which would also be central to establishing its consistency (see e.g. \cite{lugosi2004bayes}, and \cite{bartlett2006adaboost}). The result is stated more formally in Theorems~\ref{th:gber} and \ref{th:gber2}, of which proofs are given in Appendix~\ref{sec:proofs}.

\begin{theorem}[Convergence of multi-output gradient boosting with decreasing step sizes]\label{th:gber}
Grant Assumptions~\ref{assum:smooth} to \ref{assum:compact} and let $g_\star\in\argmin_{g\in\Span(\varphi_{\boldsymbol{\theta}})}\hat{\mathcal{R}}_n(g)$. Let the step size at iteration $t\geq 1$ be chosen as:
\begin{equation*}
b_t=\min\{\frac{1}{\sqrt{t}},\frac{\kappa_t}{K}\}
\end{equation*}
with $\kappa_t=-\sum_{i=1}^n\langle\nabla_2L_i(g_{t-1}),\varphi_{\boldsymbol{\theta}_t}(\boldsymbol{x}_i)\rangle$, 
and let $\boldsymbol{\theta}_t$ be chosen according to Equation~\eqref{eq:paramit}. Then $\lim_{t\rightarrow\infty}\hat{\mathcal{R}}_n(g_t)=\hat{\mathcal{R}}_n(g_\star)$.
\end{theorem}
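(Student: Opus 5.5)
Throughout, write $F_t=\hat{\mathcal{R}}_n(g_t)$ and $F_\star=\hat{\mathcal{R}}_n(g_\star)$. The argument has three parts: a per-iteration descent inequality, a lower bound on $\kappa_t$ in terms of the suboptimality gap, and a recursion driven by the divergence of $\sum_t b_t$.

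\textbf{Descent inequality.} By Assumption~\ref{assum:sym} and the choice \eqref{eq:paramit}, $\kappa_t\geq 0$: if the minimum in \eqref{eq:paramit} were positive, replacing $\boldsymbol{\theta}_t$ by $\bar{\boldsymbol{\theta}}_t$ would make it negative. By $K$-smoothness (Assumption~\ref{assum:smooth}) applied to each $L_i$, and by Assumption~\ref{assum:bound} (so $\sum_i\Vert\varphi_{\boldsymbol{\theta}_t}(\boldsymbol{x}_i)\Vert_2^2\leq 1$),
\begin{equation*}
F_t\leq F_{t-1}-b_t\kappa_t+\frac{K}{2}b_t^2 .
\end{equation*}
Since $b_t\leq\kappa_t/K$, we have $\frac{K}{2}b_t^2\leq\frac{1}{2}b_t\kappa_t$, hence
\begin{equation*}
F_t\leq F_{t-1}-\tfrac12 b_t\kappa_t .
\end{equation*}
Thus $(F_t)$ is nonincreasing, every $g_t$ stays in the sublevel set, and the vector of predictions of $g_t$ lies in $\Gamma$ (Assumption~\ref{assum:compact}). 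Because $F_t\geq 0$, we also get $\sum_t b_t\kappa_t<\infty$.

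\textbf{Relating $\kappa_t$ to the gap.} This is the main obstacle. By convexity,
\begin{equation*}
F_{t-1}-F_\star\leq\sum_i\langle\nabla_2L_i(g_{t-1}),g_{t-1}(\boldsymbol{x}_i)-g_\star(\boldsymbol{x}_i)\rangle .
\end{equation*}
Write $g_{t-1}-g_\star=\sum_k c_k\varphi_{\boldsymbol{\theta}^{(k)}}$, a finite combination; by Assumption~\ref{assum:sym} we may take all $c_k\geq0$. Each term satisfies $\sum_i\langle\nabla_2L_i,-\varphi_{\boldsymbol{\theta}^{(k)}}(\boldsymbol{x}_i)\rangle\leq\kappa_t$, so $F_{t-1}-F_\star\leq\kappa_t\sum_k c_k$. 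I therefore need a uniform bound $\sum_k c_k\leq C$ over all prediction vectors in $\Gamma$. I would use compactness of $\Gamma$ together with finite-dimensionality: the predictions live in the finite-dimensional subspace $V=\Span\{\varphi_{\boldsymbol{\theta}}(\boldsymbol{x}_{1:n})\}\subset\mathbb{R}^{nM}$. Choose a basis of $V$ made of finitely many base-model vectors; the $\ell_1$-atomic norm generated by these symmetric atoms is a norm on $V$, hence equivalent to the Euclidean norm, and it is bounded on the bounded set $\Gamma-\Gamma$. This yields $F_{t-1}-F_\star\leq C\kappa_t$ with $C<\infty$. The delicate point is the choice of $g_\star$: if the minimizer over the span is only attained in the closure, one works with its prediction vector in $V$, which is still admissible since only predictions enter the argument.

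\textbf{Recursion.} Set $\delta_t=F_t-F_\star\geq0$; it is nonincreasing, so let $\delta=\lim_t\delta_t$. Suppose $\delta>0$. Then $\kappa_t\geq\delta/C$ for all $t$, so $b_t\geq\min\{t^{-1/2},\delta/(CK)\}$ and $\sum_t b_t=\infty$. Consequently
\begin{equation*}
\sum_t b_t\kappa_t\geq\frac{\delta}{C}\sum_t b_t=\infty,
\end{equation*}
contradicting the finiteness obtained in the descent step. Therefore $\delta=0$, i.e.\ $\lim_{t\to\infty}\hat{\mathcal{R}}_n(g_t)=\hat{\mathcal{R}}_n(g_\star)$. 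The $1/\sqrt{t}$ cap plays no role in this argument beyond keeping $\sum_t b_t$ divergent.
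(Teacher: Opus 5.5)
Your proof is correct, but it takes a different route from the paper's.

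\textbf{The paper's route.} The paper never obtains a uniform constant relating $\kappa_{t+1}$ to the gap. In Lemma~\ref{lemma:bound} it writes $g_\star-g_t$ as a nonnegative combination of base models with total mass at most $B_\star+\sum_{s\le t}b_s<B_\star+2\sqrt{t}$. This is exactly where the cap $b_s\le 1/\sqrt{s}$ is used, and it gives $\kappa_{t+1}\ge\Delta_t/(B_\star+2\sqrt{t})$. Because this denominator grows, the paper then needs Lemma~\ref{lemma:extract}: from $\sum_t b_t^2<\infty$, infinitely many steps are unclipped ($b_t=\kappa_t/K$), and the reciprocals of their indices have divergent sum. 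It finishes with a telescoping of $1/\Delta_{\sigma(t)}$ along that subsequence.

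\textbf{Your route.} You replace $B_\star+2\sqrt{t}$ by a fixed $C$, using Assumption~\ref{assum:compact} and equivalence of norms on the finite-dimensional space $V$ of prediction vectors. This is legitimate for three reasons. Only the values $g_{t-1}(\boldsymbol{x}_i)-g_\star(\boldsymbol{x}_i)$ enter the convexity bound. The signs of the basis coordinates are absorbed by Assumption~\ref{assum:sym}. Both prediction vectors lie in $\Gamma$: $g_{t-1}$ by monotonicity of the risk, and $g_\star$ because $g_0=0$ is in the span. With $\delta_{t-1}\le C\kappa_t$, the contradiction with $\sum_t b_t\kappa_t<\infty$ is immediate.

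\textbf{What each buys.} Your argument is shorter and avoids the extraction lemma. It also shows that the $1/\sqrt{t}$ cap is superfluous under Assumption~\ref{assum:compact}. With $b_t=\kappa_t/K$ alone you would get $\delta_t\le\delta_{t-1}-\delta_{t-1}^2/(2KC^2)$, i.e.\ an $O(1/t)$ rate, though with a non-explicit $C$. The paper's argument uses Assumption~\ref{assum:compact} only for the existence of $g_\star$. It controls the $\ell_1$ mass of the iterates through the step sizes themselves, so it does not rely on bounded sublevel sets. It is therefore the argument that adapts to losses whose iterates may diverge, which is precisely the regime where the cap matters.

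\textbf{Two minor points.} In your chain you need $\sum_i\langle\nabla_2L_i(g_{t-1}),\varphi_{\boldsymbol{\theta}^{(k)}}(\boldsymbol{x}_i)\rangle\le\kappa_t$, not the version with $-\varphi_{\boldsymbol{\theta}^{(k)}}$; both hold by Assumption~\ref{assum:sym}, so nothing breaks. Your concern about $g_\star$ being attained only in a closure is moot: the statement assumes $g_\star$ exists, and $V$ is closed anyway.
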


\begin{theorem}[Convergence of multi-output gradient boosting with line search]\label{th:gber2}
Grant Assumptions~\ref{assum:smooth} to \ref{assum:compact} and let $g_\star\in\argmin_{g\in\Span(\varphi_{\boldsymbol{\theta}})}\hat{\mathcal{R}}_n(g)$. Let the step size at iteration $t\geq 1$ be chosen as $b_t=\alpha \eta_t$, where $\alpha \in \interval[open left]{0}{1}$ is a shrinkage parameter, and $\eta_t$ is such that:
\begin{equation*}
\eta_t\in\argmin_{\eta\in\mathbb{R}^+} \hat{\mathcal{R}}_n(g_{t-1}+\eta\varphi_{\boldsymbol{\theta}_t})
\end{equation*}
with $\boldsymbol{\theta}_t$ chosen according to Equation~\eqref{eq:paramit}. Then $\lim_{t\rightarrow\infty}\hat{\mathcal{R}}_n(g_t)=\hat{\mathcal{R}}_n(g_\star)$.
\end{theorem}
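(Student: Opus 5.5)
We prove Theorem~\ref{th:gber2} by comparing each line-search step with the fixed step $\kappa_t/K$ of Theorem~\ref{th:gber}, and then converting a ``sufficient decrease'' inequality into convergence of the risk values.

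\textbf{Step 1 (descent and first consequences).} By Assumption~\ref{assum:sym}, the minimum in Equation~\eqref{eq:paramit} is nonpositive, so $\kappa_t\geq 0$. Write $h(\eta)=\hat{\mathcal{R}}_n(g_{t-1}+\eta\varphi_{\boldsymbol{\theta}_t})$. This function is convex in $\eta$, and $h'(0)=-\kappa_t\le 0$. Hence $\eta_t\geq 0$ minimizes $h$ over $\mathbb{R}^+$. Since $\alpha\in\interval[open left]{0}{1}$, convexity gives
\[
h(\alpha\eta_t)\le (1-\alpha)h(0)+\alpha h(\eta_t).
\]
Because $h(\eta_t)\le h(0)$, the risk $\hat{\mathcal{R}}_n(g_t)$ is nonincreasing in $t$. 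It follows that every $g_t$ stays in $\Gamma$, and the sequence of risks has a limit $\hat{\mathcal{R}}_\infty\geq\hat{\mathcal{R}}_n(g_\star)$.

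\textbf{Step 2 (sufficient decrease).} By $K$-smoothness (Assumption~\ref{assum:smooth}) and Assumption~\ref{assum:bound},
\[
h(\eta)\le h(0)-\eta\kappa_t+\tfrac{K}{2}\eta^2\sum_i\Vert\varphi_{\boldsymbol{\theta}_t}(\boldsymbol{x}_i)\Vert_2^2\le h(0)-\eta\kappa_t+\tfrac{K}{2}\eta^2 .
\]
Taking $\eta=\kappa_t/K$ gives $h(\eta_t)\le h(0)-\kappa_t^2/(2K)$. Combining this with the convexity bound of Step~1,
\[
\hat{\mathcal{R}}_n(g_t)\le\hat{\mathcal{R}}_n(g_{t-1})-\frac{\alpha\kappa_t^2}{2K}.
\]
Summing over $t$, and using that the risk is nonnegative, we get $\sum_t\kappa_t^2<\infty$, so $\kappa_t\to 0$.

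\textbf{Step 3 (relating $\kappa_t$ to the suboptimality gap).} This is the main obstacle. By convexity,
\[
\hat{\mathcal{R}}_n(g_{t-1})-\hat{\mathcal{R}}_n(g_\star)\le\sum_i\langle\nabla_2L_i(g_{t-1}),g_{t-1}(\boldsymbol{x}_i)-g_\star(\boldsymbol{x}_i)\rangle .
\]
The difference $g_{t-1}-g_\star$ lies in $\Span(\varphi_{\boldsymbol{\theta}})$, but it need not be a bounded combination of base models. We would first use Assumption~\ref{assum:compact} together with finite dimensionality. The vectors $(\varphi_{\boldsymbol{\theta}}(\boldsymbol{x}_i))_i$ span a subspace $V\subset\mathbb{R}^{nM}$. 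Since $V$ is finite dimensional, fix a basis of $V$ made of finitely many base models $\varphi_{\boldsymbol{\theta}^{(1)}},\ldots,\varphi_{\boldsymbol{\theta}^{(p)}}$. Any element of $\Gamma-\Gamma$ then writes as $\sum_j c_j\varphi_{\boldsymbol{\theta}^{(j)}}$ with $\sum_j|c_j|\le C$. The constant $C$ is uniform because $\Gamma$ is compact and coordinates in a fixed basis are continuous.

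Using Assumption~\ref{assum:sym}, each term $c_j\varphi_{\boldsymbol{\theta}^{(j)}}$ can be rewritten as $|c_j|$ times a base model. The minimality in Equation~\eqref{eq:paramit} then gives, for every base model,
\[
\sum_i\langle\nabla_2L_i(g_{t-1}),\varphi(\boldsymbol{x}_i)\rangle\geq-\kappa_t .
\]
Summing these bounds yields $\hat{\mathcal{R}}_n(g_{t-1})-\hat{\mathcal{R}}_n(g_\star)\le C\kappa_t$.

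\textbf{Step 4 (conclusion).} Combining Steps~2 and~3 gives $\hat{\mathcal{R}}_n(g_{t-1})-\hat{\mathcal{R}}_n(g_\star)\le C\kappa_t\to 0$, which proves the theorem. One could further obtain the rate $O(1/t)$ from the standard recursion $\delta_t\le\delta_{t-1}-\frac{\alpha}{2KC^2}\delta_{t-1}^2$, where $\delta_t=\hat{\mathcal{R}}_n(g_t)-\hat{\mathcal{R}}_n(g_\star)$.
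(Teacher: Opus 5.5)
Your proof is correct. Steps~1--2 match the paper, but from Step~3 onward you take a genuinely different route. The shared part is the smoothness bound at $\eta=\kappa_t/K$, the convexity bound for the shrunken step, and telescoping to get $\kappa_t\to 0$. You are slightly more careful than the paper in using $\kappa_t\geq 0$ to make $\kappa_t/K$ admissible for a line search over $\mathbb{R}^+$.

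The paper then uses Assumption~\ref{assum:compact} only qualitatively. Bolzano--Weierstrass gives a subsequence of prediction vectors converging to some $g_\infty\in\Span(\varphi_{\boldsymbol{\theta}})$. The two-sided bound $\lvert\sum_i\langle\nabla_2L_i(g_{\sigma(t)}),\varphi_{\boldsymbol{\theta}}(\boldsymbol{x}_i)\rangle\rvert\leq\kappa_{\sigma(t)+1}$, together with continuity of the gradient, makes $g_\infty$ stationary on the span. Convexity then makes it a minimizer, and monotonicity of the risk upgrades subsequential convergence to full convergence.

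You instead use compactness and the finite dimensionality of the space of evaluation vectors to get a uniform $\ell_1$ bound on the coefficients of $g_\star-g_{t-1}$ in a fixed basis of base models. This is the mechanism of Lemma~\ref{lemma:bound} (used for Theorem~\ref{th:gber}), with the path-dependent constant $B_\star+2\sqrt{t}$ replaced by a uniform $C$. That replacement is exactly what this setting needs, since line-search step sizes are not controlled like $1/\sqrt{t}$.

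What each route buys:
\begin{itemize}
\item Yours gives the quantitative inequality $\hat{\mathcal{R}}_n(g_{t-1})-\hat{\mathcal{R}}_n(g_\star)\leq C\kappa_t$, hence an $O(1/t)$ rate, and avoids the limit-point argument.
\item The paper's needs no basis and no norm-equivalence constant (which may be very large), but gives no rate.
\end{itemize}

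One bookkeeping point in Step~3. Minimality of $\boldsymbol{\theta}_t$ only gives lower bounds $\geq-\kappa_t$. To upper-bound $\sum_i\langle\nabla_2L_i(g_{t-1}),g_{t-1}(\boldsymbol{x}_i)-g_\star(\boldsymbol{x}_i)\rangle$, you must write $g_\star-g_{t-1}$ (not $g_{t-1}-g_\star$) as a nonnegative combination of base models, as the paper does. Alternatively, use the two-sided bound given by Assumption~\ref{assum:sym}. Since $\Gamma-\Gamma$ is symmetric, this fix costs nothing.
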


More involved versions of these proofs would be needed to establish the Bayes consistency of multi-output gradient boosting. For an example of such an effort with regularization by early stopping in the univariate case, see \textcite[Chapter~10]{bach2024learning}.

\section{Parallel gradient boosting}\label{sec:pgb}

In this section, we describe the theoretical procedure of PGB. We then discuss the specific choices made in our implementation of the algorithm.

\subsection{General procedure}

As mentioned in the previous sections, direct implementations of gradient boosting for multi-output regression can be computationally demanding, because $M$ base models are typically used at each step of the descent. One way to deal with this issue is to divide each of these learning problems into several simpler ones. Namely, we propose to treat the direction of the predictions $\varphi_{\boldsymbol{\theta}}(\boldsymbol{x})$ separately from their norm. This is the core idea behind PGB. In PGB, we constrain the predictions of each of the base models $\varphi_{\boldsymbol{\theta}}$ to be collinear (hence the chosen name). The only learning task remaining is then to predict the corresponding norms for each data point, which is a univariate prediction task. The principle is illustrated in Figure~\ref{fig:gradient}.

\begin{figure}[!htbp]
\centering
\includegraphics[width=1\linewidth]{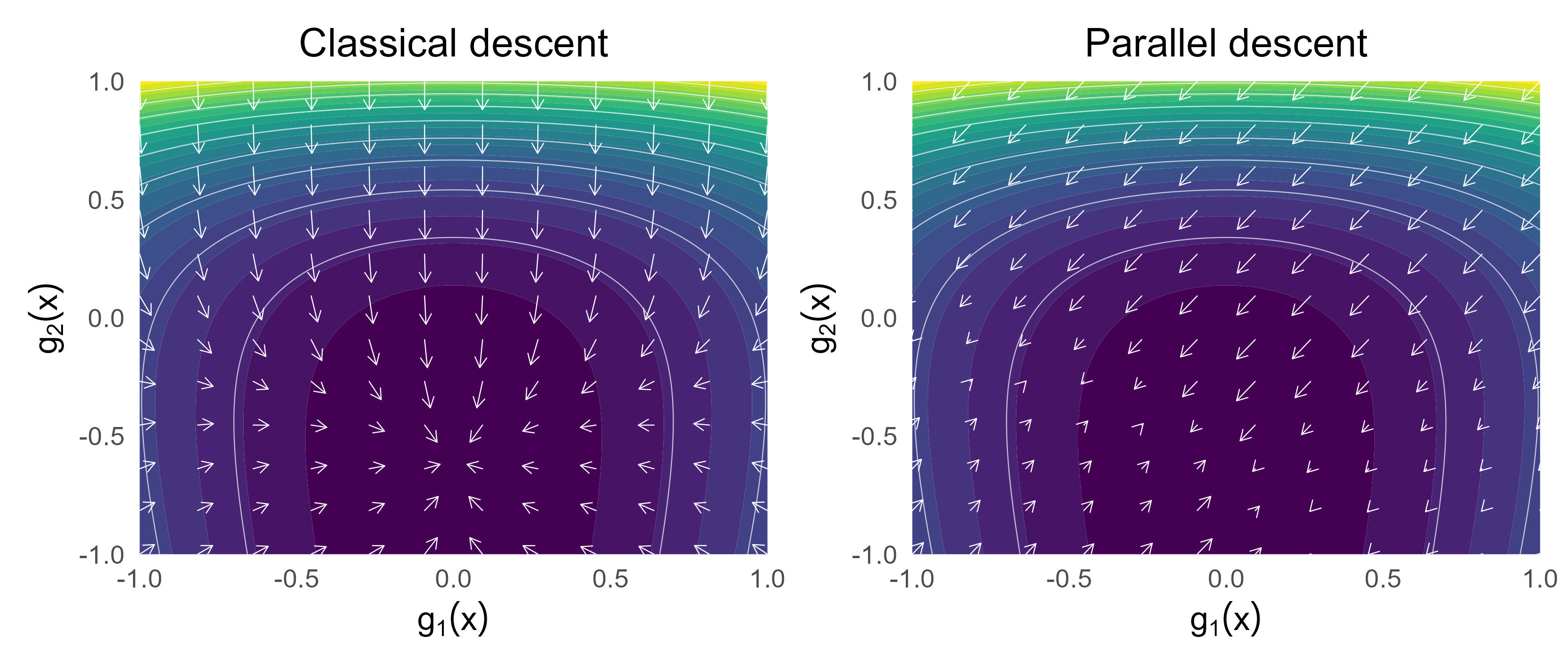}
\caption{\label{fig:gradient}Principle of parallel vs classical gradient descent. Here, $g_1(\boldsymbol{x})$ and $g_2(\boldsymbol{x})$ represent the first and second component of a prediction $g(\boldsymbol{x})$ on $\mathbb{R}^2$. The colors and contours represent the value of the loss function, while the white arrows are the targets to be learned: the negative gradient values in classical descent, and their projection on a given direction in parallel descent.}
\end{figure}

To formalize that, consider now a family of single-output models $(h_{\boldsymbol{w}})_{\boldsymbol{w}\in\mathcal{W}}$, where each $h_{\boldsymbol{w}}$ is a function from $\mathcal{X}$ to $\mathbb{R}$, and $\mathcal{W}$ is a finite-dimensional parameter space. For example, $(h_{\boldsymbol{w}})_{\boldsymbol{w}\in\mathcal{W}}$ can represent linear stumps or standard single-output decision trees. This single-output family will be used to build a multi-output one, in a way made precise below. Analogously to the multi-output weak learner assumption mentioned above, we make a \textit{single-output weak learner} assumption on $(h_{\boldsymbol{w}})_{\boldsymbol{w}\in\mathcal{W}}$. We suppose that the minimization problem
\begin{equation}\label{eq:paramit-uni}
   \boldsymbol{w}_\star\in\argmin_{\boldsymbol{w}\in\mathcal{W}} \sum_{i=1}^n a_i h_{\boldsymbol{w}}(\boldsymbol{x}_i)
\end{equation}
can be solved easily for any real $n$-tuple $(a_1,\ldots,a_n)$. This is the standard assumption in single-output gradient boosting when models such as decision trees are used as base learners.

In PGB, base models in $(h_{\boldsymbol{w}})_{\boldsymbol{w}\in\mathcal{W}}$ are used to set the norm of a final, multivariate prediction. To handle the direction of these predictions, we consider a distinct parameter $\boldsymbol{\beta}\in  \mathbb{S}^{M-1}$, where $\mathbb{S}^{M-1}$ is the unit $M-1$-sphere. The family of multi-output base models induced by $(h_{\boldsymbol{w}})_{\boldsymbol{w}\in\mathcal{W}}$, which is used in PGB, is defined as the family $(\varphi_{\boldsymbol{\theta}})_{\boldsymbol{\theta}\in\Theta}$ such that $\Theta=\mathcal{W}\times\mathbb{S}^{M-1}$ and for each $\boldsymbol{\theta}=(\boldsymbol{w},\boldsymbol{\beta})\in\Theta$ and $\boldsymbol{x}\in\mathcal{X}$,
\begin{equation}\label{eq:wbfamily}
\varphi_{\boldsymbol{\theta}}(\boldsymbol{x})=\varphi_{\boldsymbol{w},\boldsymbol{\beta}}(\boldsymbol{x})=h_{\boldsymbol{w}}(\boldsymbol{x})\boldsymbol{\beta}.
\end{equation}
One can easily check that (by construction), $\Vert\varphi_{\boldsymbol{\theta}}(\boldsymbol{x})\Vert_2=|h_{\boldsymbol{w}}(\boldsymbol{x})|$, and $\boldsymbol{\beta}$ gives the direction of the prediction. Such multi-output base models take values in $\mathbb{R}^M$, as in the previous section. The space of potential predictors explored in PGB lies in the linear span of this family, as in Equation~\eqref{eq:gbm-form}. Although only one single-output model is trained at each iteration, this space is, in many cases, as rich as in classical multi-output gradient boosting. A notable example is when $M$ independent decision trees are fitted at each iteration, one for each component of the predictions: the same space of predictors can be explored by PGB, despite using only one tree per iteration. This is expressed more formally in Proposition~\ref{prop:space}, whose proof is straightforward and omitted for brevity.
\begin{proposition}\label{prop:space}
The linear span of the set of base functions given in Equation~\eqref{eq:wbfamily} is $\Span(h_{\boldsymbol{w}})^M$, where $\Span(h_{\boldsymbol{w}})$ denotes the linear span of the set of base models in $(h_{\boldsymbol{w}})_{\boldsymbol{w}\in\mathcal{W}}$.
\end{proposition}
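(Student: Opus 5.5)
We prove the equality of the two sets of functions from $\mathcal{X}$ to $\mathbb{R}^M$ by double inclusion. Throughout, $\Span(h_{\boldsymbol{w}})^M$ means the set of maps $\boldsymbol{x}\mapsto(f_1(\boldsymbol{x}),\ldots,f_M(\boldsymbol{x}))$ with each $f_m\in\Span(h_{\boldsymbol{w}})$. The span of $(\varphi_{\boldsymbol{\theta}})$ means finite linear combinations, as in Equation~\eqref{eq:gbm-form}. Write $\boldsymbol{e}_1,\ldots,\boldsymbol{e}_M$ for the canonical basis of $\mathbb{R}^M$.

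For the inclusion $\Span(\varphi_{\boldsymbol{\theta}})\subseteq\Span(h_{\boldsymbol{w}})^M$, it suffices to check it on a single generator, since the right-hand side is a vector space (a product of vector spaces, with componentwise operations). Take $\varphi_{\boldsymbol{w},\boldsymbol{\beta}}=h_{\boldsymbol{w}}\boldsymbol{\beta}$ with $\boldsymbol{\beta}=(\beta_1,\ldots,\beta_M)$. Its $m$-th component is $\beta_m h_{\boldsymbol{w}}$, which is a scalar multiple of a base model and hence lies in $\Span(h_{\boldsymbol{w}})$. Closure under linear combinations then gives the whole inclusion.

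For the reverse inclusion, let $g=(f_1,\ldots,f_M)$ with $f_m=\sum_{k=1}^{K_m}c_{m,k}h_{\boldsymbol{w}_{m,k}}$ for real coefficients $c_{m,k}$. We decompose $g=\sum_{m=1}^M f_m\boldsymbol{e}_m$, where $f_m\boldsymbol{e}_m$ is the function whose $m$-th component is $f_m$ and whose other components are zero. Each $\boldsymbol{e}_m$ lies on the unit sphere $\mathbb{S}^{M-1}$. Hence
\begin{equation*}
g=\sum_{m=1}^M\sum_{k=1}^{K_m}c_{m,k}\,\varphi_{\boldsymbol{w}_{m,k},\boldsymbol{e}_m},
\end{equation*}
which is a finite linear combination of elements of the family in Equation~\eqref{eq:wbfamily}.

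No step here is a genuine obstacle; the proof is pure linear algebra. The only point requiring care is that the direction parameter is restricted to the unit sphere rather than ranging over all of $\mathbb{R}^M$. This restriction is harmless for two reasons. The canonical basis vectors are themselves unit vectors, so they are admissible directions. Any rescaling of a direction can be absorbed into the real coefficients of the linear combination, so nothing is lost by normalising $\boldsymbol{\beta}$.
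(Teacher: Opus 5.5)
Your proof is correct and follows essentially the same double-inclusion argument as the paper, which proves this as Lemma~\ref{lemma:span} in the appendix. One inclusion expands each component of a combination of the $h_{\boldsymbol{w}}\boldsymbol{\beta}$; the other writes $g=\sum_{m}\mathbf{e}_m g_m$ and uses the canonical unit vectors as admissible directions.
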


If one wanted to follow the procedure described in the previous section using the family thus defined, the multi-output weak learner hypothesis would be that the negative gradient can be maximized in a relatively easy way, that is:
\begin{align}\label{eq:minthetabeta}
(\boldsymbol{w}_\star,\boldsymbol{\beta}_\star)_t&\in\argmin_{(\boldsymbol{w},\boldsymbol{\beta})\in\mathcal{W}\times\mathbb{S}^{M-1}} \sum_{i=1}^n\langle\nabla_2L_i(g_{t-1}),\varphi_{\boldsymbol{w},\boldsymbol{\beta}}(\boldsymbol{x}_i)\rangle\\
    &=\argmin_{(\boldsymbol{w},\boldsymbol{\beta})\in\mathcal{W}\times\mathbb{S}^{M-1}} \sum_{i=1}^n h_{\boldsymbol{w}}(\boldsymbol{x}_i)\langle\nabla_2L_i(g_{t-1}),\boldsymbol{\beta}\rangle\nonumber
\end{align}
can be found efficiently. This requirement is strong in the general case, and especially when $(h_{\boldsymbol{w}})_{\boldsymbol{w}\in\mathcal{W}}$ corresponds to nonlinear models such as decision trees. Although specialized procedures could be designed to handle this problem, they would typically require repeated calls to the univariate learning algorithm (the \enquote{weak learner oracle}), hence undermining the algorithmic benefits of the whole approach. However, the minimization problem in $\boldsymbol{w}$ only (i.e. with $\boldsymbol{\beta}$ being fixed) is affordable under the single-output weak learner assumption. In PGB, only the latter is therefore made. The problem is then to find a way of approximately solving Equation~\eqref{eq:minthetabeta} under this hypothesis only.

One could imagine different types of approaches to address this problem, which would give rise to as many variants of PGB. In light of the preceding discussion, an appealing approach, that we follow in this work, is to choose a suboptimal value $\tilde{\boldsymbol{\beta}}_t$ for $\boldsymbol{\beta}_t$ and then solve Equation~\eqref{eq:minthetabeta} exactly under this choice. After that, the direction itself can be optimized in light of the value determined for $\boldsymbol{w}_t$, which may help the procedure converge without requiring an additional call to the weak learner oracle. As we discuss in Appendix~\ref{sec:strategy}, this idea has been indirectly explored in a few previous works, but without dedicated analysis. Crucially, it raises the question of how to select the value for $\tilde{\boldsymbol{\beta}}_t$, a question that, to our knowledge, has never been addressed explicitly either. In our implementation of PGB, we propose to use vectors of the canonical basis of $\mathbb{R}^M$ for that. Our motivation is twofold. First, such a choice is computationally lightweight, as it only requires maintaining one column of the matrix of gradient values (or pseudo-residuals) at each iteration. Second, this configuration enables convergence of the procedure in empirical risk, as we now discuss.

Our strategy for implementing PGB is based on the following requirements, which are used to ensure convergence. First, as said above, we require the use of any sequence of projection directions $(\tilde{\boldsymbol{\beta}}_t)_{t\in\mathbb{N}^*}$ such that for each $t\geq 1$, $\tilde{\boldsymbol{\beta}}_t$ is an element of the canonical basis of $\mathbb{R}^M$, i.e. 
\begin{equation}\label{eq:basis}
\tilde{\boldsymbol{\beta}}_t=\mathbf{e}_{m_t}
=\bigl(\underbrace{0,\ldots,0}_{\mathclap{m_t-1\ \text{times}}},\,1,\,\underbrace{0,\ldots,0}_{\mathclap{M-m_t\ \text{times}}}\bigr)
\end{equation}
for some $m_t\in\{1,\ldots,M\}$. Additionally, we require that each element of the canonical basis can be \enquote{visited} an infinite number of times. That is, we assume that there are $M$ strictly increasing functions $\rho_1,\ldots,\rho_M$, each from $\mathbb{N}^*$ to $\mathbb{N}^*$, such that for each $m\in\{1,\ldots,M\}$ and $s\geq1$,
\begin{equation}\label{eq:rho}
\tilde{\boldsymbol{\beta}}_{\rho_m(s)}=\mathbf{e}_m.
\end{equation}
At iteration $t\geq1$, the single-output model is then chosen as in Equation~\eqref{eq:minthetabeta}:
\begin{equation}\label{eq:wttilde1}
\boldsymbol{w}_t\in\argmin_{\boldsymbol{w}\in\mathcal{W}} \sum_{i=1}^n h_{\boldsymbol{w}}(\boldsymbol{x}_i)\langle\nabla_2L_i(g_{t-1}),\tilde{\boldsymbol{\beta}}_t\rangle.
\end{equation}
Finally, the multi-output prediction rule is updated as:
\begin{equation}\label{eq:wttilde2}
g_t=g_{t-1}+b_th_{\boldsymbol{w}_t}{\boldsymbol{\beta}}_t
\end{equation}
where (analogously to Theorem~\ref{th:gber2}), $b_t=\alpha\eta_t$, with $\alpha\in\interval[open left]{0}{1}$ being a fixed shrinkage parameter, and $\eta_t$ and $\boldsymbol{\beta}_t$ are given by:
\begin{equation*} (\eta_t,\boldsymbol{\beta}_t)\in\argmin_{(\eta,\boldsymbol{\beta})\in\mathbb{R}^+\times\mathbb{S}^{M-1}} \hat{\mathcal{R}}_n(g_{t-1}+\eta h_{\boldsymbol{w}_t}\boldsymbol{\beta}).
\end{equation*}
To simplify notations in what follows, we write $\boldsymbol{\gamma}_t=\eta_t\boldsymbol{\beta}_t$. Note that in this case, we have:
\begin{equation*}
\boldsymbol{\gamma}_t\in\argmin_{\boldsymbol{\gamma}\in\mathbb{R}^M} \hat{\mathcal{R}}_n(g_{t-1}+h_{\boldsymbol{w}_t}\boldsymbol{\gamma}).
\end{equation*}

One can see the determination of the value of $\eta_t$ and $\boldsymbol{\beta}_t$ as a form of multidimensional line search. The reliance of this implementation of PGB on such a line search makes it especially convenient when working with separable loss functions because, in this case, each component of $\boldsymbol{\gamma}_t$ can be determined separately. For analogous reasons, working with a separable loss function is also useful for studying the convergence of the algorithm. This motivates the following additional assumptions.
\begin{assumption}[Symmetry of univariate models]\label{assum:symuni}
    For each $\boldsymbol{w}\in\mathcal{W}$, there is a $\bar{\boldsymbol{w}}\in\mathcal{W}$ such that $h_{\boldsymbol{w}}=-h_{\bar{\boldsymbol{w}}}$.
\end{assumption}

\begin{assumption}[Separability]\label{assum:sep}
    The loss $L$ is separable, i.e., is such that:
\begin{equation}\label{eq:separable}
L(y,g_t(\boldsymbol{x})) = \sum_{m=1}^M\ell_m(y,g_{t,m}(\boldsymbol{x}))
\end{equation}
    for some functions $\ell_m:\mathbb{R}^2\rightarrow \mathbb{R}^+$, where $g_{t,m}(\boldsymbol{x})$ denotes the $m$-th component of $g_t(\boldsymbol{x})$. Additionally, each of the $\ell_m$'s is convex and K-smooth in its second argument.
\end{assumption}

In this setting, we have the following convergence result, which is proved in Appendix~\ref{sec:proofs}. The idea is merely to apply Theorem~\ref{th:gber2} to each component of the loss function. In this case, the inner product $\langle \nabla_2L_i(g_{t-1}),\boldsymbol{\beta}_t \rangle$ gives the $i$-th pseudo-residual that would be used at iteration $t$ if one wanted to follow the univariate boosting procedure for the $m$-th component only.

\begin{theorem}[Convergence of parallel gradient boosting]\label{th:lspgb}
Grant Assumptions~\ref{assum:bound}, \ref{assum:symuni} and \ref{assum:sep}, as well as Assumption~\ref{assum:compact} applied to each elementary loss function $\ell_m$. Let $\alpha\in\interval[open left]{0}{1}$. Let $g_\star\in\argmin_{g\in\Span(\varphi_{\boldsymbol{\theta}})}\hat{\mathcal{R}}_n(g)$. 
At iteration $t\geq 1$, let $\tilde{\boldsymbol{\beta}}_t$ be chosen as described in Equation~\eqref{eq:rho}, and let $\boldsymbol{w}_t$ be chosen according to Equation~\eqref{eq:paramit-uni}, i.e.,
\begin{equation*}
\boldsymbol{w}_t\in\argmin_{\boldsymbol{w}\in\mathcal{W}} \sum_{i=1}^n h_{\boldsymbol{w}}(\boldsymbol{x}_i)\langle\nabla_2L_i(g_{t-1}),\tilde{\boldsymbol{\beta}}_t\rangle.
\end{equation*}
Finally, let the multidimensional step size $\boldsymbol{\gamma}_t$ be chosen as:
\begin{equation*}
\boldsymbol{\gamma}_t\in\argmin_{\boldsymbol{\gamma}\in\mathbb{R}^M} \hat{\mathcal{R}}_n(g_{t-1}+h_{\boldsymbol{w}_{t}}\boldsymbol{\gamma})
\end{equation*}
and the model be updated as:
\begin{equation*}
g_t = g_{t-1} + \alpha h_{\boldsymbol{w}_{t}}\boldsymbol{\gamma}_t.
\end{equation*}
Then $\lim_{t\rightarrow\infty}\hat{\mathcal{R}}_n(g_t)=\hat{\mathcal{R}}_n(g_\star)$.
\end{theorem}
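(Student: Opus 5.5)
By separability (Assumption~\ref{assum:sep}), the empirical risk splits componentwise as
\begin{equation*}
\hat{\mathcal{R}}_n(g)=\sum_{m=1}^M\hat{\mathcal{R}}_n^{(m)}(g_{\cdot,m}),\qquad \hat{\mathcal{R}}_n^{(m)}(f)=\sum_{i=1}^n\ell_m(y_i,f(\boldsymbol{x}_i)),
\end{equation*}
and by Proposition~\ref{prop:space} the minimization of $\hat{\mathcal{R}}_n$ over $\Span(\varphi_{\boldsymbol{\theta}})=\Span(h_{\boldsymbol{w}})^M$ decouples into $M$ independent univariate problems. Hence $\hat{\mathcal{R}}_n(g_\star)=\sum_m\inf_{f\in\Span(h_{\boldsymbol{w}})}\hat{\mathcal{R}}_n^{(m)}(f)$. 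It therefore suffices to show that each $\hat{\mathcal{R}}_n^{(m)}(g_{t,m})$ converges to its own infimum.

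\textbf{Decoupling the step.} For a fixed $t$, the multidimensional line search also decouples: $\gamma_{t,m}\in\argmin_{\gamma\in\mathbb{R}}\hat{\mathcal{R}}_n^{(m)}(g_{t-1,m}+\gamma h_{\boldsymbol{w}_t})$. Consequently, every component $m$ performs, at every iteration, an exact line search along $h_{\boldsymbol{w}_t}$ followed by shrinkage $\alpha$. Convexity of $\ell_m$ and $\alpha\in(0,1]$ then give $\hat{\mathcal{R}}_n^{(m)}(g_{t,m})\le\hat{\mathcal{R}}_n^{(m)}(g_{t-1,m})$. So each component's risk is nonincreasing and converges to some limit $R_m^\infty$. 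Along the subsequence $\rho_m(s)$, the direction $h_{\boldsymbol{w}_{\rho_m(s)}}$ is exactly the univariate gradient-boosting weak learner for component $m$. Assumption~\ref{assum:symuni} makes the step $\gamma\ge 0$ along this direction, as required in Theorem~\ref{th:gber2}, while at the other iterations component $m$ only decreases. The plan is therefore to rerun the proof of Theorem~\ref{th:gber2} for $\hat{\mathcal{R}}_n^{(m)}$ (with $M=1$, base family $h_{\boldsymbol{w}}$), restricted to the visited iterations and using monotonicity in between.

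\textbf{Key estimates.} First, the smoothness descent lemma bounds the decrease on visited iterations. Using Assumption~\ref{assum:bound}, the $K$-smoothness of $\ell_m$ and the exact line search give
\begin{equation*}
\hat{\mathcal{R}}_n^{(m)}(g_{t,m})\le \hat{\mathcal{R}}_n^{(m)}(g_{t-1,m})-c\,\kappa_{t,m}^2,\qquad \kappa_{t,m}=-\sum_i\partial_2\ell_m(y_i,g_{t-1,m}(\boldsymbol{x}_i))h_{\boldsymbol{w}_t}(\boldsymbol{x}_i)\ge0,
\end{equation*}
for some $c>0$ depending on $\alpha,K$. Summing this over $t=\rho_m(s)$ against a risk bounded below forces $\kappa_{\rho_m(s),m}\to0$.

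Second, we relate $\kappa$ to the suboptimality gap. Assumption~\ref{assum:compact} for $\ell_m$ keeps the predictions $(g_{t,m}(\boldsymbol{x}_i))_i$ in a compact set, and the minimizer $f_\star^{(m)}$ lies there as well. Convexity gives
\begin{equation*}
\hat{\mathcal{R}}_n^{(m)}(g_{t-1,m})-\hat{\mathcal{R}}_n^{(m)}(f_\star^{(m)})\le\langle\nabla,\,g_{t-1,m}-f_\star^{(m)}\rangle.
\end{equation*}
Writing $g_{t-1,m}-f_\star^{(m)}$ on the sample as a finite combination of base predictions $h_{\boldsymbol{w}}(\boldsymbol{x}_i)$, with coefficient $\ell_1$ norm bounded by some $C$ (obtained from compactness in the finite-dimensional space $\{(f(\boldsymbol{x}_i))_i\}$), yields a gap of at most $C\kappa_{t,m}$.

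\textbf{Main obstacle.} I expect this last step to be the hardest: obtaining a uniform bound $C$ on the representation of vectors of a compact subset of the finite-dimensional span in terms of the base predictions. I would take a basis of base functions spanning the sample-restricted span and use norm equivalence; if that proves delicate, I would follow exactly the argument given in the appendix for Theorem~\ref{th:gber2}.

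\textbf{Conclusion.} Along the subsequence $\rho_m(s)$ the gap tends to zero, and monotonicity in $t$ gives $R_m^\infty=\inf_f\hat{\mathcal{R}}_n^{(m)}(f)$. Summing over $m$ gives $\lim_{t\to\infty}\hat{\mathcal{R}}_n(g_t)=\hat{\mathcal{R}}_n(g_\star)$.
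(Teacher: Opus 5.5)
Your proposal is correct and has the same outer structure as the paper's proof. Separability plus Lemma~\ref{lemma:span} reduce $\hat{\mathcal{R}}_n(g_\star)$ to the sum of componentwise minima over $\Span(h_{\boldsymbol{w}})$. The line search decouples, so each component risk is nonincreasing. The univariate argument is then run only at the visits $\rho_m(s)$.

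The difference lies in the univariate core. The paper packages it as Corollary~\ref{cor:gber2}, which is qualitative. It shows $\kappa\to0$ along visits, extracts a Bolzano--Weierstrass limit point $g_\infty$, and uses continuity of the gradient to get $\sum_i\langle\nabla_2L_i(g_\infty),\varphi_{\boldsymbol{\theta}}(\boldsymbol{x}_i)\rangle=0$ for every $\boldsymbol{\theta}$. Convexity then shows that $g_\infty$ is optimal.

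You instead prove the gap bound $\hat{\mathcal{R}}_n^{(m)}(g_{t-1,m})-\hat{\mathcal{R}}_n^{(m)}(f_\star^{(m)})\le C\kappa_{t,m}$ at visited $t$. This is the analogue of Lemma~\ref{lemma:bound}, with the growing constant $B_\star+2\sqrt{t}$ replaced by a uniform one. The step you flag as the main obstacle is routine:
\begin{itemize}
\item Pick $\boldsymbol{w}_1,\ldots,\boldsymbol{w}_k$ whose sample vectors form a basis of the finite-dimensional space $V=\{(f(\boldsymbol{x}_i))_{i\le n}: f\in\Span(h_{\boldsymbol{w}})\}$. The coordinate map on $V$ is linear, hence bounded.
\item Let $\Gamma_m$ be the set of Assumption~\ref{assum:compact} for $\ell_m$; on the sample it contains both $g_{t-1,m}$ and $f_\star^{(m)}$. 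Since $\Gamma_m-\Gamma_m$ is bounded, the coordinates of the difference have uniformly bounded $\ell_1$ norm.
\item At visited $t$, Assumption~\ref{assum:symuni} gives $|\sum_i\partial_2\ell_m(y_i,g_{t-1,m}(\boldsymbol{x}_i))h_{\boldsymbol{w}}(\boldsymbol{x}_i)|\le\kappa_{t,m}$ for all $\boldsymbol{w}$, and the bound follows.
\end{itemize}

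Your route gives more. Write $D_s$ for the gap at $g_{\rho_m(s)-1,m}$. The descent step and monotonicity then give $D_{s+1}\le D_s-\frac{\alpha}{2KC^2}D_s^2$, hence an explicit $O(1/s)$ rate in the number of visits to coordinate $m$. It also never extracts a limit point: boundedness of the sublevel set suffices, with a near-minimizer in place of $f_\star^{(m)}$ if needed. The paper's route is shorter to write, because it reuses Theorem~\ref{th:gber2} essentially verbatim and needs no choice of basis.

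A minor point: Assumption~\ref{assum:symuni} does not force $\gamma_{t,m}\ge0$ when $\kappa_{t,m}=0$, since the argmin may then contain negative values. You do not need this claim. A line search over all of $\mathbb{R}$ does at least as well as one over $\mathbb{R}^+$, so your descent bound holds regardless.
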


In Appendix~\ref{sec:linear}, we illustrate the convergence of our implementation of PGB when boosting linear models, thereby showing that our general formulation of the procedure applies beyond the most classical class of base learners considered in the literature, namely decision trees.

\subsection{Implementation details}

The practical steps of the PGB procedure that we propose are given in Algorithm~\ref{alg:pgb}. In practice, instead of choosing the descent directions $\boldsymbol{\beta}_t$ in a deterministic way, we draw them at random. The sampling distribution for the directions is uniform, so that the condition of \enquote{infinite visits} of the canonical vectors in $\mathbb{R}^M$ is almost surely met. In Appendix~\ref{sec:strategy}, we explore the practical consequences of this strategy in terms of training times and test error for multiple quantile regression. We find that this choice makes PGB models especially fast to train, with a predictive performance comparable to or better than that obtained through alternative strategies.

In our implementation of the algorithm, we use decision trees as base learners, with leaf-wise line search \parencite{friedman2001greedy}. These are fitted using a pre-sorted, histogram-based algorithm to improve training speed \parencite{ranka1998clouds, ke2017lightgbm}. As with classical gradient boosted decision trees \parencite{si2017gradient}, they aim to solve the minimization problem stated in Equation~\eqref{eq:paramit-uni} through an approximate objective, namely the minimization of the residual sum of squares. More precisely, the actual problem that each tree tries to solve is
\begin{equation*}
\boldsymbol{w}_\star\in\argmin_{\boldsymbol{w}\in\mathcal{W}} \sum_{i=1}^n (-a_i-h_{\boldsymbol{w}}(\boldsymbol{x}_i))^2
\end{equation*}
where $(a_1,\ldots,a_n)$ are projected pseudo-residuals. This idea is treated in more detail in \textcite[Section~7.4.3]{schapire2013boosting}, and its consequences on the convergence of the univariate gradient boosting algorithm are explored in \textcite{biau2021optimization}.

We also allow subsampling at each iteration, as in stochastic gradient boosting \parencite{friedman2002stochastic}. In this variant, a given fraction of training observations are sampled (without replacement) at each iteration, and the base model for this iteration is fitted on this sample. Additionally, we can further limit the number of training observations used for the line search (for example, to a few hundred). We observed that this strategy reduces training times without degrading the model performance. As for the early stopping parameter $T$, it can be determined using a validation set, or by cross-validation.

Finally, we note that it would also be possible to stop updating individual components of the loss when the validation loss for these components stops decreasing. This would result in a potentially different number of boosting iterations for each component. We do not investigate this option here, because in our use case for PGB (namely, estimation of conditional distributions) using a common early stopping parameter provides some regularization in the $Y$ direction and is therefore useful.

\begin{algorithm}
\DontPrintSemicolon
 \KwData{a training dataset $\mathscr{D}_n = \big((\boldsymbol{x}_1,y_1),\ldots,(\boldsymbol{x}_n,y_n)\big)$, a family of univariate regression functions $(h_{\boldsymbol{w}})_{\boldsymbol{w}\in\mathcal{W}}$, a differentiable and separable loss function $L:\mathbb{R}\times\mathbb{R}^M\rightarrow\mathbb{R}^+$, a maximum number of iterations $T$, a shrinkage parameter $\alpha\in\interval[open left]{0}{1}$. }
 \KwResult{ a prediction function $g_T:\mathcal{X}\rightarrow \mathbb{R}^M$. }
 Initialize $g_0(\boldsymbol{x})=\argmin_{\boldsymbol{\gamma}\in\mathbb{R}^M}\sum_{i=1}^nL(y_i,\boldsymbol{\gamma}).$\;
 \For{$t=1,\ldots,T$}{
    \For{$i = 1, \ldots, n $}{
        Compute the pseudo-residuals for observation $i$:
        \begin{equation*}
        \nabla_2L_i(g_{t-1})=\bigg[\frac{\partial L(y_i,g(\boldsymbol{x}_i))}{\partial g(\boldsymbol{x}_i)}\bigg]_{g=g_{t-1}}.
        \end{equation*}
    }
    Sample $\tilde{\boldsymbol{\beta}}_t$ uniformly from $\{\mathbf{e}_1,\ldots,\mathbf{e}_M\}.$ \;
    Choose $\boldsymbol{w}_t$ as:
    \begin{equation*}
    \boldsymbol{w}_t\in\argmin_{\boldsymbol{w}\in\mathcal{W}}\sum_{i=1}^n h_{\boldsymbol{w}}(\boldsymbol{x}_i)\langle\nabla_2L_i(g_{t-1}), \tilde{\boldsymbol{\beta}}_t\rangle.
    \end{equation*}
    Perform a multidimensional line search:
    \begin{equation*}
    \boldsymbol{\gamma}_{t}\in\argmin_{\boldsymbol{\gamma}\in\mathbb{R}^M} \sum_{i=1}^nL(y_i,g_{t-1}(\boldsymbol{x}_i)+h_{\boldsymbol{w}_{t}}(\boldsymbol{x}_i)\boldsymbol{\gamma}).
    \end{equation*}
    Update $g_t=g_{t-1}+\alpha h_{\boldsymbol{w}_{t}}\boldsymbol{\gamma}_t$.\;
 }
 \Return{$g_T$.}\;
 \caption{Parallel gradient boosting.}
 \label{alg:pgb}
\end{algorithm}

\section{Description of the conditional distribution estimator}\label{sec:cde}

Our estimator of conditional distributions is based on multiple quantile regression. We first describe how PGB can be used to do so. Then, we discuss how to obtain estimates of conditional distributions from estimates of conditional quantiles. For both use cases, the PGB procedure is applied to binary trees as base learners.

\subsection{Parallel gradient boosting for multiple quantile regression}\label{sec:mqr}

One natural application of PGB is multiple quantile regression. In multiple quantile regression, a grid of quantile levels (or knots) $0<\tau_1<\ldots<\tau_M< 1$ is fixed, and one tries to estimate the conditional quantile function of $Y$ given $X$ on this grid. The classical univariate loss for this aim is the pinball loss \parencite{koenker2001quantile}, defined for a quantile $\tau_m$ and a prediction function $g:\mathcal{X}\rightarrow\mathbb{R}$ by:
\begin{align*}
\ell_{\tau_m}(y, g(\boldsymbol{x}))&=(\tau_m-\mathds{1}\{(y-g(\boldsymbol{x}))<0\})(y-g(\boldsymbol{x})) \\
&=\begin{cases}
(\tau_m-1)(y-g(\boldsymbol{x})) & \text{if } y < g(\boldsymbol{x}),\\
\tau_m(y-g(\boldsymbol{x}))  & \text{if } y\geq g(\boldsymbol{x}).
\end{cases}
\end{align*}
It can be shown \parencite{Steinwart2011-si} that under mild conditions on the conditional distribution of $Y$ given $X=\boldsymbol{x}$, the Bayes predictor for the loss $\ell_{\tau_m}$ is the conditional quantile $F_{Y\mid \boldsymbol{x}}^{-1}(\tau_m)=\inf\{y\in\mathbb{R},F_{Y\mid \boldsymbol{x}}(y)\geq \tau_m\}$\footnote{In practice, it suffices that the conditional quantile function $F_{Y\mid \boldsymbol{x}}^{-1}$ be strictly increasing, and that $Y$ have finite conditional moments of all orders given $X=\boldsymbol{x}$.}. The pinball loss is piecewise-linear in its second argument, so it does not meet the usual assumptions that are used in proofs of convergence of gradient boosting (besides being non-differentiable at 0, it notably lacks the $K$-smoothness property). Still, boosted models trained to minimize this loss have shown satisfactory performance in practice \parencite{bauer2024pinball, dudek2025multivariate, karakacs2025delivery, nzarigema2022quantile}. Contrary to alternative loss functions designed specifically for boosting (see, e.g., \cite{sluijterman2025composite}), the pinball loss is also easier to optimize for line search \parencite{fouillen2023proximal}, which is why we use it with PGB as well.

In a multi-output setting, where $g$ takes values in $\mathbb{R}^M$, one aims to minimize the sum of the pinball losses on the grid 
$\tau_1,\ldots,\tau_M$. If the grid is equally spaced (i.e. $\tau_1=1-\tau_{M}=\tau_{m+1}-\tau_m$ for each $m\in\{1,\ldots,M-1\}$), this sum is known (up to a factor) as the weighted interval score (WIS; \cite{bracher2021evaluating}):
\begin{equation*}
\operatorname{WIS}(y,g(\boldsymbol{x}))=\frac{1}{M}\sum_{m=1}^M \ell_{\tau_m}(y, g_m(\boldsymbol{x}))
\end{equation*}
where $g_m(\boldsymbol{x})$ denotes the $m$-th component of $g(\boldsymbol{x})$. By interpolating between quantiles, a prediction $g(\boldsymbol{x})$ of the conditional quantiles of $Y$ given $X=\boldsymbol{x}$ can be used to obtain an estimate of the conditional quantile function $F_{Y\mid \boldsymbol{x}}^{-1}$, and (by inversion), an estimate $\hat{F}_{Y\mid \boldsymbol{x}}$ of the conditional c.d.f. of $Y$ given $X=\boldsymbol{x}$. It can then be shown \parencite{Brehmer2021-ie, bracher2021evaluating, duchemin2025efficient} that as $M\rightarrow \infty$, minimizing the WIS with respect to $g(\boldsymbol{x})$ becomes equivalent to minimizing the continuous ranked probability score (CRPS), defined as:
\begin{equation*}
\operatorname{CRPS}(y, \hat{F}_{Y\mid \boldsymbol{x}}) = \int_{-\infty}^{+\infty}(\hat{F}_{Y\mid \boldsymbol{x}}(s)-\mathds{1}\{y\leq s\})^2ds.
\end{equation*}

The CRPS is a proper scoring rule, meaning that it is minimized by the actual conditional c.d.f. $F_{Y\mid \boldsymbol{x}}$ \parencite{waghmare2025proper}. This suggests using multiple quantile regression (on the densest possible quantile grid) to estimate conditional distributions \parencite{koenker2013distributional}, a task for which it would be tempting to use gradient boosting \parencite{sluijterman2025composite}. However, the computational cost of fitting one base learner per quantile and iteration makes implementations of the standard algorithm of little practical use. This is a use case for which PGB could be particularly well suited.

To illustrate the performance of PGB for multiple quantile regression, we fit a PGB model on a simulated dataset according to \textcite{lei2014distribution}, with a target of 20 quantiles regularly spaced on $\interval[open]{0}{1}$. The dataset is the same as in Figure~\ref{fig:dens}. We use a learning rate of 0.02, a subsampling rate of 0.5, and trees of depth 1 (stumps) with at least 5 observations per node. We discretize $X$ in 256 discrete bins, and use 256 random training observations in each leaf and at each iteration to perform the line search. We randomly select 20\% of the observations to serve as validation data for early stopping. We also fit an XGBoost model to the same data and with the same hyperparameters\footnote{Note that the second-order approximation of the loss on which XGBoost usually relies to speed up convergence cannot be used here because the pinball loss has second derivative of zero.}. The L1 and L2 regularization options of XGBoost are not used. The results of the experiment are represented in Figure~\ref{fig:crossing}.

\begin{figure}[!htbp]
\centering
\includegraphics[width=1\linewidth]{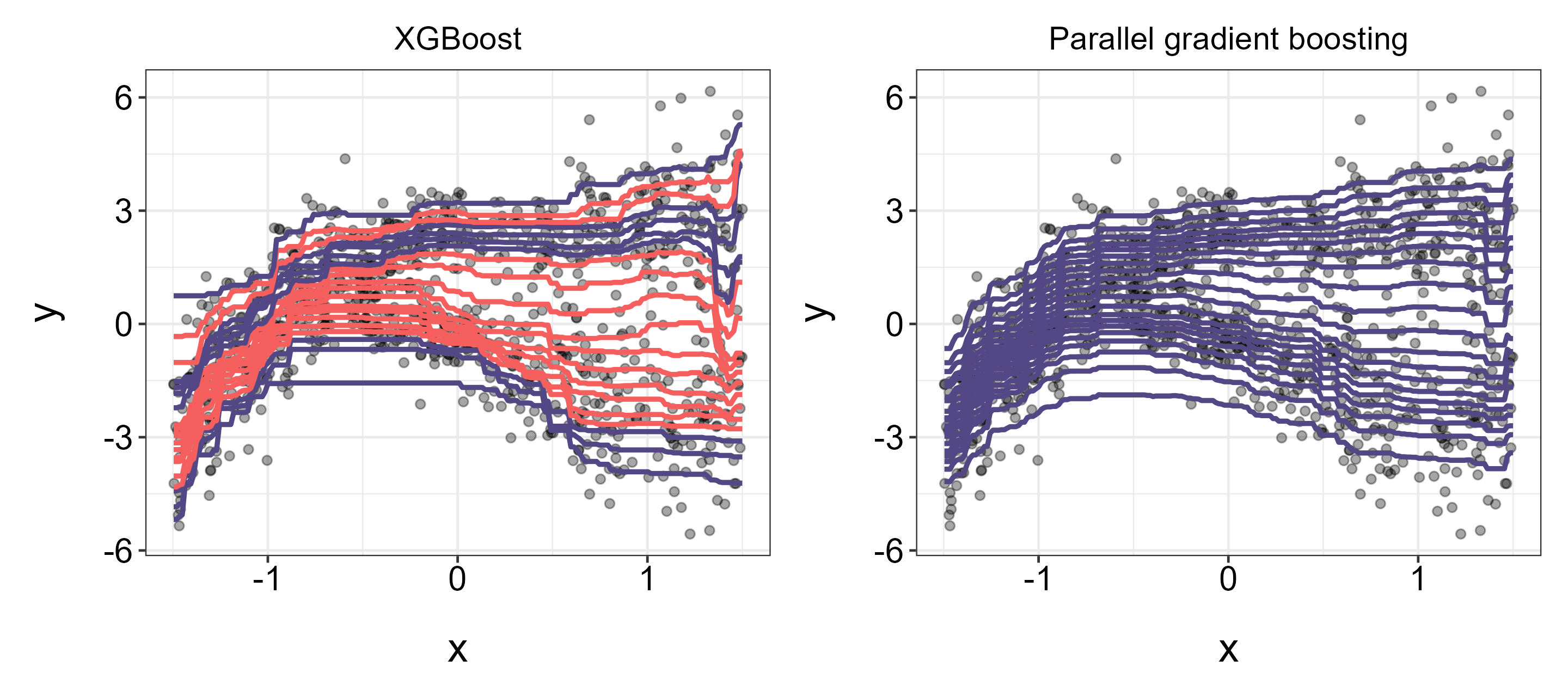}
\caption{\label{fig:crossing}Results of conditional quantile estimation with XGBoost and parallel gradient boosting on a simulated dataset. Quantile curves for which quantile crossing occurs are plotted in red.}
\end{figure}

In multiple quantile regression, quantile crossing is a well-known phenomenon by which $\hat{F}_{Y\mid \boldsymbol{x}}^{-1}(\tau_{m_1})<\hat{F}_{Y\mid \boldsymbol{x}}^{-1}(\tau_{m_2})$ for some $\boldsymbol{x}\in\mathcal{X}$, even though $\tau_{m_1} > \tau_{m_2}$. This needs to be corrected if one wants to estimate the entire conditional quantile function $F_{Y\mid \boldsymbol{x}}^{-1}$, because by definition the latter is increasing \parencite{chernozhukov2010quantile, moon2026monotone}. In Figure~\ref{fig:crossing}, we see that PGB results in no quantile crossing, whereas with XGBoost more than half of the quantiles are affected. We hypothesize that this is due to the fact that predictions for different quantiles share the same splits with PGB, as only one tree is used at each iteration for all of them. A similar observation has been made with at least one other method of boosting-based composite quantile regression \parencite{sluijterman2025composite}. Note that quantile crossing, while rare, can \textit{still} occur with PGB, especially with dense quantile grids.

To investigate the performance of PGB compared to XGBoost on real data, we run a second experiment. In this experiment, we train a PGB model and an XGBoost model to perform multiple quantile regression on the baseball dataset \parencite{he1998bivariate}, which has already been used by \textcite{liu2009stepwise} with a similar aim. More precisely, the task is to predict quantile values of the salaries of baseball players, given all other variables in the dataset. We run the experiment with an increasing number of regularly spaced quantiles. The hyperparameters used are the same as in the previous experiment, except that we boost trees of depth 3 instead of stumps. We randomly split the dataset into two parts of equal size, one of which is used to train the models and the other to estimate the generalization error. Among training observations, 20\% are used as a validation set for early stopping. The procedure is repeated 50 times for each number of quantiles. The results of the experiment are given in Figure~\ref{fig:baseball} in terms of computation times and WIS.

\begin{figure}[!htbp]
\centering
\includegraphics[width=1\linewidth]{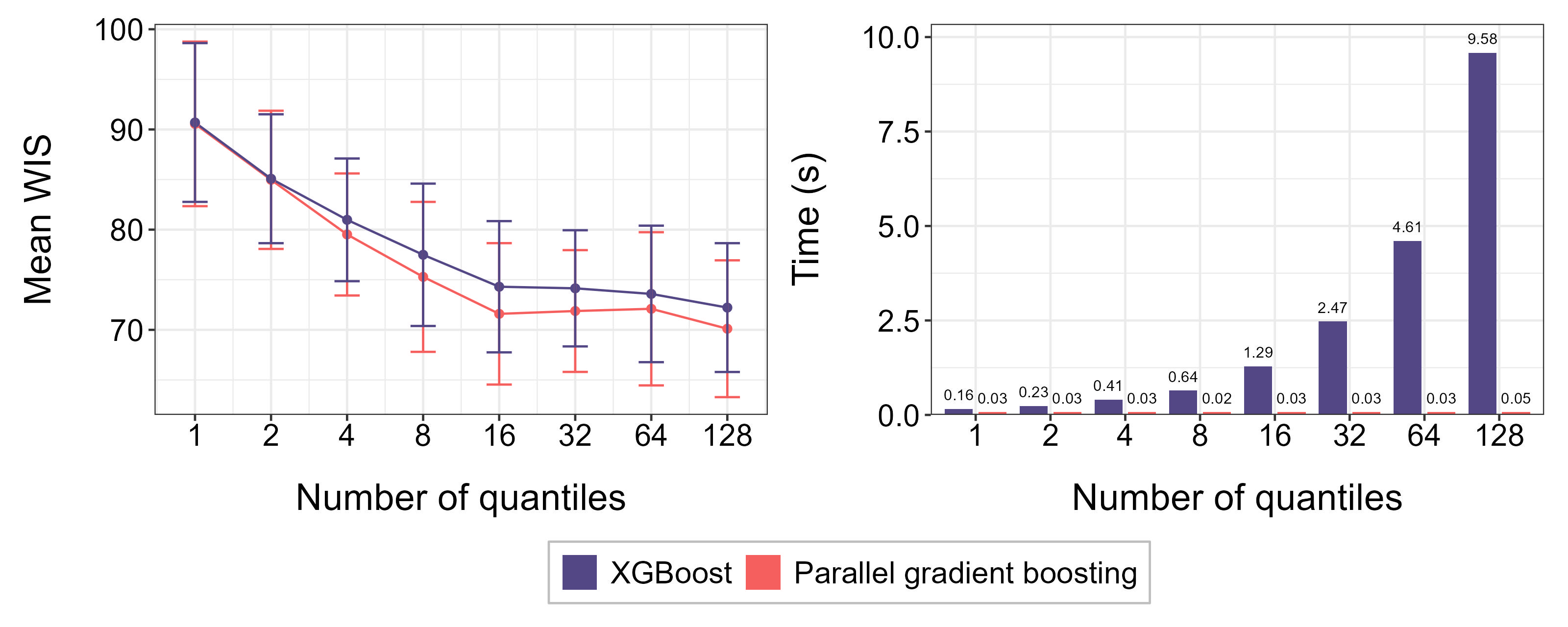}
\caption{\label{fig:baseball}Results of conditional quantile estimation with XGBoost and parallel gradient boosting on the baseball dataset, for different number of target quantiles. Left: mean weighted interval score (WIS). Right: computation times in seconds. The error bars represent one standard deviation.}
\end{figure}

Overall, XGBoost and PGB achieve similar test errors for all grids of quantiles. On the computational level, however, the performance gains of PGB are massive. Computation times vary linearly in the number of quantiles for both methods, but for PGB fitting the base models remains the main contributor to the overall computation times even for large numbers of quantiles. Hence, running the procedure for several tens of quantiles does not take significantly longer than for one quantile. In contrast, training an XGBoost model for 128 quantiles takes more than 50 times longer than training for one. This is especially impactful when one needs to run the procedure several times, for example for hyperparameter tuning.

\subsection{From conditional quantiles to conditional distributions}\label{sec:cqtocd}

Multiple quantile regression provides estimates of the value of the conditional quantile function on a given quantile grid. To derive estimates of the entire quantile function from these, one possibility, which we follow, is to interpolate between the evaluation points. In this case, one must first resolve any quantile crossings to obtain valid estimates of the function. 

Our post-processing approach to deal with quantile crossings is as follows. Once the PGB model is trained, suppose that for some $\boldsymbol{x}\in\mathcal{X}$, and some $m_1$ and $m_2$ in $\{1,\ldots,M\}$ such that $\tau_{m_1} > \tau_{m_2}$ we have $g_{m_1}(\boldsymbol{x})<g_{m_2}(\boldsymbol{x})$, where $g_{m_1}(\boldsymbol{x})$ and $g_{m_2}(\boldsymbol{x})$ represent the $m_1$ and $m_2$ components of $g(\boldsymbol{x})$. To predict the conditional distribution, we work with the isotonic regression operator $\mathbf{IR}$, defined by:
\begin{equation*}
\mathbf{IR}g(\boldsymbol{x}) = \argmin_{\substack{\boldsymbol{q}\in\mathbb{R}^M \\ q_1 \leq \ldots \leq q_M }} \Vert g(\boldsymbol{x})-\boldsymbol{q} \Vert_2^2.
\end{equation*}
That is, we use the optimal projection of the original prediction on the set of ordered quantile values in the sense of the Euclidean norm. There already exist efficient algorithms to solve such problems \parencite{pardalos1999algorithms}. Isotonic regression has already been applied to multiple quantile regression \parencite{jokiel2024estimation}, and in this setting, \textcite{fakoor2023flexible} proved that it can only improve the WIS. 

The most natural way of interpolating between appropriately processed quantiles is perhaps by linear interpolation. By inverting the estimated conditional quantile function, one obtains a piecewise-linear estimate of the conditional distribution function. Note that when $\tau_1 \neq 0$ and/or $\tau_M \neq 1$, as is the case when minimizing the pinball loss, the interpolated c.d.f. is discontinuous, hence it does not naturally give rise to a Lebesgue density function. However, one can map every such c.d.f. to a valid Lebesgue p.d.f., with an arbitrarily small bias as $M$ increases. Starting from an interpolated c.d.f. $\hat{F}_{\operatorname{raw}}$ with knots $\tau_1,\ldots,\tau_M$ (we omit the subscript $Y \mid \boldsymbol{x}$ for clarity, although this discussion applies to conditional c.d.f.'s), we define its scaled version as:
\begin{equation*}
   \hat{F}_{\operatorname{scaled}}(y) =\begin{cases}
\hat{F}_{\operatorname{raw}}(y)  & \text{if }\hat{F}_{\operatorname{raw}}(y)\in\{0,1\},\\
\frac{M+1}{M-1}(\hat{F}_{\operatorname{raw}}(y)-\frac{1}{M+1}) & \text{otherwise}.
\end{cases}
\end{equation*}
It is easy to verify that $\hat{F}_{\operatorname{scaled}}$ is absolutely continuous if the processed quantile values are strictly increasing. The resulting p.d.f. is used as a conditional p.d.f. estimate associated with the multiple quantile regression performed. Unfortunately, this conditional p.d.f. is piecewise-constant, which is generally undesirable, as it is often assumed that the true function is smooth. Additionally, it is supported only on a compact subset of $\mathbb{R}$, hence not modeling the tails of the distribution and increasing bias. Figure~\ref{fig:interpol} illustrates the relationship between estimated conditional quantiles, conditional c.d.f.'s and conditional p.d.f.'s obtained with this strategy.

\begin{figure}[!htbp]
\centering
\includegraphics[width=1\linewidth]{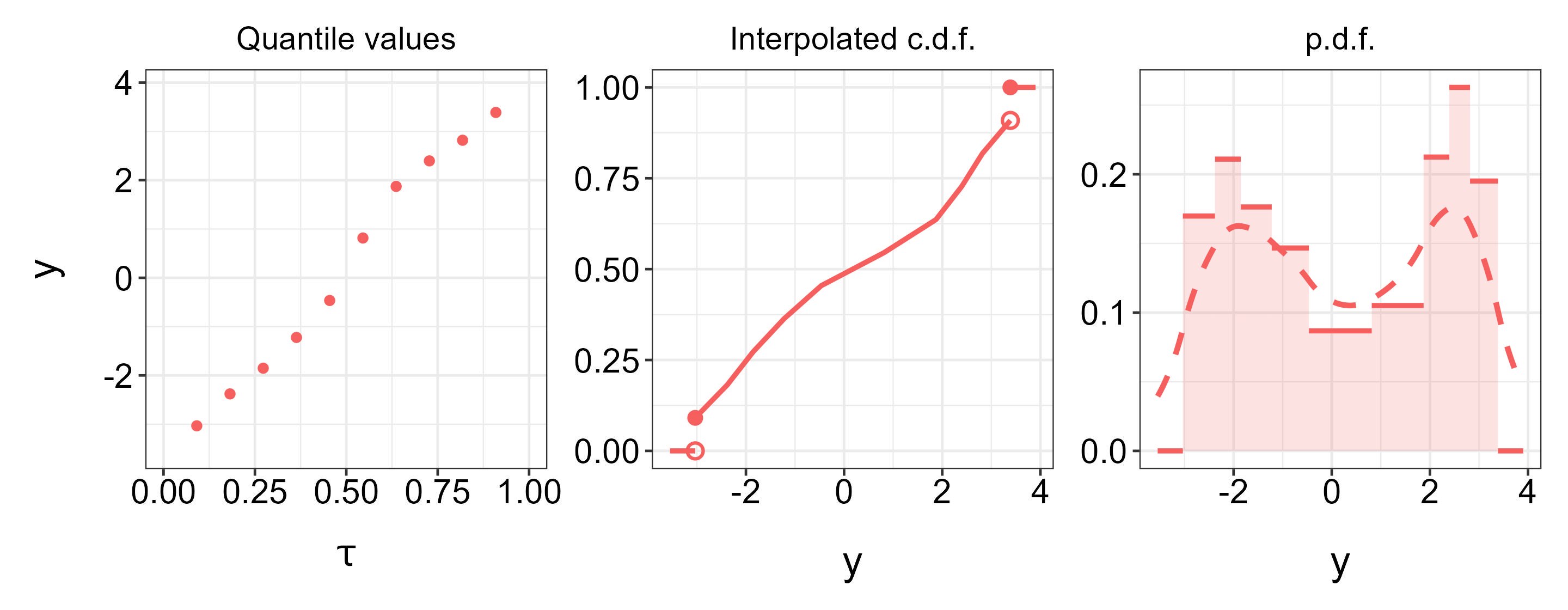}
\caption{\label{fig:interpol}Illustration of the linear interpolation between quantiles, and the resulting estimate of the conditional p.d.f. The smooth, dashed estimate on the right panel is that obtained by smoothing the piecewise-constant one using the method described in Section~\ref{sec:cqtocd}. The estimation is that of $F_{Y\mid 0.77}$, with the data of Figure~\ref{fig:dens} and a grid of 10 quantiles.}
\end{figure}

To obtain smooth estimates of conditional p.d.f.'s supported on the whole real line, our approach is to convolve the initial piecewise-constant estimate of the conditional p.d.f. with a smoothing kernel. In practice, we use a Laplacian kernel, defined by:
\begin{equation*}
K_\lambda(y,y')=\frac{\lambda}{2}\exp(-\lambda\vert y-y'\vert)
\end{equation*}
for each $(y,y')\in\mathbb{R}^2$, where $\lambda$ is a smoothing parameter known as the bandwidth of the kernel. The use of this kernel rather than more traditional ones like the Gaussian kernel is to simplify computations of the convolution. We use the same bandwidth for every value of the covariates $X$. This does not preserve the quantile values on the grid that has been used to perform multiple quantile regression, but instead provides some regularization in the $Y$ direction. For this reason, we consider the bandwidth of the kernel as a crucial hyperparameter, and choose its value by cross-validation to minimize the logarithmic score \parencite{waghmare2025proper}. For an estimate $\hat{f}_{Y\mid \boldsymbol{x}}$ of the conditional p.d.f. of $Y$ given $X=\boldsymbol{x}$ (seen as a function of $\boldsymbol{x}$), it is defined as:
\begin{equation*}
\operatorname{LogS}(y,\hat{f}_{Y\mid \boldsymbol{x}})=-\log(\hat{f}_{Y\mid \boldsymbol{x}}(y)).
\end{equation*}

The logarithmic score is strongly tied to the likelihood function for parametric models. As such, it is computationally affordable to determine its optimal value on out-of-fold data, even for large values of $M$. In practice, our estimator uses a regular grid of $M=\max(20, 2\times\sqrt{n})$ (rounded to the nearest integer) quantiles. Our hypothesis is that this rule-of-thumb is enough to ensure that the approximation error (due to interpolating between quantile values) is dominated by the estimation error of the learning algorithm. As the latter has not been studied with PGB, we do not attempt to prove the theoretical soundness of choosing such number of quantiles. We instead provide experimental results supporting this choice in  Appendix~\ref{sec:ntargets}.

\section{Experiments}\label{sec:experiments}

In this section, we report several experiments on synthetic as well as real datasets in order to assess the relative performance of PGB to estimate conditional distributions. PGB is compared to 3 other nonparametric or semiparametric estimators, namely:
\begin{itemize}
    \item Distributional random forests (DRF; \cite{cevid2022distributional}). This is a forest-based estimator inspired by quantile regression forests \parencite{meinshausen2006quantile}, but with a maximum mean discrepancy splitting criterion which is also suitable for multivariate distributions. The native algorithm provides stepwise-constant estimates of the conditional c.d.f.'s, where the steps correspond to the values of the dependent variable observed in the training sample. In the experiments, we use a smoothed version of these estimates using a Gaussian kernel, with a bandwidth determined by maximizing the out-of-bag logarithmic score. We use 5000 trees and leave the other hyperparameters at their default value.

    \item FlexCode \parencite{Izbicki2017-lo}. This estimator involves estimating the coefficients of an orthogonal series expansion of the unknown density function, thereby framing the conditional density estimation problem as a multiple regression one. Here we use the Lasso \parencite{tibshirani1996regression} version of the estimator with up to 25 Fourier basis functions, and other hyperparameters determined on a validation set of 20\% randomly selected observations.

    \item LinCDE \parencite{gao2022lincde}. This method is based on multiple Poisson regressions, which are performed using specialized binary trees. The regression results are combined sequentially in a boosting-like fashion. The algorithm admits many hyperparameters. In our experiments, we use trees of depth $\min(d,3)$, with $d$ being the dimension of the feature set, and leave the other hyperparameters at their default value (including the early stopping parameter set to 0.2).
\end{itemize}
The hyperparameters used for PGB are the same as in the experiments of Section~\ref{sec:mqr}, except for the depth of trees which is chosen as $\min(d,3)$. The number of boosting iterations is determined by 5-fold cross-validation. Figure~\ref{fig:denscomp} gives an example of a predicted conditional density by each of these methods based on the data of Figure~\ref{fig:dens}.

\begin{figure}[!htbp]
\centering
\includegraphics[width=0.7\linewidth]{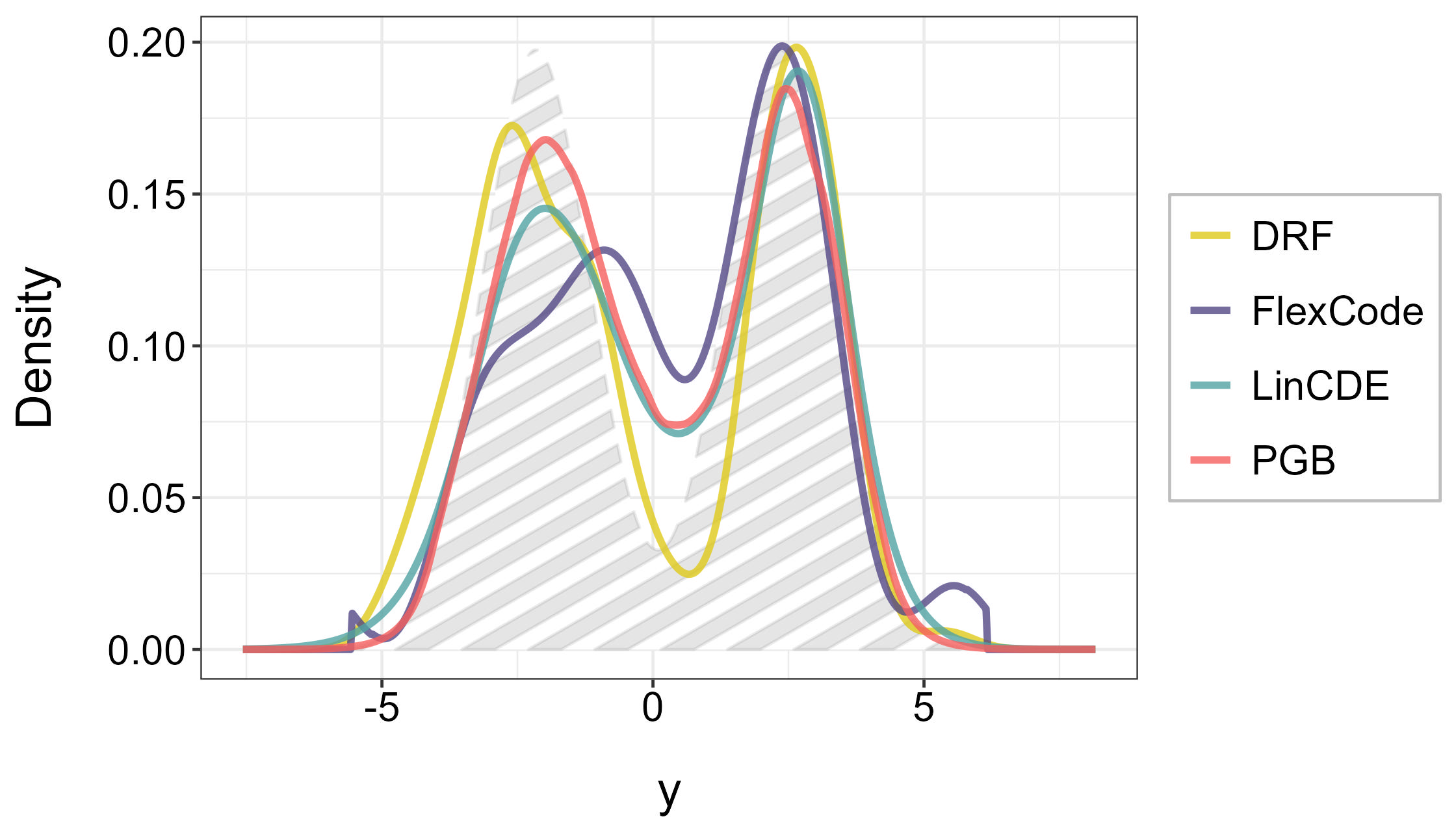}
\caption{\label{fig:denscomp}Prediction of the conditional density of $Y$ given $X=0.77$ by each of the candidate methods, with the data of Figure~\ref{fig:dens}. The dashed area corresponds to the real conditional density (the oracle).}
\end{figure}

The code for all experiments presented in this paper was written in R, and is available at \url{https://github.com/rchapelle/PGB}. It was executed on a standard personal computer, without parallelization to make the computation times easier to interpret. The corresponding software package was used for each candidate estimator. In these implementations, DRF, FlexCode and LinCDE do not support missing values or categorical predictors. The former are therefore imputed by the mean (for numeric variables) or mode (for categorical ones), while the latter are binarized into indicator variables.

\subsection{Performance on simulated data}\label{sec:expsim}

In this experiment, we draw 100 realizations from known distributions with an increasing number of covariates. This modest number of observations is used because it makes the estimation of conditional distributions more challenging, as it worsens the signal-to-noise ratio. An independent test set of 100 observations from the same distribution is used to estimate the generalization error. Each experiment is repeated 50 times. We report the mean test error and computation times for each method. The latter comprise the time needed for training (including hyperparameter tuning) and that of prediction on the test set. The loss function used to evaluate the predictions of the candidate methods is the integrated squared error (ISE), given by:
\begin{equation}\label{eq:ise}
\operatorname{ISE}(\hat{f}_{Y\mid\cdot}) = \iint(\hat{f}_{Y\mid \boldsymbol{x}}(y)-f_{Y\mid \boldsymbol{x}}(y))^2dydF_X(\boldsymbol{x})
\end{equation}
where $f_{Y\mid \boldsymbol{x}}$ is the conditional p.d.f. of $Y$ given $X=\boldsymbol{x}$, and $\hat{f}_{Y\mid \boldsymbol{x}}$ is the corresponding estimate. For a fixed value $\boldsymbol{x}$ of $X$, we approximate the inner integral $ \int(\hat{f}_{Y\mid \boldsymbol{x}}(y)-f_{Y\mid \boldsymbol{x}}(y))^2dy $ on a grid of 500 points by trapezoidal integration. In this experiment, we consider the following data generating processes, which are inspired by \textcite{Izbicki2017-lo}:
\begin{itemize}
    \item \textbf{Irrelevant covariates.} Here, we let
    \begin{equation*}
    X=(X_1,\ldots,X_d)\sim\mathcal{N}((1,\ldots,1),0.25\mathbf{I}_d)
    \end{equation*}
    and $Y\mid \boldsymbol{x} \sim \mathcal{N}(x_1,(0.25\lvert x_2\rvert+0.1)^2)$. Hence only the first two components of $X$ influence $Y$, respectively through its conditional mean and variance.
    \item \textbf{Data on manifold.} We introduce two parameters $B_1$ and $B_2$ each lying on the ${d-1}$-sphere, whose values are determined randomly before each replication of the experiment as follows. First, $B_1$ is sampled uniformly on the ${d-1}$-sphere. Then, $B_2$ is sampled uniformly among unit $d$-vectors orthogonal to $B_1$. We also introduce a latent random variable $Z\sim\mathcal{U}(\interval{0}{2\pi})$. The covariates $X$ are given by $X=\cos(Z)B_1+\sin(Z)B_2$, and we let $Y\mid z\sim\mathcal{N}(z,(\lvert \cos(z)+\sin(z) \rvert +0.1)^2)$. Hence $Y$ depends on $X$ only through a one-dimensional latent variable.
    \item \textbf{Non-sparse data.} Here, we let $X\sim\mathcal{N}((0,\ldots,0),\mathbf{I}_d)$ and let
    \begin{equation*}
    Y\mid \boldsymbol{x} \sim \mathcal{N}(\frac{1}{d}\sum_{j=1}^dx_j,(0.1+\frac{1}{d}\sum_{j=1}^d\lvert x_j \rvert)^2).
    \end{equation*}
    In this setting, all covariates are equally important to predict $Y$.
\end{itemize}

The results of this experiment are given in Figure~\ref{fig:simres}. As expected, the performance of all methods tends to decrease in the irrelevant covariates setting when the dimensionality of the features increases, whereas it improves in the non-sparse setting. With data on manifold, all methods maintain their performance when $d$ increases. This is unsurprising, because all of the underlying learning methods (namely, ensemble learning and the Lasso) are designed to adapt to the intrinsic dimensionality of the data \parencite[Chapter~16]{hastie2009elements}. Regarding the relative performances of the methods in these three settings, we see that PGB and LinCDE perform significantly better than DRF and FlexCode in the presence of irrelevant covariates and with data on manifold. This relates to the properties of boosting, as opposed to e.g. random forests in such settings, which have been thoroughly described in the literature \parencites[Chapter~15]{hastie2009elements}{elith2008working, revelas2025random}. In contrast, PGB and DRF show better performance than the two other methods in the non-sparse setting. Hence PGB is competitive (and even best-performing) in each scenario, which illustrates its flexibility. As for computation times, they are significantly lower in this experiment with PGB compared to the other three methods. With PGB, there are clear differences in the relationship between computation times and the dimensionality of the covariates depending on the type of data generating process. This is because depending on the setting, the algorithm may use more trees as $d$ increases, which reflects its adaptivity to the complexity of the data at hand.

\begin{figure}[!htbp]
\centering
\includegraphics[width=1\linewidth]{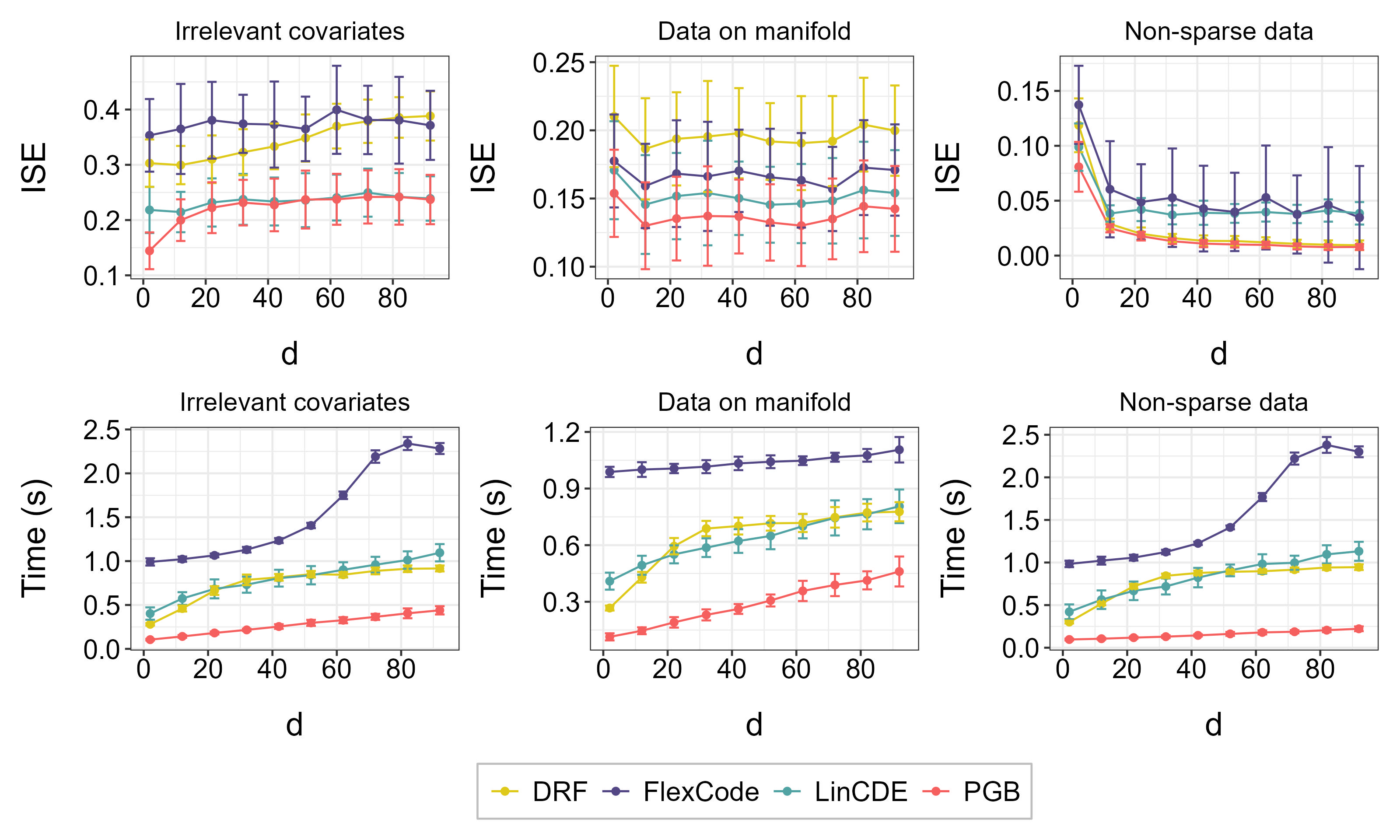}
\caption{\label{fig:simres}Results of conditional density estimation on simulated datasets with an increasing number of covariates. The error bars represent one standard deviation.}
\end{figure}

\subsection{Importance scores}

When predicting probability distributions, it is often of interest to know which covariates have the most influence on the predictions. This can help refine the model, improve its interpretability, and provide valuable insights in data mining settings. For that, one usually attributes an importance score to each variable on which the model is fitted. For tree-based methods, various heuristics can be used to compute such scores, based e.g. on the gains in the splitting criterion \parencites[Chapter~5]{breiman2017classification}{strobl2007bias}, random permutations \parencite{kaneko2022cross}, or Shapley values \parencite{lundberg2017unified}. Here, we consider a variant close to the Leave Out COvariates (LOCO) method \parencite{lei2018distribution}, which we apply to PGB to evaluate its ability to identify the most important components of $\boldsymbol{x}$ to predict the distribution of $Y$. More specifically, we generate 2000 observations from the \enquote{irrelevant covariates} and \enquote{non-sparse data} models presented in Section~\ref{sec:expsim}, with $d=15$ in both cases. We train a PGB model on these data, which we denote $\hat{f}_{Y\mid \cdot}$, as well as on the 15 datasets obtained by removing one column from the feature matrix. We denote these 15 models by $\hat{f}_{Y\mid \cdot}^{(-j)}$, for $j\in\{1,\ldots,15\}$. The importance score for feature $j$ is given by the integrated squared difference (ISD) between $\hat{f}_{Y\mid \cdot}$ and $\hat{f}_{Y\mid \cdot}^{(-j)}$, defined analogously to Equation~\eqref{eq:ise} as:
\begin{equation*}
\operatorname{ISD}(\hat{f}_{Y\mid\cdot},\hat{f}_{Y\mid \cdot}^{(-j)}) = \iint(\hat{f}_{Y\mid \boldsymbol{x}}(y)-\hat{f}_{Y\mid \boldsymbol{x}}^{(-j)}(y))^2dydF_X(\boldsymbol{x}).
\end{equation*}
The ISD is approximated as the ISE in the previous experiment (see Section~\ref{sec:expsim}). We follow the same process to obtain importance scores from DRF models. The results of this experiment are presented in Figure~\ref{fig:importance}.

\begin{figure}[!htbp]
\centering
\includegraphics[width=0.85\linewidth]{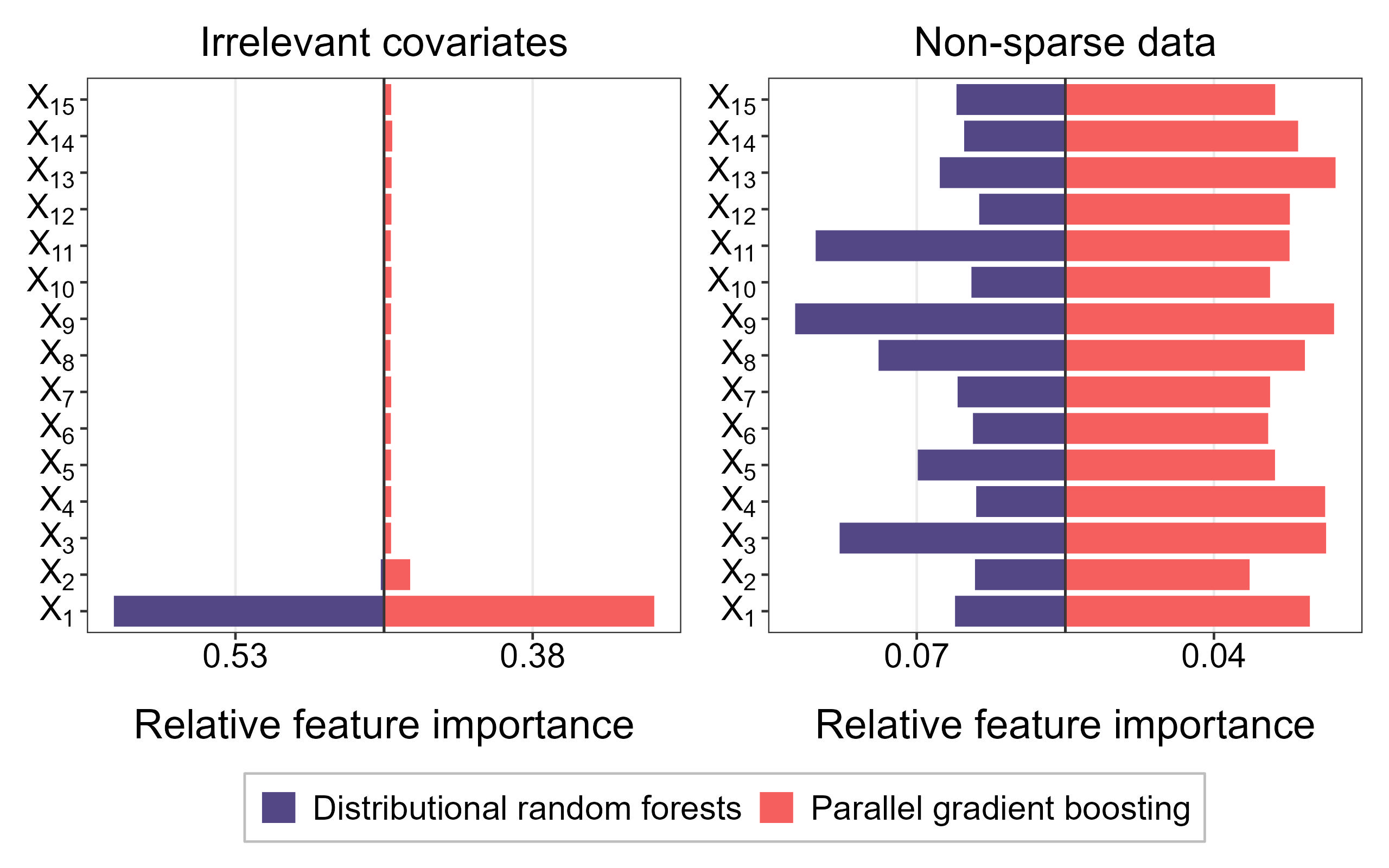}
\caption{\label{fig:importance}Importance scores for data simulated according to the irrelevant covariates (left) and non-sparse data (right) models. The scores are normalized to sum to one.}
\end{figure}

In the irrelevant covariates setting, $X_1$ is identified as the most important predictor for both PGB and DRF. However, only with PGB is $X_2$ identified as contributing significantly more than the other features to the prediction. This suggests that PGB, contrary to DRF, effectively captures the dependence of $Y$ on $X$ beyond the conditional mean. With non-sparse data, both methods attribute similar importance scores to all features, but the scores are more homogeneous in the case of PGB, which is desirable in this case.

\subsection{Performance on real data}

We now evaluate the performance of PGB compared to the competing estimators on real (medical) datasets, that were chosen for their diversity in terms of dimensionality, structure of predictors and distributions of the response variable. The characteristics of these datasets are given in Table~\ref{tab:data}. In order to conduct the experiment, they underwent a few pre-processing steps, including the removal of observations with missing values for the dependent variable, re-coding of some features, removal of constant or nearly constant features, and deletion of discrete components in numeric features, if applicable. The dependent variables were also standardized by their mean and variance to facilitate comparisons of the prediction errors. All of this explains slight differences with other versions of these datasets that can be found in public repositories.

\begin{table}[!htbp]
  \centering
  \small
  \begin{tabular}{|>{\centering\arraybackslash}m{3.85cm}|c|c|c|c|>{\centering\arraybackslash}m{2.3cm}|}
    \hline
    \textbf{Name (source)} & $\boldsymbol{n}$ & $\boldsymbol{d_{\operatorname{num}}}$ & $\boldsymbol{d_{\operatorname{cat}}}$ & $\boldsymbol{p_{\operatorname{missing}}}$ & $\boldsymbol{y}$ \\ \hline
    \textbf{Prostate}\par\parencite{friendly2020package} & 97 & 8 & 1 & 0\% & log PSA \\ \hline
    \textbf{Supraclavicular}\par\parencite{medicaldata2022} & 102 & 12 & 4 & 0.4\% & log onset sensory \\ \hline
    \textbf{Blood storage}\par\parencite{medicaldata2022} & 315 & 5 & 14 & 0.6\% & log time to recurrence \\ \hline
    \textbf{Alzheimer's disease}\par\parencite{applied2018} & 333 & 128 & 3 & 0\% & A$\beta$42 \\ \hline
    \textbf{Bone density}\par\parencite{loondata2025} & 485 & 2 & 2 & 0.1\% & Relative spinal bone mineral density \\ \hline
    \textbf{Obstetrics}\par\parencite{medicaldata2022} & 823 & 108 & 58 & 20.8\% & GA at outcome \\ \hline
    \textbf{COVID}\par\parencite{medicaldata2022} & 870 & 5 & 8 & 8.4\% & CT result \\ \hline
    \textbf{Nutrition}\par\parencite{causaldata2024} & 1566 & 25 & 36 & 6.2\% & Weight change \\ \hline
    \textbf{Peak VO2}\par\parencite{rfsrc2025} & 2231 & 14 & 27 & 0\% & Peak VO2 \\ \hline
    \textbf{Parkinson}\par\parencite{Athanasios_Tsanas2009-gr} & 5875 & 20 & 1 & 0\% & Total UPDRS \\ \hline
  \end{tabular}
  \caption{Characteristics of the medical datasets used in this work. In the table, $n$ refers to the number of observations, $d_{\operatorname{num}}$ to the number of numeric variables, $d_{\operatorname{cat}}$ to the number of categorical variables, $p_{\operatorname{missing}}$ to the proportion of missing values, and $y$ to the name of the dependent variable chosen for each dataset.}
  \label{tab:data}
\end{table}

In this experiment, we randomly split each dataset into a training set and a test set of the same size. The training set is used to train the models (including the optimization of the hyperparameters). The test set is used to estimate the performance of each model in terms of expected CRPS. The experiment is run 50 times for each dataset and method, and the results are combined to obtain mean test errors and computation times, as well as the corresponding standard deviations. The results of the experiment are given in Tables~\ref{tab:rescrps} and \ref{tab:restime}.

\begin{table}[!htbp]
  \centering
  \small
    \begin{tabular}{|c|c|c|c|c|}
    \hline
    \textbf{Dataset} & \textbf{DRF} & \textbf{FlexCode} & \textbf{LinCDE} & \textbf{PGB} \\ \hline
    Prostate
    & \makecell{0.561\\(0.042)}
    & \makecell{0.468\\(0.067)}
    & \cellcolor{yellow!15}\colorbox{yellow!15}{\makecell{0.454\\(0.045)}}
    & \cellcolor{yellow!40}\colorbox{yellow!40}{\makecell{\textbf{0.400}\\\textbf{(0.040)}}} \\ \hline
    Supraclavicular
    & \makecell{0.568\\(0.036)}
    & \makecell{0.580\\(0.049)}
    & \cellcolor{yellow!40}\colorbox{yellow!40}{\makecell{\textbf{0.516}\\\textbf{(0.043)}}}
    & \cellcolor{yellow!15}\colorbox{yellow!15}{\makecell{0.521\\(0.040)}} \\ \hline
    Blood storage
    & \cellcolor{yellow!15}\colorbox{yellow!15}{\makecell{0.554\\(0.021)}}
    & \makecell{0.569\\(0.029)}
    & \cellcolor{yellow!40}\colorbox{yellow!40}{\makecell{\textbf{0.552}\\\textbf{(0.026)}}}
    & \makecell{0.559\\(0.023)} \\ \hline
    Alzheimer's disease
    & \makecell{0.548\\(0.019)}
    & \cellcolor{yellow!40}\colorbox{yellow!40}{\makecell{\textbf{0.486}\\\textbf{(0.029)}}}
    & \makecell{0.502\\(0.024)}
    & \cellcolor{yellow!15}\colorbox{yellow!15}{\makecell{0.500\\(0.024)}} \\ \hline
    Bone density
    & \cellcolor{yellow!15}\colorbox{yellow!15}{\makecell{0.430\\(0.019)}}
    & \makecell{0.460\\(0.021)}
    & \makecell{0.435\\(0.021)}
    & \cellcolor{yellow!40}\colorbox{yellow!40}{\makecell{\textbf{0.427}\\\textbf{(0.021)}}} \\ \hline
    Obstetrics
    & \makecell{0.247\\(0.021)}
    & \cellcolor{yellow!15}\colorbox{yellow!15}{\makecell{0.185\\(0.007)}}
    & \makecell{0.190\\(0.013)}
    & \cellcolor{yellow!40}\colorbox{yellow!40}{\makecell{\textbf{0.174}\\\textbf{(0.007)}}} \\ \hline
    COVID
    & \cellcolor{yellow!40}\colorbox{yellow!40}{\makecell{\textbf{0.553}\\\textbf{(0.009)}}}
    & \makecell{0.556\\(0.016)}
    & \makecell{0.557\\(0.012)}
    & \cellcolor{yellow!15}\colorbox{yellow!15}{\makecell{0.555\\(0.011)}} \\ \hline
    Nutrition
    & \makecell{0.511\\(0.016)}
    & \cellcolor{yellow!40}\colorbox{yellow!40}{\makecell{\textbf{0.508}\\\textbf{(0.016)}}}
    & \makecell{0.509\\(0.017)}
    & \cellcolor{yellow!15}\colorbox{yellow!15}{\makecell{0.509\\(0.015)}}\\ \hline
    Peak VO2
    & \cellcolor{yellow!15}\colorbox{yellow!15}{\makecell{0.267\\(0.005)}}
    & \makecell{0.347\\(0.006)}
    & \makecell{0.268\\(0.005)}
    & \cellcolor{yellow!40}\colorbox{yellow!40}{\makecell{\textbf{0.242}\\\textbf{(0.006)}}} \\ \hline
    Parkinson
    & \cellcolor{yellow!15}\colorbox{yellow!15}{\makecell{0.075\\(0.002)}}
    & \makecell{0.291\\(0.006)}
    & \makecell{0.171\\(0.013)}
    & \cellcolor{yellow!40}\colorbox{yellow!40}{\makecell{\textbf{0.055}\\\textbf{(0.001)}}} \\ \hline
    \end{tabular}
  \caption{Test errors of the competing estimation methods applied to the datasets listed in Table~\ref{tab:data}. The loss function used is the continuous ranked probability score. Numbers in parentheses are the standard deviations derived from repeated trials of the experiment.}
  \label{tab:rescrps}
\end{table}

\begin{table}[!htbp]
  \centering
  \small
    \begin{tabular}{|c|c|c|c|c|}
    \hline
    \textbf{Dataset} & \textbf{DRF} & \textbf{FlexCode} & \textbf{LinCDE} & \textbf{PGB} \\ \hline
    Prostate
    & \cellcolor{yellow!15}\colorbox{yellow!15}{\makecell{0.067\\(0.010)}}
    & \makecell{0.948\\(0.011)}
    & \makecell{0.255\\(0.035)}
    & \cellcolor{yellow!40}\colorbox{yellow!40}{\makecell{\textbf{0.059}\\\textbf{(0.013)}}} \\ \hline
    Supraclavicular
    & \cellcolor{yellow!15}\colorbox{yellow!15}{\makecell{0.070\\(0.010)}}
    & \makecell{1.011\\(0.016)}
    & \makecell{0.269\\(0.045)}
    & \cellcolor{yellow!40}\colorbox{yellow!40}{\makecell{\textbf{0.069}\\\textbf{(0.011)}}} \\ \hline
    Blood storage
    & \makecell{1.947\\(0.066)}
    & \makecell{1.211\\(0.041)}
    & \cellcolor{yellow!15}\colorbox{yellow!15}{\makecell{0.530\\(0.137)}}
    & \cellcolor{yellow!40}\colorbox{yellow!40}{\makecell{\textbf{0.184}\\\textbf{(0.015)}}} \\ \hline
    Alzheimer's disease
    & \makecell{2.531\\(0.050)}
    & \makecell{7.280\\(0.287)}
    & \cellcolor{yellow!15}\colorbox{yellow!15}{\makecell{1.728\\(0.109)}}
    & \cellcolor{yellow!40}\colorbox{yellow!40}{\makecell{\textbf{0.624}\\\textbf{(0.046)}}} \\ \hline
    Bone density
    & \makecell{1.393\\(0.014)}
    & \makecell{1.272\\(0.051)}
    & \cellcolor{yellow!15}\colorbox{yellow!15}{\makecell{0.503\\(0.050)}}
    & \cellcolor{yellow!40}\colorbox{yellow!40}{\makecell{\textbf{0.350}\\\textbf{(0.016)}}} \\ \hline
    Obstetrics
    & \makecell{13.591\\(0.213)}
    & \makecell{15.794\\(1.360)}
    & \cellcolor{yellow!15}\colorbox{yellow!15}{\makecell{8.353\\(0.507)}}
    & \cellcolor{yellow!40}\colorbox{yellow!40}{\makecell{\textbf{2.846}\\\textbf{(0.237)}}} \\ \hline
    COVID
    & \makecell{9.243\\(0.187)}
    & \makecell{1.711\\(0.063)}
    & \cellcolor{yellow!15}\colorbox{yellow!15}{\makecell{1.009\\(0.190)}}
    & \cellcolor{yellow!40}\colorbox{yellow!40}{\makecell{\textbf{0.806}\\\textbf{(0.030)}}} \\ \hline
    Nutrition
    & \makecell{25.068\\(0.430)}
    & \cellcolor{yellow!15}\colorbox{yellow!15}{\makecell{3.092\\(0.162)}}
    & \makecell{5.315\\(0.514)}
    & \cellcolor{yellow!40}\colorbox{yellow!40}{\makecell{\textbf{2.162}\\\textbf{(0.064)}}} \\ \hline
    Peak VO2
    & \makecell{31.719\\(0.241)}
    & \cellcolor{yellow!40}\colorbox{yellow!40}{\makecell{\textbf{3.053}\\\textbf{(0.150)}}}
    & \makecell{6.637\\(0.357)}
    & \cellcolor{yellow!15}\colorbox{yellow!15}{\makecell{4.216\\(0.129)}} \\ \hline
    Parkinson
    & \makecell{85.988\\(0.681)}
    & \cellcolor{yellow!40}\colorbox{yellow!40}{\makecell{\textbf{6.423}\\\textbf{(0.099)}}}
    & \cellcolor{yellow!15}\colorbox{yellow!15}{\makecell{15.239\\(0.172)}}
    & \makecell{46.428\\(3.422)} \\ \hline
    \end{tabular}
  \caption{Computation times (in seconds) of the competing estimation methods applied to the datasets listed in Table~\ref{tab:data}. Numbers in parentheses are the standard deviations derived from repeated trials of the experiment.}
  \label{tab:restime}
\end{table}

Overall, PGB performs better than the competing methods in this experiment, either in terms of test error or computation times. More specifically, PGB provides the best or second best estimates for 9 out of 10 datasets, and is also the fastest or second fastest method for 9 out of 10 datasets. These datasets are diverse in terms of number of observations, dimensionality, and patterns of missing data, but PGB was used with the same hyperparameters and early stopping strategy in each case. This suggests that PGB can offer competitive performance without requiring extensive hyperparameter tuning. In many situations, using the same default hyperparameters as in this experiment should therefore provide satisfactory results. With large datasets, it is even possible that giving up the cross-validation strategy in favor of the more classical train-validation split would yield similar test errors but significantly lower training times. We do not explore this approach here, the strategy to tune the hyperparameters being kept unique in order to simplify the interpretation of the experiments.

\section{Discussion}\label{sec:discussion}

In this paper, we introduce PGB, a general approach for efficient multi-output gradient boosting that does not require specialized base learners and can therefore be easily integrated into any existing boosting framework. The approach finds a natural application in estimating conditional distributions via multiple quantile regression. In this setting, it is amenable to implementations that provide high performance in both the quality of the estimates and computation times. Its nonparametric nature and natural handling of mixed and missing covariates allow it to be applied to any tabular dataset. The flexibility of the method is illustrated by its competitive performance in various data generating scenarios, ranging from irrelevant features to dense covariates.

Compared to other nonparametric and semiparametric estimators of conditional distributions, PGB often performs better. Its relative performance compared to historical estimation methods, such as kernel-based ones \parencite{rosenblatt1969conditional}, or newer methods such as generative ones \parencite{reisach2025transforming, chin2026generative}, is not evaluated in this paper. The reason is that, in our view, such methods do not meet the requirements of an \enquote{off-the-shelf} procedure, which is the natural positioning of PGB in the statistical toolbox. For example, kernel conditional density estimators are hardly scalable to datasets of moderate to large size \parencite{zhao2025adaptive}. Other methods admit many hyperparameters that may be hard to tune. We therefore do not consider these methods to be relevant competitors to PGB for conditional distribution estimation.

A natural line of future research on PGB is to study its convergence rate and Bayes-risk consistency, as was done more than 20 years ago for single-output gradient boosting \parencite{lugosi2004bayes}. A related open question is what properties the regression targets should possess for the algorithm to perform well. In multiple quantile regression, for example, the targets are related through a common conditional probability distribution, which allows information to be shared between quantile levels. Closely related to this phenomenon is the fact that, in our implementation, using denser grids of quantiles seldom changes the output of the algorithm beyond a certain point. Understanding the conditions under which this property holds is important to find new applications of PGB, especially for more general multi-task learning problems. Conversely, it would be desirable to derive variants of the algorithm with better properties, either in terms of generalization, convergence rates, or adaptivity to other types of task relatedness. An especially tempting line of research in this regard would be to study the application of PGB to multiclass classification, and, if necessary, adapt the algorithm to better suit this class of problems. A natural starting point for that could be to adapt existing proposals that are specific to decision tree boosting \parencite{joly2019gradient, iosipoi2022sketchboost}, and study their theoretical properties within the more general framework of PGB.

\newpage
\begin{appendices}

\section{Proof of theoretical results}\label{sec:proofs}

\subsection{Proof of Theorem~\ref{th:gber}}

To prove Theorem~\ref{th:gber}, we will need intermediate technical results, which are given hereafter along with their proofs. In all of these, we denote
\begin{equation*}
\kappa_t=-\sum_{i=1}^n\langle\nabla_2L_i(g_{t-1}),\varphi_{\boldsymbol{\theta}_t}(\boldsymbol{x}_i)\rangle
\end{equation*}
for $t\geq1$, and let $K$ be the smoothness constant mentioned in Assumption~\ref{assum:smooth}. In all of the following lemmas, we grant the same assumptions as in Theorem~\ref{th:gber}.

\begin{lemma}\label{lemma:extract}
Under the choice of step sizes stated in Theorem~\ref{th:gber}, there exists a strictly increasing extraction function $\sigma:\mathbb{N}\rightarrow\mathbb{N}$ such that $b_{\sigma(s)+1}=\frac{\kappa_{\sigma(s)+1}}{K}$ for all $s\in\mathbb{N}$, and $\lim_{T\rightarrow\infty}\sum_{s=1}^T\frac{1}{\sigma(s)}=\infty$.
\end{lemma}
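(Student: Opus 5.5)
The plan is to compare the iterations where the minimum defining $b_t$ is attained by $\frac{1}{\sqrt{t}}$ with those where it is attained by $\frac{\kappa_t}{K}$. A summability argument on the decrease of the empirical risk will show that the first kind of iteration is too sparse for $\sum 1/t$ to diverge along it. Since the full harmonic series diverges, the second kind must then carry a divergent harmonic sum, and $\sigma$ is obtained by enumerating those indices.

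\textbf{Step 1 ($\kappa_t\geq 0$).} By Assumption~\ref{assum:sym}, for any $\boldsymbol{\theta}$ the parameter $\bar{\boldsymbol{\theta}}$ gives the opposite value of the objective in Equation~\eqref{eq:paramit}. The minimum over $\Theta$ is therefore $\leq 0$, so $\kappa_t\geq 0$ and $b_t\geq 0$.

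\textbf{Step 2 (sufficient decrease).} By the $K$-smoothness in Assumption~\ref{assum:smooth}, applied observation-wise and summed, we have
\begin{equation*}
\hat{\mathcal{R}}_n(g_t)\leq \hat{\mathcal{R}}_n(g_{t-1})-b_t\kappa_t+\frac{K b_t^2}{2}\sum_{i=1}^n\Vert\varphi_{\boldsymbol{\theta}_t}(\boldsymbol{x}_i)\Vert_2^2 .
\end{equation*}
Assumption~\ref{assum:bound} gives $\sum_{i=1}^n\Vert\varphi_{\boldsymbol{\theta}_t}(\boldsymbol{x}_i)\Vert_2^2\leq 1$. Since $b_t\leq \kappa_t/K$, we get $\frac{K b_t^2}{2}\leq \frac{b_t\kappa_t}{2}$, hence
\begin{equation*}
\hat{\mathcal{R}}_n(g_t)\leq \hat{\mathcal{R}}_n(g_{t-1})-\frac{b_t\kappa_t}{2}.
\end{equation*}
The loss is nonnegative, so telescoping yields
\begin{equation*}
\sum_{t\geq 1} b_t\kappa_t\leq 2\hat{\mathcal{R}}_n(g_0)<\infty .
\end{equation*}

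\textbf{Step 3 (the harmonic split).} Let $B=\{t\geq1 : b_t=\frac{1}{\sqrt{t}}<\frac{\kappa_t}{K}\}$ and $A=\mathbb{N}^*\setminus B$. For every $t\in A$ we have $b_t=\frac{\kappa_t}{K}$, because the minimum is then attained by the second term. For $t\in B$ we have $\kappa_t> K/\sqrt{t}$, so $b_t\kappa_t> K/t$. Step 2 then gives $\sum_{t\in B}\frac1t<\infty$. Since $\sum_{t\geq1}\frac1t=\infty$, it follows that $\sum_{t\in A}\frac1t=\infty$, and in particular $A$ is infinite.

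\textbf{Step 4 (defining $\sigma$).} Enumerate $A$ increasingly as $a_0<a_1<\cdots$, with every $a_s\geq 1$, and set $\sigma(s)=a_s-1$. Then $\sigma$ is strictly increasing and $b_{\sigma(s)+1}=\frac{\kappa_{\sigma(s)+1}}{K}$ for all $s\in\mathbb{N}$. Since $\sigma(s)<a_s$, we have $\frac{1}{\sigma(s)}\geq\frac{1}{a_s}$ for $s\geq1$, where $\sigma(s)\geq1$ because $a_1\geq 2$. Hence $\sum_{s=1}^T\frac{1}{\sigma(s)}\geq\sum_{s=1}^T\frac{1}{a_s}\to\infty$; the sum starts at $s=1$, so the possible value $\sigma(0)=0$ does not matter.

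The step that needs the most care is Step 2. One must check that the smoothness bound holds globally in the second argument, which Assumption~\ref{assum:smooth} provides, so that no compactness is needed at this stage. One must also check that the normalisation of Assumption~\ref{assum:bound} gives exactly the factor $\frac12$ under $b_t\leq\kappa_t/K$. The remaining steps are bookkeeping.
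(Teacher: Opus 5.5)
Your proof is correct and follows the paper's argument. It uses nonnegativity of $\kappa_t$ via Assumption~\ref{assum:sym}, the sufficient-decrease bound from $K$-smoothness and Assumption~\ref{assum:bound}, and summability over the iterations where the $1/\sqrt{t}$ branch is active, so that divergence of the harmonic series forces divergence along the remaining indices. The only differences are cosmetic: you sum $b_t\kappa_t$ (using $b_t\kappa_t > K/t$ on $B$) where the paper sums $b_t^2$ (using $b_t^2=1/t$), and you bound the telescoped sum by $\hat{\mathcal{R}}_n(g_0)$ via nonnegativity of the loss rather than by $\hat{\mathcal{R}}_n(g_0)-\hat{\mathcal{R}}_n(g_\star)$.
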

\begin{proof}
After iteration $t\geq0$, by the definition of $K$-smoothness (Assumption~\ref{assum:smooth}), we have for each $i\in\{1,\ldots,n\}$, $b\in\mathbb{R}$ and $\boldsymbol{\theta}\in\Theta$:
\begin{equation*}
L_i(g_t+b\varphi_{\boldsymbol{\theta}}) \leq L_i(g_t) + \langle\nabla_2L_i(g_t), b\varphi_{\boldsymbol{\theta}}(\boldsymbol{x}_i)\rangle+\frac{K}{2}\Vert b\varphi_{\boldsymbol{\theta}}(\boldsymbol{x}_i) \Vert_2^2.
\end{equation*}
Hence, by summing from $i=1$ to $i=n$,
\begin{equation}\label{eq:ineqsmooth}
\sum_{i=1}^n L_i(g_t+b\varphi_{\boldsymbol{\theta}}) \leq \sum_{i=1}^n (L_i(g_t) + \langle\nabla_2L_i(g_t), b\varphi_{\boldsymbol{\theta}}(\boldsymbol{x}_i)\rangle+\frac{K}{2}\Vert b\varphi_{\boldsymbol{\theta}}(\boldsymbol{x}_i) \Vert_2^2).
\end{equation}
Note that by Assumption~\ref{assum:bound},
\begin{equation}\label{eq:predbound}
\sum_{i=1}^n \frac{K}{2}\Vert b\varphi_{\boldsymbol{\theta}}(\boldsymbol{x}_i) \Vert_2^2=\frac{Kb^2}{2}\sum_{i=1}^n \Vert \varphi_{\boldsymbol{\theta}}(\boldsymbol{x}_i) \Vert_2^2 \leq \frac{Kb^2}{2}.
\end{equation}
Hence (combining Equations~\eqref{eq:ineqsmooth} and \eqref{eq:predbound} above):
\begin{equation}\label{eq:empdec}
\hat{\mathcal{R}}_n(g_t+b\varphi_{\boldsymbol{\theta}}) \leq  \hat{\mathcal{R}}_n(g_t) + \sum_{i=1}^n\langle\nabla_2L_i(g_t), b\varphi_{\boldsymbol{\theta}}(\boldsymbol{x}_i)\rangle+\frac{Kb^2}{2}.
\end{equation}
By Assumption~\ref{assum:sym}, note that $\kappa_{t+1}\geq0$. Indeed, suppose the following:
\begin{equation*}
\kappa_{t+1} = -\sum_{i=1}^n\langle\nabla_2L_i(g_t),\varphi_{\boldsymbol{\theta}_{t+1}}(\boldsymbol{x}_i)\rangle < 0.
\end{equation*}
We would then have $\sum_{i=1}^n\langle\nabla_2L_i(g_t),\varphi_{\boldsymbol{\theta}_{t+1}}(\boldsymbol{x}_i)\rangle > 0$, and using the symmetric parameter $\bar{\boldsymbol{\theta}}_{t+1}$, we would have:
\begin{align*}
\sum_{i=1}^n\langle\nabla_2L_i(g_t),\varphi_{\bar{\boldsymbol{\theta}}_{t+1}}(\boldsymbol{x}_i)\rangle &= - \sum_{i=1}^n\langle\nabla_2L_i(g_t),\varphi_{\boldsymbol{\theta}_{t+1}}(\boldsymbol{x}_i)\rangle \\
&< \sum_{i=1}^n\langle\nabla_2L_i(g_t),\varphi_{\boldsymbol{\theta}_{t+1}}(\boldsymbol{x}_i)\rangle
\end{align*}
which would contradict the minimization principle stated in Equation~\eqref{eq:paramit}.

The rule for choosing step sizes given in Theorem~\ref{th:gber} therefore leads to $0 \leq b_{t+1}\leq \frac{\kappa_{t+1}}{K}$, hence $\kappa_{t+1}\geq Kb_{t+1}$ and $\kappa_{t+1}b_{t+1}\geq Kb_{t+1}^2$. By inserting this in Equation~\eqref{eq:empdec}, we get:
\begin{equation*}
\hat{\mathcal{R}}_n(g_t) - \hat{\mathcal{R}}_n(g_{t+1}) \geq b_{t+1}\kappa_{t+1}-\frac{Kb_{t+1}^2}{2}\geq Kb_{t+1}^2-\frac{Kb_{t+1}^2}{2}=\frac{Kb_{t+1}^2}{2}.
\end{equation*}
As this is true for all $t$'s, we can write (using telescopic sums):
\begin{equation*}
\sum_{s=0}^{S-1} \frac{Kb_{s+1}^2}{2} \leq \sum_{s=0}^{S-1}(\hat{\mathcal{R}}_n(g_s) - \hat{\mathcal{R}}_n(g_{s+1}))=\hat{\mathcal{R}}_n(g_0)-\hat{\mathcal{R}}_n(g_S)
\end{equation*}
for each $S\geq1$. This leads to:
\begin{equation*}
\sum_{s=0}^{S-1}b_{s+1}^2 \leq \frac{2(\hat{\mathcal{R}}_n(g_0)-\hat{\mathcal{R}}_n(g_S))}{K}\leq \frac{2(\hat{\mathcal{R}}_n(g_0)-\hat{\mathcal{R}}_n(g_\star))}{K}.
\end{equation*}
This implies that $\lim_{S\rightarrow\infty}\sum_{s=0}^{S}b_{s+1}^2<\infty$. It follows that the set
\begin{equation*}
A=\{s\geq 0\mid b_{s+1}=\frac{\kappa_{s+1}}{K}\}
\end{equation*}
is infinite. Indeed, if it were not, there would be an index $\tau$ such that $b_s=\frac{1}{\sqrt{s}}$ for each $s>\tau$. We would then have $\lim_{S\rightarrow\infty}\sum_{s=\tau}^{S}b_{s+1}^2=\infty$, which contradicts the previous point. Let then $\sigma:\mathbb{N}\rightarrow\mathbb{N}$ be the strictly increasing enumeration of $A$. Clearly, $\sigma$ satisfies the first criterion of the lemma, i.e. $b_{\sigma(s)+1}=\frac{\kappa_{\sigma(s)+1}}{K}$ for all $s\in\mathbb{N}$. Let us now show that $\lim_{T\rightarrow\infty}\sum_{s=1}^T\frac{1}{\sigma(s)}=\infty$. Let $E_T=A\cap\{0,\ldots,T\}$ and $E_T^c=\{0,\ldots,T\}\setminus E_T$ for $T\geq0$. For each such $T$, we have
\begin{align*}
\sum_{s=0}^T\frac{1}{s+1}&=\sum_{s\in E_T}\frac{1}{s+1}+\sum_{s\in E_T^c}\frac{1}{s+1} \\
&= \sum_{s\in E_T}\frac{1}{s+1}+\sum_{s\in E_T^c}b_{s+1}^2.
\end{align*}
The left-hand side of the latter equation diverges, while we have already established that $\lim_{T\rightarrow\infty}\sum_{s\in E_T^c}b_{s+1}^2<\infty$. Therefore, we have
\begin{equation*}
\lim_{T\rightarrow\infty}\sum_{s\in E_T}\frac{1}{s+1}=\infty,
\end{equation*}
which implies that $\lim_{S\rightarrow\infty}\sum_{s=1}^S\frac{1}{\sigma(s)+1}=\infty$. As $\frac{1}{\sigma(s)+1}\leq\frac{1}{\sigma(s)}$ for all $s\geq 1$, we finally have:
\begin{equation*}
\lim_{T\rightarrow\infty}\sum_{s=1}^T\frac{1}{\sigma(s)}=\infty.
\end{equation*}
\end{proof}

\begin{lemma}\label{lemma:bound}
Writing
\begin{equation*}
g_\star=\sum_{s=1}^{T_\star}b^\star_s\varphi_{\boldsymbol{\theta}^\star_s}
\end{equation*}
for some $T_\star\in\mathbb{N}^*$, where each $b^\star_s\in\mathbb{R}^+$ (the requirement of positivity is without loss of generality because of Assumption~\ref{assum:sym}) and $\boldsymbol{\theta}^\star_s\in\Theta$, let
\begin{equation*}
B_\star=\sum_{s=1}^{T_\star}b^\star_s.
\end{equation*}
For each $t>0$, we have:
\begin{equation*}
\kappa_{t+1}\geq\frac{\hat{\mathcal{R}}_n(g_t)-\hat{\mathcal{R}}_n(g_\star)}{B_\star + 2\sqrt{t}}.
\end{equation*}
\end{lemma}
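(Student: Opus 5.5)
The plan is to combine convexity of the empirical risk with the minimization principle in Equation~\eqref{eq:paramit}, and then to bound the size of the current predictor $g_t$ through the step sizes. Writing $u_i=g_t(\boldsymbol{x}_i)$ and $u^\star_i=g_\star(\boldsymbol{x}_i)$, convexity of each $L_i$ (Assumption~\ref{assum:smooth}) gives
\begin{equation*}
\hat{\mathcal{R}}_n(g_t)-\hat{\mathcal{R}}_n(g_\star)\leq \sum_{i=1}^n\langle\nabla_2L_i(g_t),g_t(\boldsymbol{x}_i)-g_\star(\boldsymbol{x}_i)\rangle .
\end{equation*}
Both $g_t$ and $g_\star$ lie in $\Span(\varphi_{\boldsymbol{\theta}})$. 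Using Assumption~\ref{assum:sym}, I will write $g_t-g_\star$ as a \emph{nonnegative} combination of base functions, $g_t-g_\star=\sum_{s}c_s\varphi_{\boldsymbol{\vartheta}_s}$ with $c_s\geq0$. Concretely, I will take the terms $b_s\varphi_{\boldsymbol{\theta}_s}$, $s\leq t$, of $g_t$, together with the terms $b^\star_s\varphi_{\bar{\boldsymbol{\theta}}^\star_s}$ for $-g_\star$.

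By definition of $\boldsymbol{\theta}_{t+1}$ as a minimizer in Equation~\eqref{eq:paramit}, each base function satisfies
\begin{equation*}
\sum_i\langle\nabla_2L_i(g_t),\varphi_{\boldsymbol{\vartheta}}(\boldsymbol{x}_i)\rangle\geq-\kappa_{t+1}.
\end{equation*}
Applying the same bound to $\bar{\boldsymbol{\vartheta}}$ gives $|\sum_i\langle\nabla_2L_i(g_t),\varphi_{\boldsymbol{\vartheta}}(\boldsymbol{x}_i)\rangle|\leq\kappa_{t+1}$ for every $\boldsymbol{\vartheta}$. The right-hand side above is therefore at most
\begin{equation*}
\kappa_{t+1}\Big(\sum_{s=1}^{T_\star}b^\star_s+\sum_{s=1}^t|b_s|\Big)=\kappa_{t+1}\Big(B_\star+\sum_{s=1}^t b_s\Big),
\end{equation*}
where $b_s\geq0$ because $\kappa_s\geq0$, as shown in the proof of Lemma~\ref{lemma:extract}.

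It then remains to show $\sum_{s=1}^t b_s\leq 2\sqrt{t}$. This follows from $b_s\leq 1/\sqrt{s}$ and the integral comparison $\sum_{s=1}^t s^{-1/2}\leq 1+\int_1^t x^{-1/2}dx=2\sqrt t-1\leq 2\sqrt{t}$. Dividing by $B_\star+2\sqrt t>0$ gives the claim; note that $\kappa_{t+1}$ could be zero, but the inequality is stated multiplicatively, so no case split is needed.

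The main point needing care is the sign bookkeeping. I must make sure the inequality is used in the correct direction, namely that the gradient pairing with $g_t-g_\star$ is bounded by $\kappa_{t+1}$ times the $\ell_1$ mass of the coefficients. This relies on the symmetric inequality $|\cdot|\leq\kappa_{t+1}$, which is exactly where Assumption~\ref{assum:sym} enters. The decomposition of $g_\star$ with positive coefficients is given in the statement, so it poses no difficulty. Assumption~\ref{assum:compact} is not needed here beyond the existence of $g_\star$.
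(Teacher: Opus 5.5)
Your proposal is correct and follows essentially the same route as the paper. Both arguments combine four ingredients: convexity of the $L_i$; a decomposition of $g_\star-g_t$ (in your case $g_t-g_\star$) into a nonnegative combination of base functions via Assumption~\ref{assum:sym}; the minimization principle of Equation~\eqref{eq:paramit} to bound each pairing by $\kappa_{t+1}$; and the bound $\sum_{s=1}^t b_s\leq 2\sqrt{t}$.

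The differences are minor. You flip the signs of the terms of $g_\star$ and use the two-sided bound, whereas the paper flips the terms of $g_t$ and uses the minimum $\lambda_t$. By keeping the inequality in multiplicative form, you also avoid the paper's division by $C=B_\star+\sum_{s=1}^t b_s$, which could vanish in a degenerate case. Finally, you justify the $2\sqrt{t}$ bound, which the paper only asserts.
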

\begin{proof}
Let $t>0$. As of the rule given in Theorem~\ref{th:gber} for choosing step sizes, we have $b_s\leq\frac{1}{\sqrt{s}}$ for all $s\leq t$, hence $0\leq \sum_{s=1}^tb_s<2\sqrt{t}$.
Now, note that by Assumption~\ref{assum:sym}, $-g_t$ can be written as $\sum_{s=1}^tb_s\varphi_{\bar{\boldsymbol{\theta}}_s}$ with $b_s>0$, $s\in\{1,\ldots,t\}$. Hence, $g_\star-g_t$ can be written as :
\begin{equation*}
g_\star-g_t=(\sum_{s=1}^{T_\star}b^\star_s\varphi_{\boldsymbol{\theta}^\star_s})+(\sum_{s=1}^tb_s\varphi_{\bar{\boldsymbol{\theta}}_s}),
\end{equation*}
where all coefficients are positive and $0\leq(\sum_{s=1}^{T_\star}b^\star_s) + (\sum_{s=1}^tb_s)<B_\star+ 2\sqrt{t}$. For each $i\in\{1,\ldots,n\}$, we therefore have:
\begin{align*}
\langle\nabla_2L_i(g_t), g_\star-g_t\rangle &= \langle\nabla_2L_i(g_t), \sum_{s=1}^{T_\star}b^\star_s\varphi_{\boldsymbol{\theta}^\star_s}\rangle + \langle\nabla_2L_i(g_t), \sum_{s=1}^tb_s\varphi_{\bar{\boldsymbol{\theta}}_s}\rangle \\
&= \sum_{s=1}^{T_\star}b^\star_s\langle\nabla_2L_i(g_t), \varphi_{\boldsymbol{\theta}^\star_s}\rangle + \sum_{s=1}^tb_s\langle\nabla_2L_i(g_t), \varphi_{\bar{\boldsymbol{\theta}}_s}\rangle
\end{align*}
Hence by summing from $i=1$ to $i=n$,
\begin{align*}
\sum_{i=1}^n\langle\nabla_2L_i(g_t), g_\star(\boldsymbol{x}_i)-g_t(\boldsymbol{x}_i)\rangle = {} &(\sum_{s=1}^{T_\star}b^\star_s\sum_{i=1}^n\langle\nabla_2L_i(g_t),\varphi_{\boldsymbol{\theta}^\star_s}(\boldsymbol{x}_i)\rangle) \\
& + (\sum_{s=1}^tb_s\sum_{i=1}^n\langle\nabla_2L_i(g_t),\varphi_{\bar{\boldsymbol{\theta}}_s}(\boldsymbol{x}_i)\rangle).
\end{align*}
Let $\lambda_t$ denote the minimum of
\begin{equation*}
\bigl\{
\min_{1\leq s\leq T_\star}\sum_{i=1}^n\langle\nabla_2L_i(g_t),\varphi_{\boldsymbol{\theta}^\star_s}(\boldsymbol{x}_i)\rangle,
\min_{1\leq s\leq t}\sum_{i=1}^n\langle\nabla_2L_i(g_t),\varphi_{\bar{\boldsymbol{\theta}}_s}(\boldsymbol{x}_i)\rangle\bigr\}.
\end{equation*}
By dividing by $C=B_\star+\sum_{s=1}^tb_s$, we get:
\begin{equation}\label{eq:infbound}
\lambda_t\leq \frac{1}{C}\sum_{i=1}^n\langle\nabla_2L_i(g_t), g_\star(\boldsymbol{x}_i)-g_t(\boldsymbol{x}_i)\rangle.
\end{equation}
Using the convexity of $L$ with respect to its second argument, we have for all $i$'s:
\begin{equation*}
L_i(g_\star(\boldsymbol{x}_i))\geq L_i(g_t(\boldsymbol{x}_i))+\langle 
\nabla_2L_i(g_t),g_\star(\boldsymbol{x}_i)-g_t(\boldsymbol{x}_i)\rangle.
\end{equation*}
Hence by summing over all indices:
\begin{equation}\label{eq:supbound}
\hat{\mathcal{R}}_n(g_\star) \geq \hat{\mathcal{R}}_n(g_t)+\sum_{i=1}^n\langle\nabla_2L_i(g_t),g_\star(\boldsymbol{x}_i)-g_t(\boldsymbol{x}_i)\rangle.
\end{equation}
Combining \eqref{eq:infbound} and \eqref{eq:supbound} leads to:
\begin{equation*}
\lambda_t\leq \frac{1}{C}\sum_{i=1}^n\langle\nabla_2L_i(g_t), g_\star(\boldsymbol{x}_i)-g_t(\boldsymbol{x}_i)\rangle\leq\frac{1}{C}(\hat{\mathcal{R}}_n(g_\star)-\hat{\mathcal{R}}_n(g_t)).
\end{equation*}
And because $\boldsymbol{\theta}_{t+1}$ is chosen according to Equation~\eqref{eq:paramit}, we have:
\begin{equation}\label{eq:keyineq}
-\kappa_{t+1}\leq\frac{1}{C}\sum_{i=1}^n\langle\nabla_2L_i(g_t), g_\star(\boldsymbol{x}_i)-g_t(\boldsymbol{x}_i)\rangle\leq\frac{1}{C}(\hat{\mathcal{R}}_n(g_\star)-\hat{\mathcal{R}}_n(g_t)).
\end{equation}
Hence
\begin{equation}\label{eq:kappa}
\kappa_{t+1}\geq\frac{\hat{\mathcal{R}}_n(g_t)-\hat{\mathcal{R}}_n(g_\star)}{C}\geq\frac{\hat{\mathcal{R}}_n(g_t)-\hat{\mathcal{R}}_n(g_\star)}{B_\star + 2\sqrt{t}}.
\end{equation}
The result follows immediately.
\end{proof}

We can now proceed with the proof of Theorem~\ref{th:gber}.

\begin{proof}[Proof of Theorem~\ref{th:gber}]
For all $t\geq 0$, by the definition of $K$-smoothness, we have (see the first developments in the proof of Lemma~\ref{lemma:extract}):
\begin{equation*}
\hat{\mathcal{R}}_n(g_{t+1}) \leq  \hat{\mathcal{R}}_n(g_t)-b_{t+1}\kappa_{t+1}
+\frac{Kb_{t+1}^2}{2}.
\end{equation*}
As $b_{t+1}\leq\frac{\kappa_{t+1}}{K}$, we have $\frac{Kb_{t+1}^2}{2}\leq\frac{b_{t+1}\kappa_{t+1}}{2}$, hence
\begin{equation*}
\hat{\mathcal{R}}_n(g_{t+1}) \leq  \hat{\mathcal{R}}_n(g_t) - \frac{b_{t+1}\kappa_{t+1}}{2}.
\end{equation*}
Let $\Delta_t=\hat{\mathcal{R}}_n(g_{t})-\hat{\mathcal{R}}_n(g_\star)$. The latter inequality gives:
\begin{equation}\label{eq:delta}
    \Delta_{t+1} \leq  \Delta_t - \frac{b_{t+1}\kappa_{t+1}}{2}.
\end{equation}
The sequence $(\Delta_t)_{t\in\mathbb{N}}$ is bounded from below by $0$. As it is also decreasing, it is convergent. Let $l$ be the limit of $(\Delta_t)_{t\in\mathbb{N}}$. Then $\lim_{t\rightarrow\infty}\Delta_{\sigma(t)}=l$, where $\sigma$ is the extraction function introduced in Lemma~\ref{lemma:extract}. Now, by the definition of this extraction function, we have for all $t$:
\begin{equation*}
\frac{b_{\sigma(t)+1}\kappa_{\sigma(t)+1}}{2}=\frac{\kappa_{\sigma(t)+1}^2}{2K}.
\end{equation*}
By Lemma~\ref{lemma:bound}, $\kappa_{\sigma(t)+1}\geq\frac{\Delta_{\sigma(t)}}{B_\star + 2\sqrt{\sigma(t)}}$. Hence:
\begin{equation*}
\frac{\kappa_{\sigma(t)+1}^2}{2K}\geq\frac{\Delta_{\sigma(t)}^2}{2K(B_\star + 2\sqrt{\sigma(t)})^2}.
\end{equation*}
Combining this last result with Equation~\eqref{eq:delta} provides:
\begin{equation*}
\Delta_{\sigma(t+1)} \leq \Delta_{\sigma(t)+1} \leq  \Delta_{\sigma(t)} - \frac{\Delta_{\sigma(t)}^2}{2K(B_\star + 2\sqrt{\sigma(t)})^2}.
\end{equation*}
Let $(c,t_0)\in(\mathbb{R}^+,\mathbb{N}^*)$ be any constants such that $2K(B_\star+2\sqrt{\sigma(t)})^2\leq c\sigma(t)$ holds for each $t\geq t_0$ (such constants always exist, as shown by the elementary analysis of $(B_\star+2\sqrt{\sigma(t)})^2$ seen as a function of $\sigma(t)$). From now on, we will indeed admit that $t\geq t_0$. In this case, we have:
\begin{equation*}
\Delta_{\sigma(t+1)} \leq  \Delta_{\sigma(t)} - \frac{\Delta_{\sigma(t)}^2}{c\sigma(t)}.
\end{equation*}
By elementary results on the geometric series, this implies: 
\begin{equation*}
\frac{1}{\Delta_{\sigma(t+1)}} - \frac{1}{\Delta_{\sigma(t)}} \geq  \frac{1}{c\sigma(t)}.
\end{equation*}
Using telescopic sums, we get
\begin{equation*}
\frac{1}{\Delta_{\sigma(T+1)}} \geq \frac{1}{\Delta_{\sigma(t_0)}} + \frac{1}{c}\sum_{t=t_0}^T\frac{1}{\sigma(t)}.
\end{equation*}
By Lemma~\ref{lemma:extract}, the right-hand part of this inequality diverges to $\infty$ when $T\rightarrow\infty$, hence we also have $\lim_{t\rightarrow\infty}\frac{1}{\Delta_{\sigma(t)}}=\infty$, so $l=0$, which is the desired result.
\end{proof}

\subsection{Proof of Theorem~\ref{th:gber2}}

We now turn to the proof of Theorem~\ref{th:gber2}. Compared to that of Theorem~\ref{th:gber}, it is simpler thanks to Assumption~\ref{assum:compact}, which helps linking the empirical risk with the alignment between the predictions and the gradient values. In the following, we use the same definition for $\kappa_t$ as in the previous subsection.

\begin{proof}[Proof of Theorem~\ref{th:gber2}]
As in the proof of Lemma~\ref{lemma:extract}, we begin by remarking that by the definition of K-smoothness, we have for every $t\geq 0$ and $b\in\mathbb{R}$,
\begin{equation*}
\hat{\mathcal{R}}_n(g_t+b\varphi_{\boldsymbol{\theta}_{t+1}}) \leq  \hat{\mathcal{R}}_n(g_t) + \sum_{i=1}^n\langle\nabla_2L_i(g_t), b\varphi_{\boldsymbol{\theta}_{t+1}}(\boldsymbol{x}_i)\rangle+\frac{Kb^2}{2}.
\end{equation*}
And because $\eta_{t+1}\in\argmin_{\eta\in\mathbb{R}^+} \hat{\mathcal{R}}_n(g_t+\eta\varphi_{\boldsymbol{\theta}_{t+1}})$, we have:
\begin{align*}
\hat{\mathcal{R}}_n(g_t+\eta_{t+1}\varphi_{\boldsymbol{\theta}_{t+1}}) &\leq \hat{\mathcal{R}}_n(g_t) + b\sum_{i=1}^n\langle\nabla_2L_i(g_t), \varphi_{\boldsymbol{\theta}_{t+1}}(\boldsymbol{x}_i)\rangle+\frac{Kb^2}{2} \\
&=\hat{\mathcal{R}}_n(g_t) - b\kappa_{t+1}+\frac{Kb^2}{2}.
\end{align*}
In particular, taking $b=\frac{\kappa_{t+1}}{K}$, we get:
\begin{equation}\label{eq:ineqkappa}
\hat{\mathcal{R}}_n(g_t+\eta_{t+1}\varphi_{\boldsymbol{\theta}_{t+1}}) \leq \hat{\mathcal{R}}_n(g_t) - \frac{\kappa_{t+1}^2}{2K}.
\end{equation}
Now, remark that by the convexity of the empirical risk, we can write:
\begin{align}\label{eq:ineqls}
\hat{\mathcal{R}}_n(g_{t+1})&=\hat{\mathcal{R}}_n(g_t+\alpha\eta_{t+1}\varphi_{\boldsymbol{\theta}_{t+1}}) \nonumber \\ 
&=\hat{\mathcal{R}}_n((1-\alpha)g_t+\alpha(g_t+\eta_{t+1}\varphi_{\boldsymbol{\theta}_{t+1}})) \nonumber \\
&\leq (1-\alpha)\hat{\mathcal{R}}_n(g_t) + \alpha\hat{\mathcal{R}}_n(g_t+\eta_{t+1}\varphi_{\boldsymbol{\theta}_{t+1}}).
\end{align}
Combining Equations~\eqref{eq:ineqkappa} and \eqref{eq:ineqls} above, we obtain:
\begin{equation*}
\hat{\mathcal{R}}_n(g_{t+1}) \leq \hat{\mathcal{R}}_n(g_t)-\frac{\alpha\kappa_{t+1}^2}{2K}.
\end{equation*}
We can now use telescopic sums as in the proof of Lemma~\ref{lemma:extract}, to obtain:
\begin{equation*}
\sum_{s=0}^{S-1} \frac{\alpha\kappa_{s+1}^2}{2K} \leq \sum_{s=0}^{S-1}(\hat{\mathcal{R}}_n(g_s) - \hat{\mathcal{R}}_n(g_{s+1}))=\hat{\mathcal{R}}_n(g_0)-\hat{\mathcal{R}}_n(g_S)
\end{equation*}
for each $S\geq 1$, which leads to
\begin{equation*}
\sum_{s=1}^S \kappa_{s}^2 \leq \frac{2K}{\alpha}(\hat{\mathcal{R}}_n(g_0)-\hat{\mathcal{R}}_n(g_S)) < \infty.
\end{equation*}
As $\kappa_s \geq 0$ for each $s \geq 1$ (see the proof of Lemma~\ref{lemma:extract}), this implies that
\begin{equation}\label{eq:limkappa}
\lim_{s\rightarrow \infty} \kappa_s=0.
\end{equation}
We may now make use of Assumption~\ref{assum:compact}. By the latter and the Bolzano-Weierstrass theorem, there exists an extraction function $\sigma:\mathbb{N}\rightarrow\mathbb{N}$ such that for each $i\in\{1,\ldots,n\}$, the sequence $(g_{\sigma(s)}(\boldsymbol{x}_i))_{s\in\mathbb{N}}$ is convergent. We denote by $g_\infty$ any element of $\Span(\varphi_{\boldsymbol{\theta}})$ such that $g_\infty(\boldsymbol{x}_i)=\lim_{s\rightarrow\infty}g_{\sigma(s)}(\boldsymbol{x}_i)$ (where the dependence on the extraction function is implicit). Such an element exists because the set $\Gamma$ (defined in Assumption~\ref{assum:compact}) is closed. Let now $\boldsymbol{\theta}$ be any parameter value in $\Theta$. After iteration $\sigma(t)$, because $\boldsymbol{\theta}_{\sigma(t)+1}$ is chosen according to Equation~\eqref{eq:paramit}, we have:
\begin{equation*}
\sum_{i=1}^n\langle \nabla_2L_i(g_{\sigma(t)}),\varphi_{\boldsymbol{\theta}}(\boldsymbol{x}_i) \rangle \geq -\kappa_{\sigma(t)+1}.
\end{equation*}
And because of Assumption~\ref{assum:sym}, we also have:
\begin{equation*}
\sum_{i=1}^n\langle \nabla_2L_i(g_{\sigma(t)}),\varphi_{\boldsymbol{\theta}}(\boldsymbol{x}_i) \rangle \leq \kappa_{\sigma(t)+1}.
\end{equation*}
Now, note that by Equation~\eqref{eq:limkappa},
\begin{equation*}
\lim_{s\rightarrow\infty}\kappa_{\sigma(s)+1}=\lim_{s\rightarrow\infty}-\kappa_{\sigma(s)+1}=0.
\end{equation*}
Using the squeeze theorem, we therefore have:
\begin{equation*}
\lim_{s\rightarrow\infty}\sum_{i=1}^n\langle \nabla_2L_i(g_{\sigma(s)}),\varphi_{\boldsymbol{\theta}}(\boldsymbol{x}_i) \rangle = 0.
\end{equation*}
By the K-smoothness of $L$ in its second argument, we finally get:
\begin{equation*}
\sum_{i=1}^n\langle \nabla_2L_i(g_\infty),\varphi_{\boldsymbol{\theta}}(\boldsymbol{x}_i) \rangle = 0.
\end{equation*}
This also holds for any linear combinations of base models because of the linearity of the inner product, i.e., for all $f\in\Span(\varphi_{\boldsymbol{\theta}})$, it holds that
\begin{equation*}
\sum_{i=1}^n\langle \nabla_2L_i(g_\infty),f(\boldsymbol{x}_i) \rangle = 0.
\end{equation*}
Now, recall that by the definition of convexity, we also have for all such functions:
\begin{equation*}
\hat{\mathcal{R}}_n(g_\infty+f) \geq \hat{\mathcal{R}}_n(g_\infty) + \sum_{i=1}^n\langle \nabla_2L_i(g_\infty),f(\boldsymbol{x}_i) \rangle.
\end{equation*}
Combining the two previous equations gives $\hat{\mathcal{R}}_n(g_\infty+f) \geq \hat{\mathcal{R}}_n(g_\infty)$ for each $f\in\Span(\varphi_{\boldsymbol{\theta}})$, hence $\hat{\mathcal{R}}_n(g_\infty)=\hat{\mathcal{R}}_n(g_\star) $. We therefore have
$ \lim_{s\rightarrow\infty}\hat{\mathcal{R}}_n(g_{\sigma(s)})=\hat{\mathcal{R}}_n(g_\star) $. As the sequence $(\hat{\mathcal{R}}_n(g_s))_{s\in\mathbb{N}}$ is lower bounded and decreasing (because of the line search), it is convergent. Hence $\lim_{s\rightarrow\infty}\hat{\mathcal{R}}_n(g_s)=\hat{\mathcal{R}}_n(g_\star)$, which ends the proof.
\end{proof}

This result can easily be extended to the case where an exact line search is performed only in some iterations of the algorithm, and the other iterations do not increase the empirical risk. Such result is useful in proving the convergence of PGB for separable loss functions, and we introduce it as a corollary below.

\begin{corollary}\label{cor:gber2}
Grant the same assumptions as in Theorem~\ref{th:gber2}. Consider a strictly increasing integer sequence $(\nu_s)_{s\in\mathbb{N}}$ in $\mathbb{N}$. Let $\nu(\mathbb{N})$ be its image. Let $g_0$ be a constant function and consider a sequence of predictors defined as:
\begin{equation*}
g_{t+1}=\begin{cases}
    g_{t}+b_{t+1}\varphi_{\boldsymbol{\theta}_{t+1}} &\text{if } t\in \nu(\mathbb{N}) \\
    \psi_{t+1}(g_{t}) &\text{otherwise}\\
\end{cases}
\end{equation*}
for $t\geq 0$, where $\psi_{t+1}(g_{t})$ is any element of $\Span(\varphi_{\boldsymbol{\theta}})$ such that $\hat{\mathcal{R}}_n(\psi_{t+1}(g_{t})) \leq \hat{\mathcal{R}}_n(g_{t})$, and $b_{t+1}$ and $\boldsymbol{\theta}_{t+1}$ are chosen as in Theorem~\ref{th:gber2}. Then $\lim_{t\rightarrow\infty}\hat{\mathcal{R}}_n(g_t)=\hat{\mathcal{R}}_n(g_\star)$.
\end{corollary}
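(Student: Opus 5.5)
We follow the proof of Theorem~\ref{th:gber2} step by step. The only new feature is that the decrease estimate is available at the indices $t\in\nu(\mathbb{N})$ only, so we work along that subsequence. The plan has three steps: a monotonicity and summability step, a compactness and stationarity step along a sub-subsequence of $(\nu_s)$, and a concluding monotone-limit argument.

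\emph{Step 1 (decrease and summability).} At every $t\in\nu(\mathbb{N})$, the argument leading to Equation~\eqref{eq:ineqkappa} and then \eqref{eq:ineqls} applies verbatim. It uses only $K$-smoothness, Assumption~\ref{assum:bound}, the line search and convexity. It gives
\begin{equation*}
\hat{\mathcal{R}}_n(g_{t+1})\leq\hat{\mathcal{R}}_n(g_t)-\frac{\alpha\kappa_{t+1}^2}{2K},
\end{equation*}
and $\kappa_{t+1}\geq 0$ by Assumption~\ref{assum:sym}. At all other $t$ we have $\hat{\mathcal{R}}_n(g_{t+1})\leq\hat{\mathcal{R}}_n(g_t)$ by the hypothesis on $\psi_{t+1}$. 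Hence $(\hat{\mathcal{R}}_n(g_t))_t$ is nonincreasing and bounded below by $0$, so it converges, and all $g_t$ stay in the sublevel set defining $\Gamma$. Telescoping over all $t$ and discarding the nonnegative decreases at non-$\nu$ indices yields $\sum_{s}\kappa_{\nu_s+1}^2\leq\frac{2K}{\alpha}\hat{\mathcal{R}}_n(g_0)<\infty$. Therefore $\kappa_{\nu_s+1}\to0$.

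\emph{Step 2 (compactness and stationarity).} By Assumption~\ref{assum:compact} and Bolzano--Weierstrass, we extract from $(\nu_s)$ a further subsequence $\nu_{\sigma(s)}$ along which $(g_{\nu_{\sigma(s)}}(\boldsymbol{x}_i))_i$ converges to some point of $\Gamma$. We realize this limit as $g_\infty\in\Span(\varphi_{\boldsymbol{\theta}})$, using closedness of $\Gamma$. For any $\boldsymbol{\theta}\in\Theta$, the choice of $\boldsymbol{\theta}_{\nu_{\sigma(s)}+1}$ via Equation~\eqref{eq:paramit} together with Assumption~\ref{assum:sym} gives
\begin{equation*}
\Bigl|\sum_{i=1}^n\langle\nabla_2L_i(g_{\nu_{\sigma(s)}}),\varphi_{\boldsymbol{\theta}}(\boldsymbol{x}_i)\rangle\Bigr|\leq\kappa_{\nu_{\sigma(s)}+1}\to0.
\end{equation*}
The gradient is continuous because $L$ is $K$-smooth, so passing to the limit gives $\sum_i\langle\nabla_2L_i(g_\infty),\varphi_{\boldsymbol{\theta}}(\boldsymbol{x}_i)\rangle=0$ for every $\boldsymbol{\theta}$. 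By linearity this extends to every $f\in\Span(\varphi_{\boldsymbol{\theta}})$. Convexity then shows that $g_\infty$ minimizes $\hat{\mathcal{R}}_n$ over the span, so $\hat{\mathcal{R}}_n(g_\infty)=\hat{\mathcal{R}}_n(g_\star)$. Continuity of $\hat{\mathcal{R}}_n$ in the prediction values gives $\hat{\mathcal{R}}_n(g_{\nu_{\sigma(s)}})\to\hat{\mathcal{R}}_n(g_\star)$.

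\emph{Step 3 (conclusion).} The full sequence $(\hat{\mathcal{R}}_n(g_t))_t$ converges by Step~1, and one of its subsequences converges to $\hat{\mathcal{R}}_n(g_\star)$ by Step~2. Hence the whole sequence converges to $\hat{\mathcal{R}}_n(g_\star)$.

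The main obstacle is making sure the arbitrary steps $\psi_{t+1}$ do not break anything. Monotonicity and summability survive because the non-$\nu$ steps only decrease the risk. The stationarity argument is evaluated only at the indices $\nu_{\sigma(s)}$, where the greedy choice of $\boldsymbol{\theta}$ is made at the current iterate $g_{\nu_{\sigma(s)}}$. So it never needs any relation between consecutive iterates beyond what Step~1 provides.
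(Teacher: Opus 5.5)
Your proof is correct and follows essentially the same route as the paper. Like the paper, you use the per-step decrease $\hat{\mathcal{R}}_n(g_{t+1})\leq\hat{\mathcal{R}}_n(g_t)-\alpha\kappa_{t+1}^2/(2K)$ at indices in $\nu(\mathbb{N})$, together with monotonicity at the other indices, to get $\kappa_{\nu_s+1}\to0$; you then apply Bolzano--Weierstrass along a further subsequence of $(g_{\nu_s})$, reuse the stationarity/convexity argument of Theorem~\ref{th:gber2}, and finish with the monotone-limit argument. The only cosmetic differences are that you telescope over all $t$ (the paper telescopes between $\nu(s-1)+1$ and $\nu(s)+1$), and that you state explicitly the continuity of $\hat{\mathcal{R}}_n$ needed to pass from $g_{\nu_{\sigma(s)}}\to g_\infty$ to convergence of the risks.
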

\begin{proof}
Let $t\geq 0$. Using the exact same reasoning as in the proof of Theorem~\ref{th:gber2}, we have:
\begin{equation*}
\hat{\mathcal{R}}_n(g_{\nu(t)+1}) \leq \hat{\mathcal{R}}_n(g_{\nu(t)}) - \frac{\alpha\kappa_{\nu(t)+1}^2}{2K}.
\end{equation*}
Additionally, for $t\geq1$, because of the requirement of a non-increasing risk, we have $\hat{\mathcal{R}}_n(g_{\nu(t)}) \leq \hat{\mathcal{R}}_n(g_{\nu(t-1)+1})$, so that
\begin{equation*}
\hat{\mathcal{R}}_n(g_{\nu(t)+1}) \leq \hat{\mathcal{R}}_n(g_{\nu(t-1)+1}) - \frac{\alpha\kappa_{\nu(t)+1}^2}{2K}.
\end{equation*}
Using telescopic sums, we obtain:
\begin{equation*}
\sum_{s=1}^{S} \frac{\alpha\kappa_{\nu(s)+1}^2}{2K} \leq \sum_{s=1}^{S}(\hat{\mathcal{R}}_n(g_{\nu(s-1)+1}) - \hat{\mathcal{R}}_n(g_{\nu(s)+1}))=\hat{\mathcal{R}}_n(g_{\nu(0)+1})-\hat{\mathcal{R}}_n(g_{\nu(S)+1})
\end{equation*}
for each $S\geq1$, hence
\begin{equation*}
\lim_{s\rightarrow \infty} \kappa_{\nu(s)+1}=0.
\end{equation*}
Analogously to the proof of Theorem~\ref{th:gber2}, we can use the Bolzano-Weierstrass theorem and the boundedness of the sequence $(g_{\nu(s)})_{s\in\mathbb{N}}$ to prove the existence of an extraction function $\sigma:\mathbb{N}\rightarrow\mathbb{N}$ such that for each $i\in\{1,\ldots,n\}$, the sequence $(g_{\nu(\sigma(s))}(\boldsymbol{x}_i))_{s\in\mathbb{N}}$ converges. We also denote the corresponding limit by $g_\infty$. By the exact same reasoning as in the proof of Theorem~\ref{th:gber2}, we have that $\hat{\mathcal{R}}_n(g_\infty)=\hat{\mathcal{R}}_n(g_\star)$. As the sequence $(\hat{\mathcal{R}}_n(g_s))_{s\in\mathbb{N}}$ is lower bounded and decreasing (because of both the line search and the non-increase requirement for intermediate steps), it is convergent. We therefore have $\lim_{s\rightarrow\infty}\hat{\mathcal{R}}_n(g_s)=\hat{\mathcal{R}}_n(g_\star)$, which is the desired result.
\end{proof}

\subsection{Proof of Theorem~\ref{th:lspgb}}
The proof is a straightforward application of Corollary~\ref{cor:gber2} to the decomposition of $L$ in elementary loss functions $\ell_m$. It relies on an important property of the linear span of the family $(\varphi_{\boldsymbol{\theta}})_{\boldsymbol{\theta}\in\Theta}$ defined in Equation~\eqref{eq:wbfamily}, which is given below.
\begin{lemma}\label{lemma:span}
Let $\Span(h_{\boldsymbol{w}})$ denote the linear span of the set of base models in $(h_{\boldsymbol{w}})_{\boldsymbol{w}\in\mathcal{W}}$, and let
\begin{equation*}
\Lambda=\{g:\mathcal{X}\rightarrow\mathbb{R}^M \mid \forall m\in\{1,\ldots,M\}, g_m\in\Span(h_{\boldsymbol{w}})\}
\end{equation*}
where $g_m$ denotes the $m$-th coordinate function of $g$. Then $\Span(\varphi_{\boldsymbol{\theta}}) = \Lambda.$
\end{lemma}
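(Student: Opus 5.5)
The proof will be a double inclusion, working directly from the definition of the family in Equation~\eqref{eq:wbfamily}, where $\varphi_{\boldsymbol{w},\boldsymbol{\beta}}=h_{\boldsymbol{w}}\boldsymbol{\beta}$ with $\boldsymbol{\beta}\in\mathbb{S}^{M-1}$. Throughout, $\Span$ means the set of finite linear combinations. This is the same statement as Proposition~\ref{prop:space}, so the argument is elementary linear algebra.

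\emph{Inclusion $\Span(\varphi_{\boldsymbol{\theta}})\subseteq\Lambda$.} First take a single base function $\varphi_{\boldsymbol{w},\boldsymbol{\beta}}$. Its $m$-th coordinate function is $\beta_m h_{\boldsymbol{w}}$, which is a scalar multiple of $h_{\boldsymbol{w}}$ and so lies in $\Span(h_{\boldsymbol{w}})$. Hence every base function belongs to $\Lambda$. Next, $\Lambda$ is a linear subspace of the space of functions $\mathcal{X}\to\mathbb{R}^M$, because taking coordinates is linear and $\Span(h_{\boldsymbol{w}})$ is itself a subspace. A subspace containing all base functions contains their span.

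\emph{Inclusion $\Lambda\subseteq\Span(\varphi_{\boldsymbol{\theta}})$.} Let $g\in\Lambda$. For each coordinate $m$, write $g_m=\sum_{k=1}^{K_m}c_{m,k}h_{\boldsymbol{w}_{m,k}}$ as a finite combination. The key observation is that every canonical vector $\mathbf{e}_m$ lies in $\mathbb{S}^{M-1}$. Therefore $(\boldsymbol{w},\mathbf{e}_m)\in\Theta$ for every $\boldsymbol{w}\in\mathcal{W}$, and $\varphi_{\boldsymbol{w},\mathbf{e}_m}=h_{\boldsymbol{w}}\mathbf{e}_m$ is an admissible base function. Then
\begin{equation*}
g=\sum_{m=1}^M g_m\mathbf{e}_m=\sum_{m=1}^M\sum_{k=1}^{K_m}c_{m,k}\,\varphi_{\boldsymbol{w}_{m,k},\mathbf{e}_m},
\end{equation*}
which is a finite linear combination of base functions. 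So $g\in\Span(\varphi_{\boldsymbol{\theta}})$.

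There is no real obstacle. The only point needing care is that the directions are restricted to the unit sphere rather than all of $\mathbb{R}^M$. This restriction costs nothing: any scaling of a direction can be absorbed into the scalar coefficients, and the canonical basis vectors, which are already unit vectors, are all that the second inclusion requires.
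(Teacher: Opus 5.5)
Your proposal is correct and follows essentially the same double-inclusion argument as the paper. For the first inclusion, each coordinate of a combination of the functions $h_{\boldsymbol{w}}\boldsymbol{\beta}$ lies in $\Span(h_{\boldsymbol{w}})$; for the second, $g=\sum_m \mathbf{e}_m g_m$ expands into base functions whose directions $\mathbf{e}_m$ lie in $\mathbb{S}^{M-1}$. The only cosmetic difference is in the first inclusion: you note that $\Lambda$ is a subspace containing the generators, whereas the paper writes out the coordinates of an explicit linear combination.
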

\begin{proof}
We first show that $\Span(\varphi_{\boldsymbol{\theta}}) \subset \Lambda$. Let $g\in\Span(\varphi_{\boldsymbol{\theta}})$. We can write
\begin{equation*}
g=\sum_{t=0}^T b_t\boldsymbol{\beta}_t h_{\boldsymbol{w}_{t}}
\end{equation*}
where each $b_t$ lies in $\mathbb{R}$, each $\boldsymbol{\beta}_t$ in $\mathbb{S}^{M-1}$ and each $\boldsymbol{w}_t$ in $\mathcal{W}$. Then for each $m\in\{1,\ldots,M\}$, we have:
\begin{equation*}
g_m=\sum_{t=0}^T b_t\beta_{t,m} h_{\boldsymbol{w}_{t}} \in \Span(h_{\boldsymbol{w}})
\end{equation*}
where $\beta_{t,m}$ denotes the $m$-th component of $\boldsymbol{\beta}_t$. Hence $\Span(\varphi_{\boldsymbol{\theta}}) \subset \Lambda$. Let us now prove the converse relationship. Let now $g\in\Lambda$. Then
\begin{equation*}
g=\sum_{m=1}^M\mathbf{e}_mg_m
\end{equation*}
where $\mathbf{e}_m $ is the $m$-th element of the canonical basis of $\mathbb{R}^M$ (see Equation~\eqref{eq:basis} in Section~\ref{sec:pgb}). Writing $g_m=\sum_{t=0}^{T_m}b_{(t,m)}h_{\boldsymbol{w}_{(t,m)}}$ for each $m$, we get
\begin{equation*}
g=\sum_{m=1}^M\mathbf{e}_m\sum_{t=0}^{T_m}b_{(t,m)}h_{\boldsymbol{w}_{(t,m)}}=\sum_{m=1}^M\sum_{t=0}^{T_m}b_{(t,m)}\mathbf{e}_mh_{\boldsymbol{w}_{(t,m)}}\in\Span(\varphi_{\boldsymbol{\theta}}).
\end{equation*}
Hence $\Lambda \subset \Span(\varphi_{\boldsymbol{\theta}})$. As $\Span(\varphi_{\boldsymbol{\theta}}) \subset \Lambda$ and $\Lambda \subset \Span(\varphi_{\boldsymbol{\theta}})$, $\Span(\varphi_{\boldsymbol{\theta}}) = \Lambda$.
\end{proof}

\begin{proof}[Proof of Theorem~\ref{th:lspgb}]
For $m\in\{1,\ldots,M\}$, consider the sequence $(\nu_s)_{s\in\mathbb{N}^*}$ defined by $\nu_s=\rho_m(s)$ for each $s\geq1$. Let $t\geq1$. If $t\in\nu(\mathbb{N})$, we have $g_{t,m}=g_{t-1,m}+\alpha h_{\boldsymbol{w}_{t}}\gamma_{t,m}$, where $g_{t,m}$ denotes the $m$-th component of $g_t$ and with $\gamma_{t,m}$ being such that:
\begin{equation*}
\gamma_{t,m} \in \argmin_{\gamma\in\mathbb{R}}\sum_{i=1}^n\ell_m(y_i, g_{t-1,m}(\boldsymbol{x}_i)+h_{\boldsymbol{w}_{t}}(\boldsymbol{x}_i)\gamma).
\end{equation*}
Otherwise, if $t\notin\nu(\mathbb{N})$, we still have 
\begin{equation*}
\sum_{i=1}^n\ell_m(y_i, g_{t,m}(\boldsymbol{x}_i)) \leq \sum_{i=1}^n\ell_m(y_i, g_{t-1,m}(\boldsymbol{x}_i))
\end{equation*}
because of the line search. We can therefore apply Corollary~\ref{cor:gber2} to the sequence of single-output predictors $(g_{t,m})_{t\in\mathbb{N}}$, to obtain:
\begin{equation*}
\lim_{t\rightarrow\infty} \sum_{i=1}^n\ell_m(y_i, g_{t,m}(\boldsymbol{x}_i)) = \min_{f_m\in\Span(h_{\boldsymbol{w}})}\sum_{i=1}^n\ell_m(y_i, f_m(\boldsymbol{x}_i)).
\end{equation*}
As this is true for all $m$'s, we have:
\begin{equation}\label{eq:limcomp}
\lim_{t\rightarrow\infty} \hat{\mathcal{R}}_n(g_t) = \sum_{m=1}^M\min_{f_m\in\Span(h_{\boldsymbol{w}})}\sum_{i=1}^n\ell_m(y_i, f_m(\boldsymbol{x}_i)).
\end{equation}
Now, remark that, as $\Span(\varphi_{\boldsymbol{\theta}})=\Lambda$ (by Lemma~\ref{lemma:span}), we have
\begin{equation}\label{eq:riskbound}
\hat{\mathcal{R}}_n(g_\star) = \min_{g\in\Lambda}\hat{\mathcal{R}}_n(g) =\sum_{m=1}^M\min_{f_m\in\Span(h_{\boldsymbol{w}})}\sum_{i=1}^n\ell_m(y_i, f_m(\boldsymbol{x}_i)).
\end{equation}
Combining Equations~\eqref{eq:limcomp} and \eqref{eq:riskbound} results in:
\begin{equation*}
\lim_{t\rightarrow\infty} \hat{\mathcal{R}}_n(g_t) = \hat{\mathcal{R}}_n(g_\star).
\end{equation*}
\end{proof}

\section{Strategy for choosing the projection direction}\label{sec:strategy}

In this section, we compare several strategies for choosing the projection directions $(\tilde{\boldsymbol{\beta}}_t)_{t\geq1}$ at each iteration of PGB. In our implementation of PGB, we propose a simple strategy by sampling coordinates from a uniform distribution. This has some theoretical advantages, such as requiring only one $n$-tuple of pseudo-residuals to be maintained at each iteration, instead of the entire gradient matrix. However, other strategies can be imagined, including (but not limited to) sampling the directions from other distributions. If the adopted strategy is adaptive to the current state of the algorithm, one may hope that this improves the dynamics of the descent and leads to better performance, either in terms of computation time or quality of the predictions.

Although PGB had not been studied as such previously, there are a few published works containing edge cases that fit within the PGB framework, namely with decision trees as base learners \parencite{joly2019gradient, iosipoi2022sketchboost}. They come with specific, adaptive and non-adaptive strategies for choosing the projection directions. Proving the theoretical superiority of one such strategy over the others is difficult. Even when one restricts the set of candidate directions to the canonical basis of $\mathbb{R}^M$, the problem inherits the difficulty of block selection in block coordinate gradient descent (\cite{nutini2022let}; recall that gradient boosting can be seen as gradient descent in a functional space), and is further complicated by the involvement of a potentially nonlinear class of base learners. Instead of seeking elaborate theoretical arguments, we therefore limit ourselves to an empirical comparison of such strategies. We consider the following candidate strategies, some of which are inspired by the works mentioned above \parencite{joly2019gradient, iosipoi2022sketchboost}:
\begin{itemize}
\item Uniform sampling on the canonical basis (\texttt{Unif-basis}). This is the strategy chosen in our implementation of PGB, which is presented in more detail in Section~\ref{sec:pgb}.
\item Uniform sampling on the sphere (\texttt{Unif-sphere}). Here, the direction $\tilde{\boldsymbol{\beta}}_t$ is not restricted to belong to the canonical basis of $\mathbb{R}^M$. Rather, it is sampled from the uniform measure on $\mathbb{S}^{M-1}$. For that, one draws an $M$-tuple $\boldsymbol{z}=(z_1,\ldots, z_M)$ from a standard multivariate Gaussian distribution. The direction $\tilde{\boldsymbol{\beta}}_t$ is then taken as $\frac{\boldsymbol{z}}{\Vert \boldsymbol{z}\Vert_2}$ \parencite{muller1959note}.
\item Adaptive sampling on the canonical basis (\texttt{Adapt-basis}). In this variant of the \texttt{Unif-basis} strategy, vectors of the canonical basis of $\mathbb{R}^M$ are not sampled from a uniform distribution, but with probabilities proportional to the norm of the corresponding column of the gradient matrix. More specifically, the probability $p_m$ of sampling $\mathbf{e}_m$ for $m\in\{1,\ldots,M\}$ is such that:
\begin{equation*}
    p_m\propto \sqrt{\sum_{i=1}^n \langle\nabla_2L_i(g_{t-1}), \mathbf{e}_m\rangle^2}
\end{equation*}
where the actual probabilities used are normalized to sum to 1.
\item Gauss-Southwell-type strategy (\texttt{Gauss-Southwell}). Analogously to the well-known rule for coordinate gradient descent \parencite{nutini2015coordinate}, this strategy takes as direction the vector of pseudo-residuals with the greatest Euclidean norm, i.e., $\tilde{\boldsymbol{\beta}}_t=\nabla_2L_{i_{\operatorname{max}}}(g_{t-1})$, where
\begin{equation*}
    i_{\operatorname{max}}\in\argmax_{1\leq i\leq n} \Vert \nabla_2L_i(g_{t-1}) \Vert_2.
\end{equation*}
Note that another option could be to choose the component $m$ in $\{1,\ldots,M\}$ yielding pseudo-residuals with the greatest norm, i.e.,
\begin{equation*}
    \tilde{\boldsymbol{\beta}}_t \in \argmax_{\mathbf{e}_m \in \{\mathbf{e}_1,\ldots,\mathbf{e}_M\}} \sum_{i=1}^n \langle\nabla_2L_i(g_{t-1}), \mathbf{e}_m\rangle^2.
\end{equation*}
However, as noted by \textcite{iosipoi2022sketchboost}, this does not account for the case where the components $\ell_m$ of the loss naturally result in gradients of different magnitudes, such as in multiple quantile regression. We therefore do not consider this strategy here.
\item Greedy selection (\texttt{Greedy}). We include this approach mainly to serve as a reference for the other strategies, as it requires training $M$ base learners per iteration and therefore defeats the main purpose of PGB. As its name indicates, it is a greedy strategy in which the direction at iteration $t$ is selected from the canonical basis of $\mathbb{R}^M$ to maximize the decrease in empirical risk, that is,
\begin{equation*}
    \tilde{\boldsymbol{\beta}}_t \in \argmin_{\mathbf{e}_m \in \{\mathbf{e}_1,\ldots,\mathbf{e}_M\}} \hat{\mathcal{R}}_n(g_t)
\end{equation*}
where the dependence of $g_t$ on the candidate directions $\mathbf{e}_1,\ldots,\mathbf{e}_M$, as in Equations~\eqref{eq:wttilde1} and \eqref{eq:wttilde2}, is implicit. The search is limited to the canonical basis of $\mathbb{R}^M$ for computational reasons (the problem being, in general, intractable on the whole hypersphere).
\end{itemize}

To compare these strategies, we perform the following experiment with each of the datasets described in Table~\ref{tab:data}. Each of the datasets is randomly split in half to form a training and test set of observations. We fit a PGB model to estimate 50 regularly spaced conditional quantiles (see Section~\ref{sec:mqr}) based on the training data, using each of the strategies mentioned above. In each case, the base learners used are trees, with the same hyperparameters as in Section~\ref{sec:experiments}, except for the number of trees, which is set to 100, and the learning rate, which is chosen by 5-fold cross-validation within the set $\{0.01, 0.025, 0.05, 0.1, 0.2, 0.5\}$. This is also done with an XGBoost model for comparison purposes, with the same hyperparameters and tuning strategy. The experiment is run 50 times for each dataset to provide estimates of the generalization errors. The results of this experiment are given in Figure~\ref{fig:strategy}.

\begin{figure}[!htbp]
\centering
\includegraphics[width=0.8\linewidth]{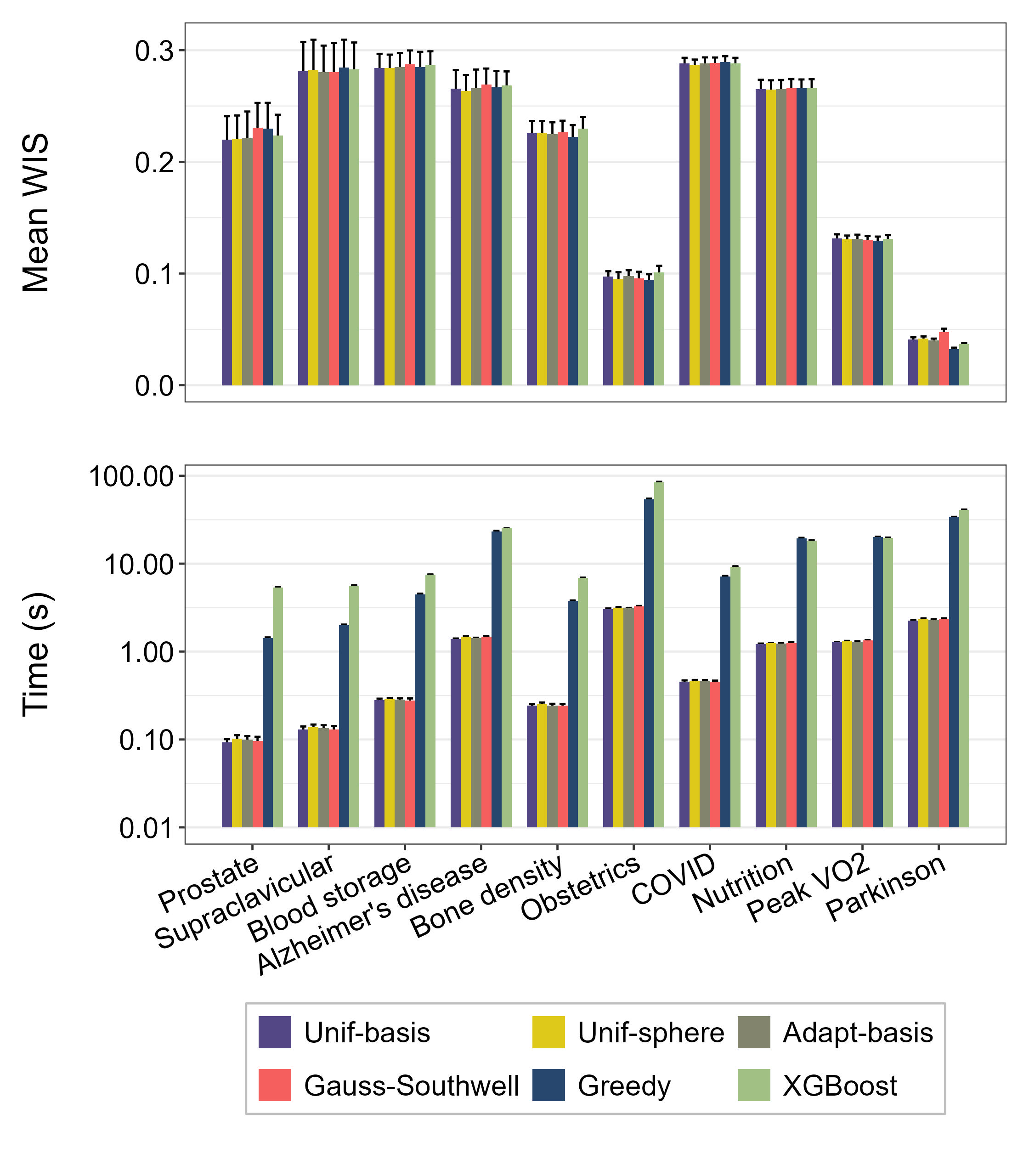}
\caption{\label{fig:strategy}Performance of various selection strategies for the projection directions in parallel gradient boosting, applied to real medical datasets. The error bars represent one standard deviation. Note the logarithmic scale for the bottom plot.}
\end{figure}

We see that all strategies yield comparable test errors, irrespective of the dataset. The errors are also close to those obtained with XGBoost (which supports the results of a similar experiment with the baseball dataset, shown in Figure~\ref{fig:crossing}). In terms of computation times, the \texttt{Unif-basis} strategy consistently appears as one of the fastest methods, though with only minimal differences compared to the other non-greedy strategies. Compared with the \texttt{Greedy} strategy and XGBoost, the gains are, as expected, massive. These results therefore support the choice of the \texttt{Unif-basis} strategy as a default method in our implementation of PGB.

\section{Convergence of the algorithm with linear base learners}\label{sec:linear}

Contrary to the existing literature in which PGB appears as an edge case, and is developed only as a method for growing decision trees \parencite{joly2019gradient, iosipoi2022sketchboost}, our formulation of PGB accounts for general classes of base learners. To illustrate how PGB behaves with base learners other than trees, we run the following experiment. On each dataset described in Table~\ref{tab:data}, we fit a PGB model to minimize the following loss function:
\begin{equation*}
    L(y_i, g(\boldsymbol{x}_i))=\sum_{m=1}^M(y_i^m-g_m(\boldsymbol{x}_i))^2
\end{equation*}
where $g_m(\boldsymbol{x}_i)$ denotes the $m$-th component of $g(\boldsymbol{x}_i)$. The oracle for this loss is the tuple of $M$ conditional moments of $Y$ given $X$. The class of base learners used is the class of linear models to predict $Y$, combined in $500$ iterations with a shrinkage parameter of $0.01$. To stabilize the optimization, we standardize the regression targets $(y_1^m,\ldots,y_n^m)$ for $1\leq m \leq M$ by their empirical mean and variance. We report the evolution of the empirical risk at each iteration in Figure~\ref{fig:linear}.

\begin{figure}[!htbp]
\centering
\includegraphics[width=1\linewidth]{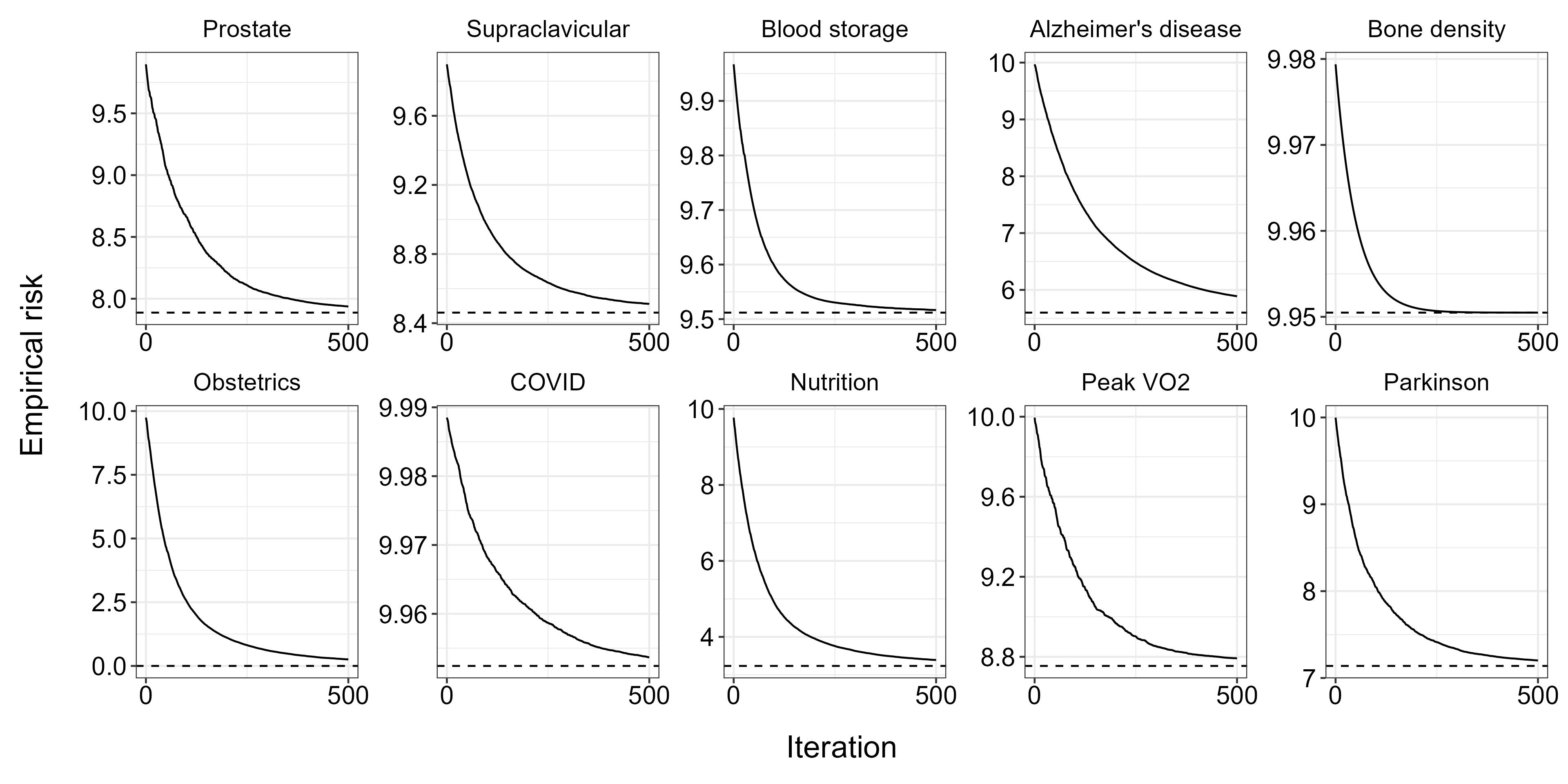}
\caption{\label{fig:linear}Training curves for the 10 real datasets to predict the $M$-tuple of conditional moments with linear base learners. The dashed horizontal line represents the optimal risk attainable in this class of models.}
\end{figure}

For all 10 datasets, the algorithm appears to converge to the minimum empirical risk attainable with this class of base learners. This illustrates the theoretical results stated in Section~\ref{sec:pgb}, especially their applicability to base learners other than the most traditional decision trees.

\section{Effect of the number of quantiles on the performance of the estimator}\label{sec:ntargets}

To investigate how the number of quantile knots affects the performance of PGB in estimating conditional distributions, we simulate 2000 observations from the following model:
\begin{equation*}
R\sim\mathcal{U}(\{1,\ldots,k\}), \quad
A\sim\mathcal{U}(\interval{0}{2\pi}), \quad
\epsilon\sim\mathcal{N}(0,0.0025),
\end{equation*}
\begin{equation*}
X=(0.5R+\epsilon)\cos(A), \quad
Y=(0.5R+\epsilon)\sin(A)
\end{equation*}
where $k$ is a natural integer controlling the number of modes in the conditional distribution of $Y$ given $X$. Observations in this model are located near $k$ concentric circles of increasing radius, with $\epsilon$ acting as a Gaussian radial perturbation around these circles. Conditionally on $X$, the distribution of $Y$ therefore has up to $2k$ modes. We run the PGB algorithm to estimate these conditional distributions, with an increasing number $M$ of target quantiles, and other hyperparameters set as in Section~\ref{sec:experiments}. The quality of the estimation is measured by the ISE, approximated as in Section~\ref{sec:experiments} using 2000 other observations sampled from the same distribution. We conduct the experiment successively with $k=1$, $2$ and $3$. The results are given in Figure~\ref{fig:modes}. 

\begin{figure}[!htbp]
\centering
\includegraphics[width=0.9\linewidth]{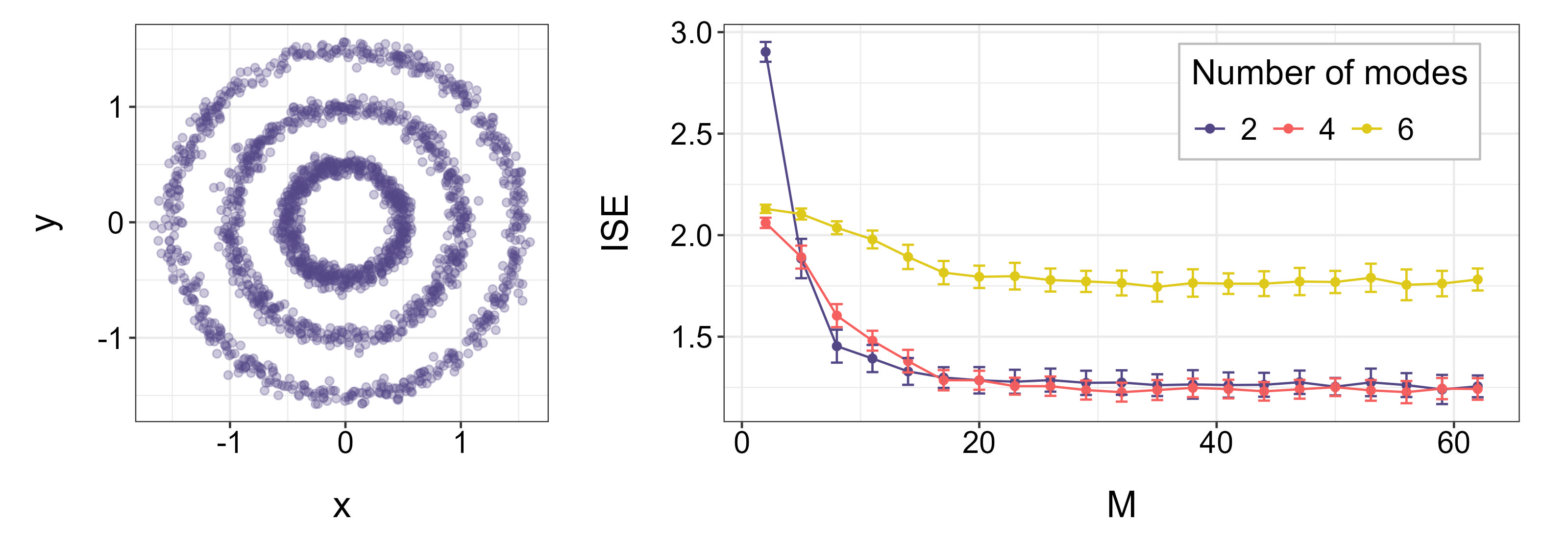}
\caption{\label{fig:modes}Right: Quality of the estimations of conditional distributions provided by parallel gradient boosting with different grids of target quantiles, on three datasets with an increasing maximum number of conditional modes. Left: plot of the dataset obtained with $k=3$. The error bars represent one standard deviation.}
\end{figure}

In this experiment, increasing the number of target quantiles consistently improves the quality of the estimation (most likely through reduction of the approximation error, as discussed in Section~\ref{sec:cqtocd}). The improvement, however, becomes insignificant beyond $M=20$, regardless of the complexity of the conditional distribution of $Y$ given $X$ (as indicated by its maximum number of modes). We conclude that the hyperparameter $M$ seems to play no role in the regularization strategy of our implementation of PGB, but instead controls the balance between computation times and the quality of the approximation of the CRPS.

\end{appendices}

\newpage

\section*{References}

\printbibliography[heading=none]

@article{Izbicki2017-lo,
  title="Converting high-dimensional regression to high-dimensional conditional density estimation",
  author="Izbicki, Rafael and B. Lee, Ann",
  journal="Electronic Journal of Statistics",
  publisher="Institute of Mathematical Statistics",
  volume=11,
  number=2,
  pages="2800--2831",
  month=jan,
  year=2017
}

@article{jones2015healthcare,
  title={Healthcare cost regressions: going beyond the mean to estimate the full distribution},
  author={Jones, Andrew M and Lomas, James and Rice, Nigel},
  journal={Health economics},
  volume={24},
  number={9},
  pages={1192--1212},
  year={2015},
  publisher={Wiley Online Library}
}

@inproceedings{strobl2021dirac,
  title={Dirac delta regression: Conditional density estimation with clinical trials},
  author={Strobl, Eric V and Visweswaran, Shyam},
  booktitle={The KDD'21 Workshop on Causal Discovery},
  pages={78--125},
  year={2021},
  organization={PMLR}
}

@article{hill1982robustness,
  title={Robustness in real life: A study of clinical laboratory data},
  author={Hill, MaryAnn and Dixon, WJ},
  journal={Biometrics},
  pages={377--396},
  year={1982},
  publisher={JSTOR}
}

@inproceedings{leech2002call,
  title={A Call for Greater Use of Nonparametric Statistics},
  author={Leech, Nancy L and Onwuegbuzie, Anthony J},
  booktitle={Mid-South Educational Research Association Annual Meeting},
  year={2002}
}

@book{breiman2017classification,
  title={Classification and regression trees},
  author={Breiman, Leo and Friedman, Jerome and Olshen, Richard A and Stone, Charles J},
  year={2017},
  publisher={Chapman and Hall/CRC}
}

@article{benezet2025learning,
  title={Learning conditional distributions on continuous spaces},
  author={Benezet, Cyril and Cheng, Ziteng and Jaimungal, Sebastian},
  journal={Journal of Machine Learning Research},
  volume={26},
  number={105},
  pages={1--64},
  year={2025}
}

@incollection{rosenblatt1969conditional,
	AUTHOR = {Rosenblatt, M.},
	TITLE = {Conditional probability density and
		 regression estimators},
	BOOKTITLE = {Multivariate analysis, {II}},
	EDITOR = {Krishnaiah, Paruchuri R.},
	PUBLISHER = {Academic Press},
	ADDRESS = {New York},
	YEAR = {1969},
	PAGES = {25--31}
}

@article{racine2004kernel,
  title={Kernel estimation of multivariate conditional distributions},
  author={Racine, Jeffrey S and Li, Qi and Zhu, Xi and others},
  journal={Annals of Economics and Finance},
  volume={5},
  number={2},
  pages={211--235},
  year={2004}
}

@article{geenens2011curse,
  title     = "Curse of dimensionality and related issues in nonparametric
               functional regression",
  author    = "Geenens, Gery",
  journal   = "Statistics Surveys",
  publisher = "Institute of Mathematical Statistics",
  volume    =  5,
  number    = "none",
  pages     = "30--43",
  month     =  jan,
  year      =  2011
}

@article{cevid2022distributional,
  title={Distributional random forests: Heterogeneity adjustment and multivariate distributional regression},
  author={Cevid, Domagoj and Michel, Loris and N{\"a}f, Jeffrey and B{\"u}hlmann, Peter and Meinshausen, Nicolai},
  journal={Journal of Machine Learning Research},
  volume={23},
  number={333},
  pages={1--79},
  year={2022}
}

@article{sart2017estimating,
  title={Estimating the conditional density by histogram type estimators and model selection},
  author={Sart, Mathieu},
  journal={ESAIM: Probability and Statistics},
  volume={21},
  pages={34--55},
  year={2017},
  publisher={EDP Sciences}
}

@inproceedings{sugiyama2010conditional,
  title={Conditional density estimation via least-squares density ratio estimation},
  author={Sugiyama, Masashi and Takeuchi, Ichiro and Suzuki, Taiji and Kanamori, Takafumi and Hachiya, Hirotaka and Okanohara, Daisuke},
  booktitle={Proceedings of the Thirteenth International Conference on Artificial Intelligence and Statistics},
  pages={781--788},
  year={2010},
  organization={JMLR Workshop and Conference Proceedings}
}

@article{bentejac2021comparative,
  title={A comparative analysis of gradient boosting algorithms},
  author={Bent{\'e}jac, Candice and Cs{\"o}rg{\H{o}}, Anna and Mart{\'\i}nez-Mu{\~n}oz, Gonzalo},
  journal={Artificial Intelligence Review},
  volume={54},
  number={3},
  pages={1937--1967},
  year={2021},
  publisher={Springer}
}

@article{hall1999methods,
  title={Methods for estimating a conditional distribution function},
  author={Hall, Peter and Wolff, Rodney CL and Yao, Qiwei},
  journal={Journal of the American Statistical association},
  volume={94},
  number={445},
  pages={154--163},
  year={1999},
  publisher={Taylor \& Francis}
}

@book{koenker2005quantile,
  title={Quantile regression},
  author={Koenker, Roger},
  volume={38},
  year={2005},
  publisher={Cambridge university press}
}

@article{lugosi2004bayes,
  title={On the Bayes-risk consistency of regularized boosting methods},
  author={Lugosi, G{\'a}bor and Vayatis, Nicolas},
  journal={The Annals of statistics},
  volume={32},
  number={1},
  pages={30--55},
  year={2004},
  publisher={Institute of Mathematical Statistics}
}

@book{bach2024learning,
  title={Learning theory from first principles},
  author={Bach, Francis},
  year={2024},
  publisher={MIT press}
}

@inproceedings{chen2016xgboost,
  title={Xgboost: A scalable tree boosting system},
  author={Chen, Tianqi and Guestrin, Carlos},
  booktitle={Proceedings of the 22nd {ACM} {SIGKDD} International Conference on Knowledge Discovery and Data Mining},
  pages={785--794},
  year={2016}
}

@article{bartlett2006adaboost,
  title={Adaboost is consistent},
  author={Bartlett, Peter and Traskin, Mikhail},
  journal={Advances in Neural Information Processing Systems},
  volume={19},
  year={2006}
}

@incollection{biau2021optimization,
  title={Optimization by gradient boosting},
  author={Biau, G{\'e}rard and Cadre, Beno{\^\i}t},
  booktitle={Advances in Contemporary Statistics and Econometrics: Festschrift in Honor of Christine Thomas-Agnan},
  pages={23--44},
  year={2021},
  publisher={Springer}
}

@techreport{Breiman:96:TR,
  author = {Breiman, L.},
  institution = {Statistics Department, University of California at Berkeley},
  number = 460,
  title = {Bias, variance, and arcing classifiers},
  year = 1996
}

@article{prokhorenkova2018catboost,
  title={{CatBoost}: unbiased boosting with categorical features},
  author={Prokhorenkova, Liudmila and Gusev, Gleb and Vorobev, Aleksandr and Dorogush, Anna Veronika and Gulin, Andrey},
  journal={Advances in neural information processing systems},
  volume={31},
  year={2018}
}

@article{ke2017lightgbm,
  title={Lightgbm: A highly efficient gradient boosting decision tree},
  author={Ke, Guolin and Meng, Qi and Finley, Thomas and Wang, Taifeng and Chen, Wei and Ma, Weidong and Ye, Qiwei and Liu, Tie-Yan},
  journal={Advances in neural information processing systems},
  volume={30},
  year={2017}
}

@inproceedings{duan2020ngboost,
  title={Ngboost: Natural gradient boosting for probabilistic prediction},
  author={Duan, Tony and Anand, Avati and Ding, Daisy Yi and Thai, Khanh K and Basu, Sanjay and Ng, Andrew and Schuler, Alejandro},
  booktitle={International conference on machine learning},
  pages={2690--2700},
  year={2020},
  organization={PMLR}
}

@article{marz2019xgboostlss,
  title={{XGBoostLSS}--An extension of {XGBoost} to probabilistic forecasting},
  author={M{\"a}rz, Alexander},
  journal={arXiv preprint arXiv:1907.03178},
  year={2019}
}

@article{beltran2024treeffuser,
  title={Treeffuser: probabilistic prediction via conditional diffusions with gradient-boosted trees},
  author={Beltran Velez, Nicolas and Grande, Alessandro A and Nazaret, Achille and Kucukelbir, Alp and Blei, David},
  journal={Advances in Neural Information Processing Systems},
  volume={37},
  pages={118296--118325},
  year={2024}
}

@article{marz2022distributional,
  title={Distributional gradient boosting machines},
  author={M{\"a}rz, Alexander and Kneib, Thomas},
  journal={arXiv preprint arXiv:2204.00778},
  year={2022}
}

@article{verbois2018probabilistic,
  title={Probabilistic forecasting of day-ahead solar irradiance using quantile gradient boosting},
  author={Verbois, Hadrien and Rusydi, Andrivo and Thiery, Alexandre},
  journal={Solar Energy},
  volume={173},
  pages={313--327},
  year={2018},
  publisher={Elsevier}
}

@article{vasseur2021comparing,
  title={Comparing quantile regression methods for probabilistic forecasting of {NO2} pollution levels},
  author={Vasseur, Sebastien P{\'e}rez and Aznarte, Jos{\'e} L},
  journal={Scientific Reports},
  volume={11},
  number={1},
  pages={11592},
  year={2021},
  publisher={Nature Publishing Group UK London}
}

@article{tyralis2021explanation,
  title={Explanation and probabilistic prediction of hydrological signatures with statistical boosting algorithms},
  author={Tyralis, Hristos and Papacharalampous, Georgia and Langousis, Andreas and Papalexiou, Simon Michael},
  journal={Remote Sensing},
  volume={13},
  number={3},
  pages={333},
  year={2021},
  publisher={MDPI}
}

@article{sluijterman2025composite,
  title={Composite quantile regression with {XGBoost} using the novel arctan pinball loss},
  author={Sluijterman, Laurens and Kreuwel, Frank and Cator, Eric and Heskes, Tom},
  journal={International Journal of Machine Learning and Cybernetics},
  pages={1--15},
  year={2025},
  publisher={Springer}
}

@article{joly2019gradient,
  title={Gradient tree boosting with random output projections for multi-label classification and multi-output regression},
  author={Joly, Arnaud and Wehenkel, Louis and Geurts, Pierre},
  journal={arXiv preprint arXiv:1905.07558},
  year={2019}
}

@inproceedings{joly2014random,
  title={Random forests with random projections of the output space for high dimensional multi-label classification},
  author={Joly, Arnaud and Geurts, Pierre and Wehenkel, Louis},
  booktitle={Joint European conference on machine learning and knowledge discovery in databases},
  pages={607--622},
  year={2014},
  organization={Springer}
}

@article{friedman2002stochastic,
  title={Stochastic gradient boosting},
  author={Friedman, Jerome H},
  journal={Computational statistics \& data analysis},
  volume={38},
  number={4},
  pages={367--378},
  year={2002},
  publisher={Elsevier}
}

@article{lei2014distribution,
  title={Distribution-free prediction bands for non-parametric regression},
  author={Lei, Jing and Wasserman, Larry},
  journal={Journal of the Royal Statistical Society Series B: Statistical Methodology},
  volume={76},
  number={1},
  pages={71--96},
  year={2014},
  publisher={Oxford University Press}
}

@MISC{Athanasios_Tsanas2009-gr,
  author       = {Tsanas, Athanasios and Little, Max},
  title        = {{Parkinsons Telemonitoring}},
  year         = {2009},
  howpublished = {{UCI} Machine Learning Repository}
}

@Manual{friendly2020package,  
  title = {genridge: Generalized Ridge Trace Plots for Ridge Regression},
  author = {Michael Friendly},
  year = {2024},
  url = {https://CRAN.R-project.org/package=genridge}
}

@Manual{medicaldata2022,
  title = {{medicaldata}: Data Package for Medical Datasets},
  author = {Peter Higgins},
  year = {2022},
  url = {https://higgi13425.github.io/medicaldata/}
}

@Manual{applied2018,
  title = {{AppliedPredictiveModeling}: Functions and Data Sets for 'Applied Predictive Modeling'},
  author = {Max Kuhn and Kjell Johnson},
  year = {2018},
  url = {https://CRAN.R-project.org/package=AppliedPredictiveModeling}
}

@Manual{opencesp2026,
    title = {{opencesp}: Generation and Evaluation of Synthetic Tabular Datasets},
    author = {Rémy Chapelle},
    year = {2026},
    url = {https://CRAN.R-project.org/package=opencesp}
}

@Manual{rfsrc2025,
  title = {Fast Unified Random Forests for Survival, Regression, and Classification ({RF-SRC})},
  author = {H. Ishwaran and U.B. Kogalur},
  publisher = {manual},
  year = {2025},
  url = {https://cran.r-project.org/package=randomForestSRC}
}

@Manual{causaldata2024,
  title = {{causaldata}: Example Data Sets for Causal Inference Textbooks},
  author = {Nick Huntington-Klein and Malcolm Barrett},
  year = {2024},
  url = {https://CRAN.R-project.org/package=causaldata}
}

@Manual{loondata2025,
  title = {{loon.data}: Data Used to Illustrate 'Loon' Functionality},
  author = {R. Wayne Oldford and Adrian Waddell},
  year = {2025},
  url = {https://CRAN.R-project.org/package=loon.data}
}

@mastersthesis{nielsen2016tree,
  title={Tree boosting with xgboost-why does xgboost win "every" machine learning competition?},
  author={Nielsen, Didrik},
  year={2016},
  school={NTNU}
}

@article{he1998bivariate,
  title={Bivariate quantile smoothing splines},
  author={He, Xuming and Ng, Pin and Portnoy, Stephen},
  journal={Journal of the Royal Statistical Society Series B: Statistical Methodology},
  volume={60},
  number={3},
  pages={537--550},
  year={1998},
  publisher={Oxford University Press}
}

@article{liu2009stepwise,
  title={Stepwise multiple quantile regression estimation using non-crossing constraints},
  author={Liu, Yufeng and Wu, Yichao},
  journal={Statistics and its Interface},
  volume={2},
  number={3},
  pages={299--310},
  year={2009},
  publisher={International Press of Boston}
}

@article{koenker2001quantile,
  title={Quantile regression},
  author={Koenker, Roger and Hallock, Kevin F},
  journal={Journal of economic perspectives},
  volume={15},
  number={4},
  pages={143--156},
  year={2001},
  publisher={American Economic Association}
}

@ARTICLE{Steinwart2011-si,
  title     = "Estimating conditional quantiles with the help of the pinball
               loss",
  author    = "Steinwart, Ingo and Christmann, Andreas",
  journal   = "Bernoulli (Andover.)",
  publisher = "Bernoulli Society for Mathematical Statistics and Probability",
  volume    =  17,
  number    =  1,
  pages     = "211--225",
  month     =  feb,
  year      =  2011,
  language  = "en"
}

@article{bracher2021evaluating,
  title={Evaluating epidemic forecasts in an interval format},
  author={Bracher, Johannes and Ray, Evan L and Gneiting, Tilmann and Reich, Nicholas G},
  journal={PLoS computational biology},
  volume={17},
  number={2},
  pages={e1008618},
  year={2021},
  publisher={Public Library of Science San Francisco, CA USA}
}

@ARTICLE{Brehmer2021-ie,
  title     = "Scoring interval forecasts: Equal-tailed, shortest, and modal
               interval",
  author    = "Brehmer, Jonas R and Gneiting, Tilmann",
  journal   = "Bernoulli (Andover.)",
  publisher = "Bernoulli Society for Mathematical Statistics and Probability",
  volume    =  27,
  number    =  3,
  month     =  may,
  year      =  2021
}

@article{duchemin2025efficient,
  title={Efficient distributional regression trees learning algorithms for calibrated non-parametric probabilistic forecasts},
  author={Duchemin, Quentin and Obozinski, Guillaume},
  journal={arXiv preprint arXiv:2502.05157},
  year={2025}
}

@article{natekin2013gradient,
  title={Gradient boosting machines, a tutorial},
  author={Natekin, Alexey and Knoll, Alois},
  journal={Frontiers in neurorobotics},
  volume={7},
  pages={21},
  year={2013},
  publisher={Frontiers Media SA}
}

@article{reisach2025transforming,
  title={Transforming Conditional Density Estimation Into a Single Nonparametric Regression Task},
  author={Reisach, Alexander G and Collier, Olivier and Luedtke, Alex and Chambaz, Antoine},
  journal={arXiv preprint arXiv:2511.18530},
  year={2025}
}

@article{gao2022lincde,
  title={Lincde: conditional density estimation via lindsey's method},
  author={Gao, Zijun and Hastie, Trevor},
  journal={Journal of machine learning research},
  volume={23},
  number={52},
  pages={1--55},
  year={2022}
}

@article{diaz2011super,
  title={Super learner based conditional density estimation with application to marginal structural models},
  author={D{\'\i}az Mu{\~n}oz, Iv{\'a}n and Van Der Laan, Mark J},
  year={2011},
  journal   = {The international journal of biostatistics},
  publisher={bepress}
}

@article{koenker2013distributional,
  title={Distributional vs. quantile regression},
  author={Koenker, Roger and Leorato, Samantha and Peracchi, Franco},
  year={2013},
  journal = {SSRN Electronic Journal},
  publisher={CEIS Working Paper}
}

@article{gneiting2007strictly,
  title={Strictly proper scoring rules, prediction, and estimation},
  author={Gneiting, Tilmann and Raftery, Adrian E},
  journal={Journal of the American statistical Association},
  volume={102},
  number={477},
  pages={359--378},
  year={2007},
  publisher={Taylor \& Francis}
}

@article{chernozhukov2010quantile,
  title={Quantile and probability curves without crossing},
  author={Chernozhukov, Victor and Fern{\'a}ndez-Val, Iv{\'a}n and Galichon, Alfred},
  journal={Econometrica},
  volume={78},
  number={3},
  pages={1093--1125},
  year={2010},
  publisher={Wiley Online Library}
}

@inproceedings{si2017gradient,
  title={Gradient boosted decision trees for high dimensional sparse output},
  author={Si, Si and Zhang, Huan and Keerthi, S Sathiya and Mahajan, Dhruv and Dhillon, Inderjit S and Hsieh, Cho-Jui},
  booktitle={International conference on machine learning},
  pages={3182--3190},
  year={2017},
  organization={PMLR}
}

@book{schapire2013boosting,
  title={Boosting: Foundations and algorithms},
  author={Schapire, Robert E and Freund, Yoav},
  year={2013},
  publisher={Emerald Group Publishing Limited}
}

@article{jokiel2024estimation,
  title={Estimation of conditional inequality curves and measures via estimating the conditional quantile function},
  author={Jokiel-Rokita, Alicja and Pi{\k{a}}tek, Sylwester and Topolnicki, Rafa{\l}},
  journal={arXiv preprint arXiv:2412.20228},
  year={2024}
}

@article{pardalos1999algorithms,
  title={Algorithms for a class of isotonic regression problems},
  author={Pardalos, Panos M and Xue, Guoliang},
  journal={Algorithmica},
  volume={23},
  number={3},
  pages={211--222},
  year={1999},
  publisher={Springer}
}

@inproceedings{ranka1998clouds,
  title={{CLOUDS}: A decision tree classifier for large datasets},
  author={Ranka, Sanjay and Singh, Vineet},
  booktitle={Proceedings of the 4th Knowledge Discovery and Data Mining Conference},
  volume={2},
  number={8},
  pages={2--8},
  year={1998}
}

@article{meinshausen2006quantile,
  title={Quantile regression forests.},
  author={Meinshausen, Nicolai and Ridgeway, Greg},
  journal={Journal of machine learning research},
  volume={7},
  number={6},
  year={2006}
}

@article{tibshirani1996regression,
  title={Regression shrinkage and selection via the lasso},
  author={Tibshirani, Robert},
  journal={Journal of the Royal Statistical Society Series B: Statistical Methodology},
  volume={58},
  number={1},
  pages={267--288},
  year={1996},
  publisher={Oxford University Press}
}

@article{strobl2007bias,
  title={Bias in random forest variable importance measures: Illustrations, sources and a solution},
  author={Strobl, Carolin and Boulesteix, Anne-Laure and Zeileis, Achim and Hothorn, Torsten},
  journal={BMC bioinformatics},
  volume={8},
  number={1},
  pages={25},
  year={2007},
  publisher={Springer}
}

@book{hastie2009elements,
  title={The elements of statistical learning: data mining, inference, and prediction},
  author={Hastie, Trevor and Tibshirani, Robert and Friedman, Jerome H},
  volume={2},
  year={2009},
  publisher={Springer}
}

@article{revelas2025random,
  title={When do Random Forests work?},
  author={Revelas, Christos and Boldea, Otilia and Werker, Bas JM},
  journal={arXiv preprint arXiv:2504.12860},
  year={2025}
}

@article{elith2008working,
  title={A working guide to boosted regression trees},
  author={Elith, Jane and Leathwick, John R and Hastie, Trevor},
  journal={Journal of animal ecology},
  volume={77},
  number={4},
  pages={802--813},
  year={2008},
  publisher={Wiley Online Library}
}

@article{kaneko2022cross,
  title={Cross-validated permutation feature importance considering correlation between features},
  author={Kaneko, Hiromasa},
  journal={Analytical Science Advances},
  volume={3},
  number={9-10},
  pages={278--287},
  year={2022},
  publisher={Wiley Online Library}
}

@article{lei2018distribution,
  title={Distribution-free predictive inference for regression},
  author={Lei, Jing and G’Sell, Max and Rinaldo, Alessandro and Tibshirani, Ryan J and Wasserman, Larry},
  journal={Journal of the American Statistical Association},
  volume={113},
  number={523},
  pages={1094--1111},
  year={2018},
  publisher={Taylor \& Francis}
}

@article{lundberg2017unified,
  title={A unified approach to interpreting model predictions},
  author={Lundberg, Scott M and Lee, Su-In},
  journal={Advances in Neural Information Processing Systems},
  volume={30},
  year={2017}
}

@article{chin2026generative,
  title={Generative and Nonparametric Approaches for Conditional Distribution Estimation: Methods, Perspectives, and Comparative Evaluations},
  author={Chin, Yen-Shiu and Jou, Zhi-Yu and Morimoto, Toshinari and Wang, Chia-Tse and Chang, Ming-Chung and Yen, Tso-Jung and Huang, Su-Yun and Hsing, Tailen},
  journal={arXiv preprint arXiv:2601.22650},
  year={2026}
}

@article{yang2024conditional,
  title={Conditional density estimation with histogram trees},
  author={Yang, Lincen and van Leeuwen, Matthijs},
  journal={Advances in Neural Information Processing Systems},
  volume={37},
  pages={117315--117339},
  year={2024}
}

@article{rothfuss2019conditional,
  title={Conditional density estimation with neural networks: Best practices and benchmarks},
  author={Rothfuss, Jonas and Ferreira, Fabio and Walther, Simon and Ulrich, Maxim},
  journal={arXiv preprint arXiv:1903.00954},
  year={2019}
}

@article{mirza2014conditional,
  title={Conditional generative adversarial nets},
  author={Mirza, Mehdi and Osindero, Simon},
  journal={arXiv preprint arXiv:1411.1784},
  year={2014}
}

@article{liu2025conditional,
  title={Conditional generative models for synthetic tabular data: Applications for precision medicine and diverse representations},
  author={Liu, Kara and Altman, Russ B},
  journal={Annual review of biomedical data science},
  volume={8},
  number={1},
  pages={21--49},
  year={2025},
  publisher={Annual Reviews}
}

@article{borchani2015survey,
  title={A survey on multi-output regression},
  author={Borchani, Hanen and Varando, Gherardo and Bielza, Concha and Larranaga, Pedro},
  journal={Wiley Interdisciplinary Reviews: Data Mining and Knowledge Discovery},
  volume={5},
  number={5},
  pages={216--233},
  year={2015},
  publisher={Wiley Online Library}
}

@ARTICLE{Kim2012rn,
  title     = "Tree-guided group lasso for multi-response regression with
               structured sparsity, with an application to {eQTL} mapping",
  author    = "Kim, Seyoung and Xing, Eric P",
  journal   = "The Annals of Applied Statistics",
  publisher = "Institute of Mathematical Statistics",
  volume    =  6,
  number    =  3,
  pages     = "1095--1117",
  month     =  sep,
  year      =  2012
}

@article{li2017better,
  title={On better exploring and exploiting task relationships in multitask learning: Joint model and feature learning},
  author={Li, Ya and Tian, Xinmei and Liu, Tongliang and Tao, Dacheng},
  journal={IEEE transactions on neural networks and learning systems},
  volume={29},
  number={5},
  pages={1975--1985},
  year={2017},
  publisher={IEEE}
}

@article{chen2013convex,
  title={A convex formulation for learning a shared predictive structure from multiple tasks.},
  author={Chen, J and Tang, L and Liu, J and Ye, J},
  journal={IEEE Transactions on Pattern Analysis and Machine Intelligence},
  volume={35},
  number={5},
  pages={1025--1038},
  year={2013}
}

@article{rahman2017integratedmrf,
  title={{IntegratedMRF}: random forest-based framework for integrating prediction from different data types},
  author={Rahman, Raziur and Otridge, John and Pal, Ranadip},
  journal={Bioinformatics},
  volume={33},
  number={9},
  pages={1407--1410},
  year={2017},
  publisher={Oxford University Press}
}

@article{d2017regression,
  title={Regression trees for multivalued numerical response variables},
  author={D’Ambrosio, Antonio and Aria, Massimo and Iorio, Carmela and Siciliano, Roberta},
  journal={Expert Systems with Applications},
  volume={69},
  pages={21--28},
  year={2017},
  publisher={Elsevier}
}

@article{simm2014tree,
  title={Tree-based ensemble multi-task learning method for classification and regression},
  author={Simm, Jaak and De Abril, Ildefons Magrans and Sugiyama, Masashi},
  journal={IEICE TRANSACTIONS on Information and Systems},
  volume={97},
  number={6},
  pages={1677--1681},
  year={2014},
  publisher={The Institute of Electronics, Information and Communication Engineers}
}

@article{jeong2020regularization,
  title={Regularization-based model tree for multi-output regression},
  author={Jeong, Jun-Yong and Kang, Ju-Seok and Jun, Chi-Hyuck},
  journal={Information Sciences},
  volume={507},
  pages={240--255},
  year={2020},
  publisher={Elsevier}
}

@article{schmid2023tree,
  title={Tree-based ensembles for multi-output regression: Comparing multivariate approaches with separate univariate ones},
  author={Schmid, Lena and Gerharz, Alexander and Groll, Andreas and Pauly, Markus},
  journal={Computational Statistics \& Data Analysis},
  volume={179},
  pages={107628},
  year={2023},
  publisher={Elsevier}
}

@article{liu2019hitboost,
  title={{HitBoost}: survival analysis via a multi-output gradient boosting decision tree method},
  author={Liu, Pei and Fu, Bo and Yang, Simon X},
  journal={IEEE Access},
  volume={7},
  pages={56785--56795},
  year={2019},
  publisher={IEEE}
}

@article{emami2025condensed,
  title={Condensed-gradient boosting},
  author={Emami, Seyedsaman and Mart{\'\i}nez-Mu{\~n}oz, Gonzalo},
  journal={International Journal of Machine Learning and Cybernetics},
  volume={16},
  number={1},
  pages={687--701},
  year={2025},
  publisher={Springer}
}

@article{kocev2013tree,
  title={Tree ensembles for predicting structured outputs},
  author={Kocev, Dragi and Vens, Celine and Struyf, Jan and D{\v{z}}eroski, Sa{\v{s}}o},
  journal={Pattern Recognition},
  volume={46},
  number={3},
  pages={817--833},
  year={2013},
  publisher={Elsevier}
}

@article{tran2024critical,
  title={A critical review of multi-output support vector regression},
  author={Tran, Nguyen Khoa and K{\"u}hle, Laura C and Klau, Gunnar W},
  journal={Pattern Recognition Letters},
  volume={178},
  pages={69--75},
  year={2024},
  publisher={Elsevier}
}

@article{vazquez2003multi,
  title={Multi-output suppport vector regression},
  author={Vazquez, Emmanuel and Walter, Eric},
  journal={IFAC Proceedings Volumes},
  volume={36},
  number={16},
  pages={1783--1788},
  year={2003},
  publisher={Elsevier}
}

@article{baldassarre2012multi,
  title={Multi-output learning via spectral filtering},
  author={Baldassarre, Luca and Rosasco, Lorenzo and Barla, Annalisa and Verri, Alessandro},
  journal={Machine learning},
  volume={87},
  number={3},
  pages={259--301},
  year={2012},
  publisher={Springer}
}

@article{alvarez2012kernels,
  title={Kernels for vector-valued functions: A review},
  author={Alvarez, Mauricio A and Rosasco, Lorenzo and Lawrence, Neil D},
  journal={Foundations and Trends{\textregistered} in Machine Learning},
  volume={4},
  number={3},
  pages={195--266},
  year={2012},
  publisher={Emerald Publishing Limited}
}

@article{emami2024deep,
  title={Deep learning for multi-output regression using gradient boosting},
  author={Emami, Seyedsaman and Mart{\'\i}nez-Mu{\~n}oz, Gonzalo},
  journal={Ieee Access},
  volume={12},
  pages={17760--17772},
  year={2024},
  publisher={IEEE}
}

@article{arashloo2022multi,
  title={Multi-target regression via non-linear output structure learning},
  author={Arashloo, Shervin Rahimzadeh and Kittler, Josef},
  journal={Neurocomputing},
  volume={492},
  pages={572--580},
  year={2022},
  publisher={Elsevier}
}

@article{au2019component,
  title={Component-wise boosting of targets for multi-output prediction},
  author={Au, Quay and Schalk, Daniel and Casalicchio, Giuseppe and Schoedel, Ramona and Stachl, Clemens and Bischl, Bernd},
  journal={arXiv preprint arXiv:1904.03943},
  year={2019}
}

@article{wang2025semantics,
  title={Semantics-guided multi-task genetic programming for multi-output regression},
  author={Wang, Chunyu and Chen, Qi and Xue, Bing and Zhang, Mengjie},
  journal={Pattern Recognition},
  volume={161},
  pages={111289},
  year={2025},
  publisher={Elsevier}
}

@article{xu2017composite,
  title={Composite quantile regression neural network with applications},
  author={Xu, Qifa and Deng, Kai and Jiang, Cuixia and Sun, Fang and Huang, Xue},
  journal={Expert Systems with Applications},
  volume={76},
  pages={129--139},
  year={2017},
  publisher={Elsevier}
}

@article{zhou2025conformal,
  title={Conformal prediction: A data perspective},
  author={Zhou, Xiaofan and Chen, Baiting and Gui, Yu and Cheng, Lu},
  journal={ACM computing surveys},
  volume={58},
  number={2},
  pages={1--37},
  year={2025},
  publisher={ACM New York, NY}
}

@inproceedings{plassier2025probabilistic,
  title={Probabilistic conformal prediction with approximate conditional validity},
  author={Plassier, Vincent and Fishkov, Alexander and Guizani, Mohsen and Panov, Maxim and Moulines, Eric},
  booktitle={International Conference on Learning Representations},
  volume={2025},
  pages={62896--62928},
  year={2025}
}

@article{zou2026conformalized,
  title={Conformalized Percentile Interval: Finite Sample Validity and Improved Conditional Performance},
  author={Zou, Ran and Zhu, Wanrong and Nan, Bin},
  journal={arXiv preprint arXiv:2605.03233},
  year={2026}
}

@book{krzysztofowicz2024probabilistic,
  title={Probabilistic forecasts and optimal decisions},
  author={Krzysztofowicz, Roman},
  year={2024},
  publisher={John Wiley \& Sons}
}

@article{enders2025missing,
  title={Missing data: An update on the state of the art.},
  author={Enders, Craig K},
  journal={Psychological methods},
  volume={30},
  number={2},
  pages={322},
  year={2025},
  publisher={American Psychological Association}
}

@inproceedings{cormode2025synthetic,
  title={Synthetic tabular data: Methods, attacks and defenses},
  author={Cormode, Graham and Maddock, Samuel and Ullah, Enayat and Gade, Shripad},
  booktitle={Proceedings of the 31st {ACM} {SIGKDD} Conference on Knowledge Discovery and Data Mining V. 2},
  pages={5989--5998},
  year={2025}
}

@incollection{oro96521,
          volume = {1},
       booktitle = {Encyclopedia of Bioinformatics and Computational Biology, 2nd edition},
           pages = {483--494},
           title = {Bayes' Theorem and Naive Bayes Classifier},
            year = {2025},
         address = {London, UK},
          editor = {Shoba Ranganathan and Mario Cannataro and Asif M. Khan},
       publisher = {Elsevier},
             url = {https://oro.open.ac.uk/96521/},
            isbn = {9780323955027},
          author = {Berrar, Daniel}
}

@article{qiu2026review,
  title={A Review of Recent Advances in High-Dimensional Quantile Regression},
  author={Qiu, Zhixin and Peng, Chuanhui and Tang, Yanlin and Wang, Huixia Judy},
  journal={Wiley Interdisciplinary Reviews: Computational Statistics},
  volume={18},
  number={1},
  pages={e70054},
  year={2026},
  publisher={Wiley Online Library}
}

@book{Izbicki2025,
  author    = {Rafael Izbicki},
  title     = {Machine Learning Beyond Point Predictions: Uncertainty Quantification},
  edition   = {1st},
  year      = {2025},
  pages     = {260},
  isbn      = {978-65-01-20272-3}
}

@article{zhao2025adaptive,
  title={Adaptive kernel conditional density estimation},
  author={Zhao, Wenjun and Tabak, Esteban G},
  journal={Information and Inference: A Journal of the IMA},
  volume={14},
  number={1},
  pages={iaae037},
  year={2025},
  publisher={Oxford University Press}
}

@article{scornet2026theory,
  title={Theory of Random Forests},
  author={Scornet, Erwan and Hooker, Giles},
  journal={Annual Review of Statistics and Its Application},
  volume={13},
  number={1},
  pages={99--121},
  year={2026},
  publisher={Annual Reviews}
}

@inproceedings{melnychuk2023normalizing,
  title={Normalizing flows for interventional density estimation},
  author={Melnychuk, Valentyn and Frauen, Dennis and Feuerriegel, Stefan},
  booktitle={International Conference on Machine Learning},
  pages={24361--24397},
  year={2023},
  organization={PMLR}
}

@article{waghmare2025proper,
author = "Waghmare, Kartik and Ziegel, Johanna",
   title = "Proper Scoring Rules for Estimation and Forecast Evaluation", 
   journal= "Annual Review of Statistics and Its Application",
   year = "2026",
   volume = "13",
   number = "Volume 13, 2026",
   pages = "271-296",
   doi = "https://doi.org/10.1146/annurev-statistics-042424-050626",
   url = "https://www.annualreviews.org/content/journals/10.1146/annurev-statistics-042424-050626",
   publisher = "Annual Reviews",
   issn = "2326-831X",
   type = "Journal Article"
}

@article{bauer2024pinball,
  title={Pinball boosting of regression quantiles},
  author={Bauer, Ida and Haupt, Harry and Linner, Stefan},
  journal={Computational Statistics \& Data Analysis},
  volume={200},
  pages={108027},
  year={2024},
  publisher={Elsevier}
}

@article{dudek2025multivariate,
  title={Multivariate forecasting of bitcoin volatility with gradient boosting: Deterministic, probabilistic, and feature importance perspectives},
  author={Dudek, Grzegorz and Kasprzyk, Mateusz and Pe{\l}ka, Pawe{\l}},
  journal={Expert Systems with Applications},
  pages={130404},
  year={2025},
  publisher={Elsevier}
}

@inproceedings{karakacs2025delivery,
  title={Delivery Time Prediction in E-Commerce: A Quantile Regression Approach},
  author={Karaka{\c{s}}, Dilge and {\.I}{\c{s}}eri, Nilay and Ke{\c{c}}eci, Sinan and Ren{\c{c}}bero{\u{g}}lu, Emre},
  booktitle={2025 International Conference on INnovations in Intelligent SysTems and Applications ({INISTA})},
  pages={1--4},
  year={2025},
  organization={IEEE}
}

@article{nzarigema2022quantile,
  title={Quantile regression modelling with {LightGBM} for building energy benchmarking},
  author={Nzarigema, Jean d’Amour and Ngarambe, Jack and Zo, Chung-Hoon and Yun, Geun-Young},
  journal={Journal of Korean Institute of Architectural Sustainable Environment and Building Systems},
  volume={16},
  number={5},
  pages={359--373},
  year={2022},
  publisher={Korean Institute Of Architectural Sustainable Environment And Building Systems}
}

@article{fouillen2023proximal,
  title={Proximal boosting: Aggregating weak learners to minimize non-differentiable losses},
  author={Fouillen, Erwan and Boyer, Claire and Sangnier, Maxime},
  journal={Neurocomputing},
  volume={520},
  pages={301--319},
  year={2023},
  publisher={Elsevier}
}

@article{iosipoi2022sketchboost,
  title={Sketchboost: Fast gradient boosted decision tree for multioutput problems},
  author={Iosipoi, Leonid and Vakhrushev, Anton},
  journal={Advances in Neural Information Processing Systems},
  volume={35},
  pages={25422--25435},
  year={2022}
}

@article{moon2026monotone,
  title={Monotone Composite Quantile Regression via Second-Order Gradient Boosting Framework},
  author={Moon, Sangjun and Hong, Sungchul and Park, Beomjin},
  journal={Machine Learning},
  volume={115},
  number={6},
  pages={127},
  year={2026},
  publisher={Springer}
}

@article{taieb2016forecasting,
  title={Forecasting uncertainty in electricity smart meter data by boosting additive quantile regression},
  author={Taieb, Souhaib Ben and Huser, Rapha{\"e}l and Hyndman, Rob J and Genton, Marc G},
  journal={IEEE Transactions on Smart Grid},
  volume={7},
  number={5},
  pages={2448--2455},
  year={2016},
  publisher={IEEE}
}

@article{landry2016probabilistic,
  title={Probabilistic gradient boosting machines for {GEFCom2014} wind forecasting},
  author={Landry, Mark and Erlinger, Thomas P and Patschke, David and Varrichio, Craig},
  journal={International Journal of Forecasting},
  volume={32},
  number={3},
  pages={1061--1066},
  year={2016},
  publisher={Elsevier}
}

@article{papacharalampous2022probabilistic,
  title={Probabilistic water demand forecasting using quantile regression algorithms},
  author={Papacharalampous, Georgia and Langousis, Andreas},
  journal={Water Resources Research},
  volume={58},
  number={6},
  pages={e2021WR030216},
  year={2022},
  publisher={Wiley Online Library}
}

@article{friedman2001greedy,
  title={Greedy function approximation: a gradient boosting machine},
  author={Friedman, Jerome H},
  journal={Annals of statistics},
  pages={1189--1232},
  year={2001},
  publisher={JSTOR}
}

@article{fakoor2023flexible,
  title={Flexible model aggregation for quantile regression},
  author={Fakoor, Rasool and Kim, Taesup and Mueller, Jonas and Smola, Alexander J and Tibshirani, Ryan J},
  journal={Journal of Machine Learning Research},
  volume={24},
  number={162},
  pages={1--45},
  year={2023}
}

@article{thung2018brief,
  title={A brief review on multi-task learning},
  author={Thung, Kim-Han and Wee, Chong-Yaw},
  journal={Multimedia Tools and Applications},
  volume={77},
  number={22},
  pages={29705--29725},
  year={2018},
  publisher={Springer}
}

@article{xu2019survey,
  title={Survey on multi-output learning},
  author={Xu, Donna and Shi, Yaxin and Tsang, Ivor W and Ong, Yew-Soon and Gong, Chen and Shen, Xiaobo},
  journal={IEEE Transactions on Neural Networks and Learning Systems},
  volume={31},
  number={7},
  pages={2409--2429},
  year={2019},
  publisher={IEEE}
}

@article{argyriou2006multi,
  title={Multi-task feature learning},
  author={Argyriou, Andreas and Evgeniou, Theodoros and Pontil, Massimiliano},
  journal={Advances in Neural Information Processing Systems},
  volume={19},
  year={2006}
}

@article{zhang2021survey,
  title={A survey on multi-task learning},
  author={Zhang, Yu and Yang, Qiang},
  journal={IEEE Transactions on Knowledge and Data Engineering},
  volume={34},
  number={12},
  pages={5586--5609},
  year={2021},
  publisher={IEEE}
}

@article{caruana1997multitask,
  title={Multitask learning},
  author={Caruana, Rich},
  journal={Machine learning},
  volume={28},
  number={1},
  pages={41--75},
  year={1997},
  publisher={Springer}
}

@inproceedings{evgeniou2004regularized,
  title={Regularized multi--task learning},
  author={Evgeniou, Theodoros and Pontil, Massimiliano},
  booktitle={Proceedings of the Tenth {ACM} {SIGKDD} International Conference on Knowledge Discovery and Data Mining},
  pages={109--117},
  year={2004}
}

@article{yu2025multitask,
  title={Multitask learning 1997--2024: Part I fundamentals},
  author={Yu, Jun and Liu, Xiaokang and Luo, Chongliang and Huang, Jin and Zhou, Rong and Liu, Yixin and Hu, Jie and Chen, Jianmin and Zhang, Ke and Zhang, Dazheng and others},
  journal={Harvard Data Science Review},
  volume={7},
  number={3},
  year={2025},
  publisher={The MIT Press}
}

@article{liu2025multi,
  title={Multi-task learning 1997--2024: Part ii regularization and optimization},
  author={Liu, Xiaokang and Yu, Jun and Dai, Yutong and Shen, Yishan and Chen, Jianmin and Hu, Jie and Huang, Jin and Liu, Yixin and Yin, Yilong and Namboodiri, Vinod and others},
  journal={Harvard Data Science Review},
  volume={7},
  number={3},
  year={2025},
  publisher={The MIT Press}
}

@article{yu2025multitaskb,
  title={Multitask Learning 1997--2024: Part III Applications},
  author={Yu, Jun and Huang, Jin and Zhang, Kai and Liu, Yixin and Zhang, Ke and Shen, Yishan and Zhang, Dazheng and Zhou, Rong and Liu, Xiaokang and Yin, Yilong and others},
  journal={Harvard Data Science Review},
  volume={7},
  number={3},
  year={2025},
  publisher={The MIT Press}
}

@article{jacob2008clustered,
  title={Clustered multi-task learning: A convex formulation},
  author={Jacob, Laurent and Vert, Jean-philippe and Bach, Francis},
  journal={Advances in Neural Information Processing Systems},
  volume={21},
  year={2008}
}

@article{zhang2018overview,
  title={An overview of multi-task learning},
  author={Zhang, Yu and Yang, Qiang},
  journal={National Science Review},
  volume={5},
  number={1},
  pages={30--43},
  year={2018},
  publisher={Oxford University Press}
}

@inproceedings{chen2018gradnorm,
  title={Gradnorm: Gradient normalization for adaptive loss balancing in deep multitask networks},
  author={Chen, Zhao and Badrinarayanan, Vijay and Lee, Chen-Yu and Rabinovich, Andrew},
  booktitle={International conference on machine learning},
  pages={794--803},
  year={2018},
  organization={PMLR}
}

@article{yu2020gradient,
  title={Gradient surgery for multi-task learning},
  author={Yu, Tianhe and Kumar, Saurabh and Gupta, Abhishek and Levine, Sergey and Hausman, Karol and Finn, Chelsea},
  journal={Advances in Neural Information Processing Systems},
  volume={33},
  pages={5824--5836},
  year={2020}
}

@article{liu2021conflict,
  title={Conflict-averse gradient descent for multi-task learning},
  author={Liu, Bo and Liu, Xingchao and Jin, Xiaojie and Stone, Peter and Liu, Qiang},
  journal={Advances in Neural Information Processing Systems},
  volume={34},
  pages={18878--18890},
  year={2021}
}

@inproceedings{faddoul2010boosting,
  title={Boosting multi-task weak learners with applications to textual and social data},
  author={Faddoul, Jean Baptiste and Chidlovskii, Boris and Torre, Fabien and Gilleron, R{\'e}mi},
  booktitle={2010 Ninth International Conference on Machine Learning and Applications},
  pages={367--372},
  year={2010},
  organization={IEEE}
}

@article{chapelle2011boosted,
  title={Boosted multi-task learning},
  author={Chapelle, Olivier and Shivaswamy, Pannagadatta and Vadrevu, Srinivas and Weinberger, Kilian and Zhang, Ya and Tseng, Belle},
  journal={Machine Learning},
  volume={85},
  number={1},
  pages={149--173},
  year={2011},
  publisher={Springer}
}

@article{ying2022mt,
  title={Mt-gbm: A multi-task gradient boosting machine with shared decision trees},
  author={Ying, ZhenZhe and Xu, Zhuoer and Li, Zhifeng and Wang, Weiqiang and Meng, Changhua},
  journal={arXiv preprint arXiv:2201.06239},
  year={2022}
}

@inproceedings{emami2023multi,
  title={Multi-task gradient boosting},
  author={Emami, Seyedsaman and Ruiz Pastor, Carlos and Mart{\'\i}nez-Mu{\~n}oz, Gonzalo},
  booktitle={International Conference on Hybrid Artificial Intelligence Systems},
  pages={97--107},
  year={2023},
  organization={Springer}
}

@article{emami2026robust,
  title={Robust multi-task boosting using clustering and local ensembling},
  author={Emami, Seyedsaman and Hern{\'a}ndez-Lobato, Daniel and Mart{\'\i}nez-Mu{\~n}oz, Gonzalo},
  journal={arXiv preprint arXiv:2602.14231},
  year={2026}
}

@inproceedings{zhang2012multi,
  title={Multi-task boosting by exploiting task relationships},
  author={Zhang, Yu and Yeung, Dit-Yan},
  booktitle={Joint European Conference on Machine Learning and Knowledge Discovery in Databases},
  pages={697--710},
  year={2012},
  organization={Springer}
}

@article{nutini2022let,
  title={Let's make block coordinate descent converge faster: faster greedy rules, message-passing, active-set complexity, and superlinear convergence},
  author={Nutini, Julie and Laradji, Issam and Schmidt, Mark},
  journal={Journal of Machine Learning Research},
  volume={23},
  number={131},
  pages={1--74},
  year={2022}
}

@article{muller1959note,
  title={A note on a method for generating points uniformly on n-dimensional spheres},
  author={Muller, Mervin E},
  journal={Communications of the ACM},
  volume={2},
  number={4},
  pages={19--20},
  year={1959},
  publisher={ACM New York, NY, USA}
}

@inproceedings{nutini2015coordinate,
  title={Coordinate descent converges faster with the gauss-southwell rule than random selection},
  author={Nutini, Julie and Schmidt, Mark and Laradji, Issam and Friedlander, Michael and Koepke, Hoyt},
  booktitle={International Conference on Machine Learning},
  pages={1632--1641},
  year={2015},
  organization={PMLR}
}

@article{silva2023classifier,
  title={Classifier calibration: a survey on how to assess and improve predicted class probabilities},
  author={Silva Filho, Telmo and Song, Hao and Perello-Nieto, Miquel and Santos-Rodriguez, Raul and Kull, Meelis and Flach, Peter},
  journal={Machine Learning},
  volume={112},
  number={9},
  pages={3211--3260},
  year={2023},
  publisher={Springer}
}

\end{document}